\theoremstyle{plain}
\newtheorem{lemma}{Lemma}
\theoremstyle{remark}
\theoremstyle{definition}
\newcommand{\rms}[1]{\mathrm{#1}}
\newcommand{\bfs}[1]{\mathbf{#1}}
\newcommand{\topleftmatrix}{\bfs{E}}
\newcommand{\offdiagmatrix}{\bfs{F}}
\newcommand{\bottomrightmatrix}{\bfs{G}}
\icmltitlerunning{Magnetic Hamiltonian Monte Carlo}
\begin{document}

	\twocolumn[
	\icmltitle{Magnetic Hamiltonian Monte Carlo}




	\begin{icmlauthorlist}
		\icmlauthor{Nilesh Tripuraneni}{berk}
		\icmlauthor{Mark Rowland}{cam}
		\icmlauthor{Zoubin Ghahramani}{cam,uber}
		\icmlauthor{Richard Turner}{cam}
	\end{icmlauthorlist}

	\icmlaffiliation{cam}{University of Cambridge, UK}
	\icmlaffiliation{berk}{UC Berkeley, USA}
	\icmlaffiliation{uber}{Uber AI Labs, USA}

	\icmlcorrespondingauthor{Nilesh Tripuraneni}{nileshtrip@gmail.com}

	\icmlkeywords{boring formatting information, machine learning, ICML}

	\vskip 0.3in
	]



	\printAffiliationsAndNotice{} 

	\begin{abstract}
		Hamiltonian Monte Carlo (HMC) exploits Hamiltonian dynamics to construct efficient proposals for Markov chain Monte Carlo (MCMC). In this paper, we present a generalization of HMC which exploits \textit{non-canonical} Hamiltonian dynamics.
		We refer to this algorithm as magnetic HMC, since in 3 dimensions a subset of the dynamics map onto the mechanics of a charged particle coupled to a magnetic field.
		We establish a theoretical basis for the use of non-canonical Hamiltonian dynamics in MCMC, and construct a symplectic, leapfrog-like integrator allowing for the implementation of magnetic HMC.
		Finally, we exhibit several examples where these non-canonical dynamics can lead to improved mixing of magnetic HMC relative to ordinary HMC.
	\end{abstract}

	\section{Introduction}
	Probabilistic inference in complex models generally requires the evaluation of intractable, high-dimensional integrals. One powerful and generic approach to inference is to use Markov chain Monte Carlo (MCMC) methods to generate asymptotically exact (but correlated) samples from a posterior distribution for inference and learning.
	Hamiltonian Monte Carlo (HMC) \cite{DuaneEtAl1987, Neal2011} is a state-of-the-art MCMC method which uses gradient information from an absolutely continuous target density to encourage efficient sampling and exploration. Crucially, HMC utilizes proposals inspired by Hamiltonian dynamics (corresponding to the classical mechanics of a point particle)
	which can traverse long distances in parameter space. HMC, and variants like NUTS (which eliminates the need to hand-tune the algorithm's hyperparameters), have been successfully applied to a large class of probabilistic inference problems where they are often the gold standard for (asymptotically) exact inference \cite{Neal1996, Hoffman2014, Carpenter2016}.
	\begin{figure}[!ht]
		\centering
		\begin{minipage}[t]{.23\textwidth}
			\centering
			\includegraphics[width=1\linewidth]{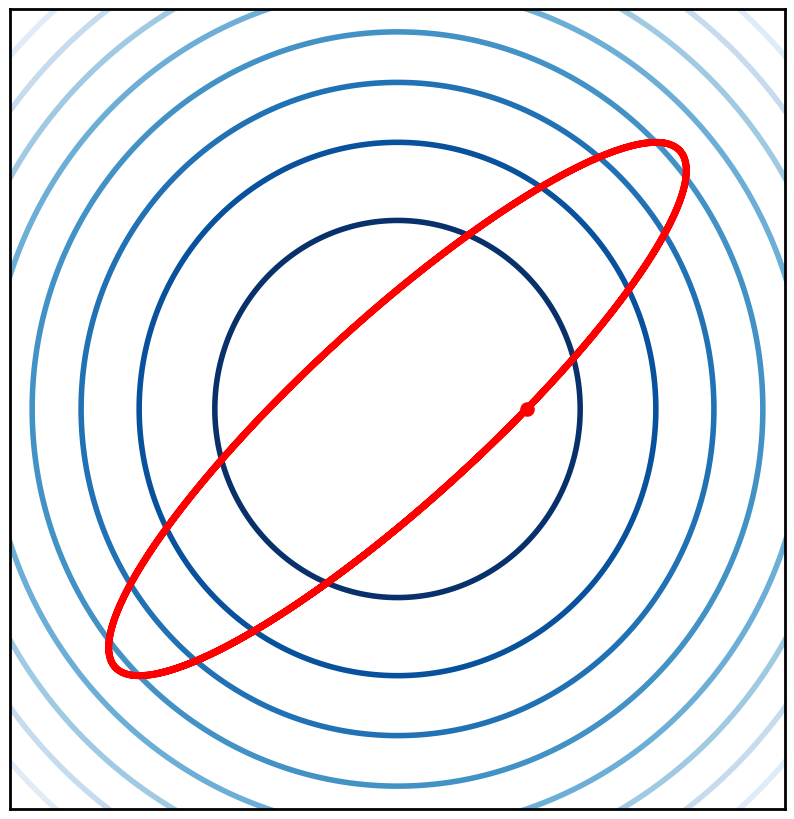}
		\end{minipage}%
		~
		\begin{minipage}[t]{.23\textwidth}
			\centering
			\includegraphics[width=1\linewidth]{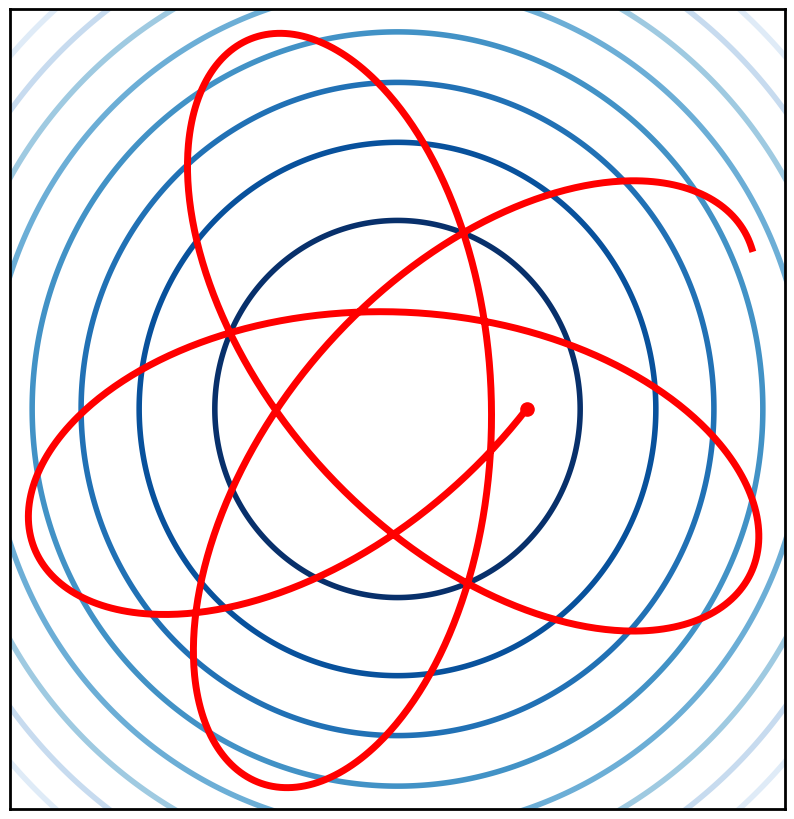}
		\end{minipage}%
		\caption{Example sample paths for standard HMC (left) and MHMC (right) for an isotropic Gaussian target distribution.}
		\label{fig:exampledynamics}
		\vspace{-.5cm}
	\end{figure}

	%

	In this paper, we first review important properties of Hamiltonian dynamics, namely energy-preservation, symplecticity, and time-reversibility, and derive a more general class of dynamics with these properties which we refer to as \textit{non-canonical} Hamiltonian dynamics.
	We then discuss the relationship of non-canonical Hamiltonian dynamics to well-known variants of HMC and propose a novel extension of HMC. We refer to this method as magnetic HMC (see Algorithm \ref{algo:leapfrogNCHMC}) since it corresponds to a particular subset of the non-canonical dynamics
	that in 3 dimensions map onto to the mechanics of a charged particle coupled to a magnetic field -- see Figure \ref{fig:exampledynamics} for an example of these dynamics.
	Furthermore, we construct an explicit, symplectic, leapfrog-like integrator for magnetic HMC which allows for an efficient numerical integration scheme comparable to that of ordinary HMC.
	Finally, we evaluate the performance of magnetic HMC on several sampling problems where we show how its non-canonical dynamics can lead to improved mixing. The proofs of all results in this paper are presented in the corresponding sections of the Appendix.

	\section{Markov chain Monte Carlo}\label{sec:hmc}
	Given an unnormalized target density $\rho(\theta)$ defined on $\mathbb{R}^d$, an MCMC algorithm constructs an ergodic Markov chain $(\Theta_n)_{n \in \mathbb{N}}$ such that the distribution of $\Theta_n$ converges to $\rho$ (e.g.\ in total variation) \cite{Robert2004}.
	Often, the transition kernel of such a Markov chain is specified by the Metropolis-Hastings (MH) algorithm which (i) given the current state $\Theta_n=\theta$, proposes a new state $\widetilde{\theta}$ by sampling from a proposal distribution $Q(\cdot | \theta)$, and (ii) sets $\Theta_{n+1}=\widetilde{\theta}$ with probability $\min\left(1, \frac{\rho(\widetilde{\theta})Q(\theta|\widetilde{\theta})}{\rho(\theta)Q(\widetilde{\theta}|\theta)}\right)$ and $\Theta_{n+1}=\theta$ otherwise. The role of the acceptance step is to enforce reversibility (or detailed balance) of the Markov chain with respect to $\rho$ -- which implies $\rho$ is a stationary distribution of the transition kernel.

	Heuristically, a good MH algorithm should have low inter-sample correlation while maintaining a high acceptance ratio. Hamiltonian Monte Carlo provides an elegant mechanism to do this by simulating a particle moving along the contour lines of a dynamical system, constructed from the target density, to use as a MCMC proposal.
	\subsection{Hamiltonian Monte Carlo}\label{subsec:hmc}
	In Hamiltonian Monte Carlo, the target distribution is augmented with ``momentum'' variables $\bfs{p}$ which are independent of the $\theta$ variables but of equal dimension. For the remainder of the paper, we take the distribution over the momentum variables to be Gaussian, as is common in the literature (indeed, there is evidence that in many cases, the choice of a Gaussian distribution may be optimal \citep{conceptintrohmc}). The joint target distribution is therefore:
	\begin{align}
	\rho(\bfs{\theta}, \bfs{p}) \propto e^{-U(\bfs{\theta})-\bfs{p}^\top\bfs{p}/2} \equiv e^{-H(\theta, \bfs{p})}. \label{eq:joint}
	\end{align}
	Crucially, this augmentation allows Hamiltonian dynamics to be used as a proposal for an MCMC algorithm over the space $(\theta, \bfs{p})$, where we interpret $\theta$ (resp., $\bfs{p}$) as position (resp., momentum) coordinates of a physical particle with total energy $H(\theta, \bfs{p})$, given by the sum of its potential energy $U(\theta)$ and kinetic energy $\bfs{p}^\top \bfs{p}/2$. We briefly review the Markov chain construction below; see \cite{Neal2011} or \cite{DuaneEtAl1987} for a more detailed description. Given the Markov chain state $(\theta_n, \bfs{p}_n)$ at time $n$, the new state for time $n+1$ is obtained by first resampling momentum $\bfs{p}_n \sim \mathcal N(\bfs{0}, \bfs{I})$, and then proposing a new state according to the following steps: (i) Simulate the deterministic
	Hamiltonian flow defined by the differential equation

	\begin{align}
	\rms{\frac{d}{dt}} \begin{bmatrix} \bfs{\theta}(t) \\  \bfs{p}(t) \end{bmatrix} &  = \underbrace{\begin{bmatrix} \bfs{0} & \bfs{I} \\
		\bfs{-I} & \bfs{0} \end{bmatrix}}_{\bfs{A}} \begin{bmatrix} \nabla_{\bfs{\theta}} H(\bfs{p}(t), \bfs{\theta}(t)) \\  \nabla_{\bfs{p}}H(\bfs{p}(t), \bfs{\theta}(t))\end{bmatrix} \nonumber \\
	& \equiv \begin{bmatrix} \bfs{p}(t) \\  -\nabla_{\bfs{\theta}} U(\bfs{\theta}(t)) \end{bmatrix}.
	\label{eq:hd}
	\end{align}

	for time $\tau$, with initial condition $(\theta_n, \bfs{p}_n)$, to obtain $(\theta^\prime_n, \bfs{p^\prime}_n) = \bfs{\Phi}_{\tau, H}(\theta_n, \bfs{p}_n)$\footnote{Throughout this paper, we use $\bfs{\Phi}_{\tau, H}$ to denote the map that takes a given position-momentum pair as initial conditions for the Hamiltonian flow associated with $H$ for time $\tau$. In addition, $\boldsymbol{\widetilde{\Phi}}_{\tau, H}$ denotes the composition of $\boldsymbol{\Phi}_{\tau, H}$ with the momentum flip map $\boldsymbol{\Phi}_\bfs{p}$.}; (ii) Flip the resulting momentum component with the map $\bfs{\Phi}_{\bfs{p}}(\theta, \bfs{p}) = (\theta, -\bfs{p})$ to obtain $(\widetilde{\theta}_{n+1}, \widetilde{\bfs{p}}_{n+1}) = \bfs{\Phi}_{\bfs{p}}(\theta'_n, \bfs{p}'_n) =\bfs{\widetilde{\Phi}}_{\tau, H}(\theta_n, \bfs{p}_n)$; (iii) Apply a MH-type accept/reject step to enforce detailed balance with respect to the target distribution; (iv) Flip the momentum again with $\bfs{\Phi}_{\bfs{p}}$ so it points in the original direction.

	Note that because the map $\bfs{\Phi}_{\tau, H}$ is time-reversible (in the sense that if the path $(\bfs{\theta}(t), \bfs{p}(t))$ is a solution to \eqref{eq:hd} then the path with negated momentum traversed in reverse $(\bfs{\theta}(-t), -\bfs{p}(-t))$ is also a solution), the map $\bfs{\widetilde{\Phi}}_{\tau, H}$ is self-inverse. From this, the acceptance ratio in step (iii) enforcing detailed balance can be shown (see e.g. \cite{Green1995}) to have the form:
	\scriptsize
	\begin{align}
	\min\left(1, \frac{\exp(-H(\widetilde{\theta}_{n+1}, \bfs{\widetilde{p}}_{n+1}))}{\exp(-H(\theta_n, \bfs{p}_n))} \left|\det \nabla_{\theta, \bfs{p}}\bfs{\widetilde{\Phi}}_{\tau, H}(\theta_{n}, \bfs{p}_{n}) \right| \right). \label{eq:accratio}
	\end{align}
	\normalsize
	Note that the Hamiltonian flow \& momentum flip operator $\bfs{\widetilde{\Phi}}_{\tau, H}$ is volume-preserving\footnote{In fact the Hamiltonian flow satisfies the stronger condition of symplecticity with respect to the $\bfs{A}$ matrix ($[\nabla_{\theta, \bfs{p}} \bfs{\Phi}_{\tau, H}(\theta, \bfs{p})]^\top \bfs{A}^{-1} [\nabla_{\theta, \bfs{p}} \bfs{\Phi}_{\tau, H}(\theta, \bfs{p})] = \bfs{A}^{-1}$) which immediately implies it is volume-preserving by taking determinants of this relation.}, which immediately yields that the Jacobian term in the acceptance ratio \eqref{eq:accratio} is simply 1.
	The acceptance probability therefore reduces to $\min(1,\exp( H(\theta_n, \bfs{p}_n) - H(\widetilde{\theta}_{n+1}, \bfs{\widetilde{p}}_{n+1}) ))$. Furthermore, since the Hamiltonian flow defined in \eqref{eq:hd} is energy-preserving (i.e.\ $H(\widetilde{\theta}_{n+1}, \bfs{\widetilde{p}}_{n+1}) = H(\theta_n, \bfs{p}_n)$)  -- the acceptance ratio is identically 1. Moreover, the momentum resampling in (i) and momentum flip in (iv) both leave the joint distribution invariant.

	While the momentum resampling ensures the Markov chain explores the joint $(\theta, \bfs{p})$ space, the proposals inspired by Hamiltonian dynamics can traverse long distances in parameter space $\bfs{\theta}$, reducing the random-walk behavior of MH that often results in highly correlated samples \cite{Neal2011}.
	\subsection{Symplectic Numerical Integration}\label{subsec:hmc-leapfrog}
	Unfortunately, it is rarely possible to integrate the flow defined in \eqref{eq:hd} analytically; instead an efficient numerical integration scheme must be used to generate a proposal for the MH-type accept/reject test. Typically, the leapfrog  (St\"ormer-Verlet) integrator is used since it is an explicit method that is both symplectic and time-reversible \cite{Neal2011}. One elegant way to motivate this integrator is by decomposing the Hamiltonian into a symmetric splitting:
	\begin{align}
	H(\bfs{\theta}, \bfs{p}) = \underbrace{U(\bfs{\theta})/2}_{H_1(\rms{\theta})} + \underbrace{\bfs{p}^\top\bfs{p}/2}_{H_2(\bfs{p})} +  \underbrace{U(\bfs{\theta})/2}_{H_1(\rms{\theta})} \label{eq:split}
	\end{align}
	and then defining $\bfs{\Phi}_{\epsilon, H_1(\rms{\theta})}$ and $\bfs{\Phi}_{\epsilon, H_2(\bfs{p})}$ to be the exactly-integrated flows for the sub-Hamiltonians $H_1(\theta)$ and $H_2(\bfs{p})$, respectively. These updates (which are equivalent to Euler translations) can be written:
	\begin{align}
	& \bfs{\Phi}_{\epsilon, H_1(\rms{\theta})}  \begin{bmatrix} \bfs{\theta} \\ \bfs{p} \end{bmatrix} = \begin{bmatrix} \bfs{\theta} \\ \bfs{p} - \frac{\epsilon}{2} \nabla_{\rms{\theta}}U(\rms{\theta}) \end{bmatrix} \nonumber \\
	& \bfs{\Phi}_{\epsilon, H_2(\bfs{p})}  \begin{bmatrix} \bfs{\theta} \\ \bfs{p} \end{bmatrix} = \begin{bmatrix} \bfs{\theta} + \epsilon \bfs{p} \\ \bfs{p} \end{bmatrix}
	\end{align}
	since the Hamilton equations \eqref{eq:hd} for the sub-Hamiltonians $H_1(\theta)$ and $H_2(\bfs{p})$ are linear, and hence analytically integrable. One leapfrog step is then defined as:
	\begin{align}
	\bfs{\Phi}^{\mathrm{frog}}_{\epsilon, H(\bfs{\theta}, \bfs{p})} = \bfs{\Phi}_{\epsilon, H_1(\rms{\theta})}  \circ \bfs{\Phi}_{\epsilon, H_2(\bfs{p})}  \circ \bfs{\Phi}_{\epsilon, H_1(\rms{\theta})}
	\end{align}
	with the overall proposal given by $L$ leapfrog steps, followed by the momentum flip operator $\bfs{\Phi}_{\bfs{p}}$ as before:
	\[
	\bfs{\widetilde{\Phi}}_{L, \epsilon, H}^{\mathrm{frog}} = \bfs{\Phi}_{\bfs{p}} \circ \left( \bfs{\Phi}_{\epsilon, H(\theta, \bfs{p})}^{\mathrm{frog}} \right)^L .
	\]
	As each of the flows $\bfs{\Phi}_{\epsilon, H_1(\theta)}$, $\bfs{\Phi}_{\epsilon, H_2(\bfs{p})}$ exactly integrates a sub-Hamiltonian, they inherit the symplecticity, volume-preservation, and time-reversibility of the exact dynamics. Moreover, since the composition of symplectic flows is also symplectic and the splitting scheme is symmetric (implying the composition of time-reversible flows is also time-reversible), the Jacobian term in the acceptance probability \eqref{eq:accratio} is exactly 1 as in the case of perfect simulation.

	The leapfrog scheme will not exactly preserve the Hamiltonian $H$, so the remaining acceptance ratio $\exp( H(\theta_n, \bfs{p}_n) - H(\widetilde{\theta}_{n+1}, \widetilde{\bfs{p}}_{n+1}))$ must be calculated. However, the leapfrog integrator has error $\mathcal{O}(\epsilon^3)$ in one leapfrog step \cite{Hairer2006}. This error scaling will lead to good energy conservation properties (and thus high acceptance rates in the MH step), even when simulating over long trajectories.
	\section{Non-Canonical Hamiltonian Monte Carlo}\label{subsec: ncd-properties}
	In Section \ref{sec:hmc}, we noted the role time-reversibility, volume-preservation, and energy conservation of canonical Hamiltonian dynamics play in making them useful candidates for MCMC. In this section, we develop the properties of a general class of flows we refer to as \textit{non-canonical} Hamiltonian systems that parallel these properties,
	we use to construct our method magnetic HMC (see Algorithm \ref{algo:leapfrogNCHMC}):
	\begin{lemma}\label{lem:nchd-1}
		The map $\bfs{\Phi}_{\tau, H}^{\bfs{A}}(\theta, \bfs{p})$ defined by integrating the non-canonical Hamiltonian system
		\begin{align}
		\rms{\frac{d}{dt}} \begin{bmatrix} \bfs{\theta}(t) \\  \bfs{p}(t) \end{bmatrix}  = \bfs{A} \nabla_{\theta, \bfs{p}} H(\theta(t), \bfs{p}(t)) \label{eq:ncd}
		\end{align}
		with initial conditions $(\theta, \bfs{p})$ for time $\tau$, where $\bfs{A} \in \mathcal{M}_{2n\times2n}$ is \textit{any} invertible, antisymmetric matrix induces a flow on the coordinates $(\bfs{\theta}, \bfs{p})$ that is still \textit{energy-conserving} $(\partial_{\tau} H(\bfs{\Phi}_{\tau, H}^{\bfs{A}}(\theta, \bfs{p})) = 0)$ and \textit{symplectic} with respect to $\bfs{A}$ $([\nabla_{\theta, \bfs{p}} \bfs{\Phi}_{\tau, H}(\theta, \bfs{p})]^\top \bfs{A}^{-1} [\nabla_{\theta, \bfs{p}} \bfs{\Phi}_{\tau, H}(\theta, \bfs{p})] = \bfs{A}^{-1})$ which also implies volume-preservation of the flow.
	\end{lemma}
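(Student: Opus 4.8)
The plan is to verify the three claimed properties one after another, each of which reduces to a short differential identity along a trajectory together with the single algebraic fact that for an antisymmetric matrix $\bfs{A}$ one has $x^\top \bfs{A} x = 0$ for every vector $x$, equivalently $\bfs{A}^\top \bfs{A}^{-1} = -\bfs{I}$.

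First I would dispatch energy conservation. Writing $z(t) = (\bfs{\theta}(t), \bfs{p}(t))$ so that $\dot z = \bfs{A}\nabla H(z)$, the chain rule gives $\tfrac{d}{dt} H(z(t)) = \nabla H(z(t))^\top \dot z(t) = \nabla H(z)^\top \bfs{A}\, \nabla H(z)$; since $\bfs{A}^\top = -\bfs{A}$ this scalar equals its own negative and hence vanishes. Thus $H$ is constant along the flow, which is precisely $\partial_\tau H(\bfs{\Phi}_{\tau, H}^{\bfs{A}}(\theta, \bfs{p})) = 0$.

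Next, for symplecticity, I would let $\bfs{J}(t) = \nabla_{\theta, \bfs{p}} \bfs{\Phi}_{t, H}^{\bfs{A}}(\theta, \bfs{p})$ be the Jacobian of the time-$t$ flow with respect to its initial condition, so $\bfs{J}(0) = \bfs{I}$, and differentiate the flow equation in the initial condition to obtain the first-variation equation $\dot{\bfs{J}}(t) = \bfs{A}\, \bfs{S}(t)\, \bfs{J}(t)$, where $\bfs{S}(t) = \nabla^2 H(z(t))$ is the (symmetric) Hessian of $H$ along the trajectory. Then $\tfrac{d}{dt}[\bfs{J}^\top \bfs{A}^{-1} \bfs{J}] = \dot{\bfs{J}}^\top \bfs{A}^{-1} \bfs{J} + \bfs{J}^\top \bfs{A}^{-1} \dot{\bfs{J}} = \bfs{J}^\top \bfs{S}^\top \bfs{A}^\top \bfs{A}^{-1} \bfs{J} + \bfs{J}^\top \bfs{A}^{-1} \bfs{A} \bfs{S} \bfs{J}$, and using $\bfs{A}^\top \bfs{A}^{-1} = -\bfs{I}$ and $\bfs{S}^\top = \bfs{S}$ the two terms become $-\bfs{J}^\top \bfs{S} \bfs{J}$ and $+\bfs{J}^\top \bfs{S} \bfs{J}$, which cancel. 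Hence $\bfs{J}(t)^\top \bfs{A}^{-1} \bfs{J}(t)$ is constant in $t$, equal to its value $\bfs{A}^{-1}$ at $t = 0$, which is symplecticity with respect to $\bfs{A}$. Taking determinants, $(\det \bfs{J}(t))^2 \det(\bfs{A}^{-1}) = \det(\bfs{A}^{-1})$, so $|\det \bfs{J}(t)| = 1$ (indeed $\det\bfs{J}(t) = 1$ by continuity from $\bfs{J}(0)=\bfs{I}$), i.e. volume preservation.

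There is no real obstacle: the only step requiring care is setting up the variational equation and observing that its coefficient factors as $\bfs{A}$ (antisymmetric) times the Hessian (symmetric) — exactly the structure that makes canonical Hamiltonian flows symplectic, now with a general $\bfs{A}$ in place of the standard symplectic matrix. I would also state the standing regularity hypotheses ($H \in C^2$, so that $\bfs{\Phi}_{\tau, H}^{\bfs{A}}$ is $C^1$ in the initial condition and $\bfs{S}(t)$ is well defined and symmetric); global-in-time existence of the flow is a separate matter that this argument does not address.
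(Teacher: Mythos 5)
Your proof is correct and follows essentially the same route as the paper's: the chain rule plus antisymmetry of $\bfs{A}$ for energy conservation, and the first-variation equation $\dot{\bfs{J}} = \bfs{A}\,\nabla^2 H\,\bfs{J}$ together with the cancellation $\bfs{A}^\top\bfs{A}^{-1} = -\bfs{I}$ to show $\bfs{J}^\top\bfs{A}^{-1}\bfs{J}$ is constant. Your explicit determinant argument for volume preservation and the regularity remarks are welcome additions but do not change the substance.
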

	Within the formal construction of classical mechanics, it is known that any Hamiltonian flow defined on the cotangent bundle $(\theta, \bfs{p})$ of a configuration manifold, which is equipped with an arbitrary symplectic 2-form, will preserve its symplectic structure and admit the corresponding Hamiltonian as a first integral invariant \cite{Arnold1989}. The statement of Lemma \ref{lem:nchd-1} is simply a restatement of this fact grounded in a coordinate system. Similar arbitrary, antisymmetric terms have also appeared in the study of MCMC algorithms based on diffusion processes; such samplers often do not enforce detailed balance with respect to the target density and are often implemented as discretizations of stochastic differential equations \cite{Rey-Bellet2015, Ma2015}, in contrast to the approach taken here.

	Our second observation is that the dynamics in \eqref{eq:ncd} are \textit{not} time-reversible in the traditional sense.
	Instead, if we consider the parametrization of $\bfs{A}$ as:
	\begin{align}
	\bfs{A} = \begin{bmatrix} \topleftmatrix & \offdiagmatrix \\
	-\offdiagmatrix^\top & \bottomrightmatrix \end{bmatrix}
	\end{align}
	where $\topleftmatrix$, $\bottomrightmatrix$ are antisymmetric and $\offdiagmatrix$ is taken to be general such that $\bfs{A}$ is invertible, then the \textit{non-canonical} dynamics have a (pseudo) time-reversibility symmetry:
	\begin{lemma}\label{lem:nchd-2}
		If  $(\theta(t), \bfs{p}(t))$ is a solution to the non-canonical dynamics:
		\begin{align}
		\rms{\frac{d}{dt}} \begin{bmatrix} \bfs{\theta}(t) \\  \bfs{p}(t) \end{bmatrix}  = \underbrace{\begin{bmatrix} \topleftmatrix & \offdiagmatrix \\
			-\offdiagmatrix^\top & \bottomrightmatrix \end{bmatrix}}_{\bfs{A}} \begin{bmatrix} \nabla_{\bfs{\theta}} H(\theta(t), \bfs{p}(t)) \\  \nabla_{\bfs{p}} H(\theta(t), \bfs{p}(t))  \end{bmatrix} \label{eq:timereverse}
		\end{align}
		then $(\widetilde{\theta}(t), \widetilde{\bfs{p}}(t)) = (\theta(-t), -\bfs{p}(-t))$ is a solution to the modified non-canonical dynamics:
		\begin{align}
		\rms{\frac{d}{dt}} \begin{bmatrix} \widetilde{\bfs{\theta}}(t) \\  \widetilde{\bfs{p}}(t) \end{bmatrix}  = \underbrace{\begin{bmatrix} -\topleftmatrix & \offdiagmatrix \\
			-\offdiagmatrix^\top & -\bottomrightmatrix \end{bmatrix}}_{\bfs{\widetilde{A}}} \begin{bmatrix} \nabla_{\widetilde{\bfs{\theta}}} H(\widetilde{\theta}(t), \bfs{p}(t)) \\  \nabla_{\widetilde{\bfs{p}}} H(\widetilde{\theta}(t), \widetilde{\bfs{p}}(t)) \end{bmatrix} \label{timereverse2}
		\end{align}
		if $H(\bfs{\theta}, \bfs{p}) = H(\bfs{\theta}, -\bfs{p})$. In particular if $\topleftmatrix=\bottomrightmatrix=0$ then $\bfs{A}=\bfs{\widetilde{A}}$, which reduces to the traditional time-reversal symmetry of canonical Hamiltonian dynamics.
	\end{lemma}

	Lemma \ref{lem:nchd-1} suggests a generalization of HMC that can utilize an arbitrary invertible antisymmetric $\bfs{A}$ matrix in its dynamics; however Lemma \ref{lem:nchd-2} indicates the non-canonical dynamics lack a traditional time-reversibility symmetry which poses a potential difficulty to satisfying detailed balance. In particular, we cannot compose $\bfs{\Phi}_{\bfs{p}}$
	with an exact/approximate simulation of $\bfs{\Phi}_{\tau, H}^{\bfs{A}}$ to make $\bfs{\widetilde{\Phi}}_{\tau, H}^{\bfs{A}} = \bfs{\Phi}_{\bfs{p}} \circ \bfs{\Phi}_{\tau, H}^{\bfs{A}}$ self-inverse.

	Our solution to obtaining a time-reversible proposal is simply to flip the elements of the $\topleftmatrix$ and $\bottomrightmatrix$ matrices just as ordinary HMC flips the auxiliary variable $\bfs{p}$ i) at the end of Hamiltonian flow in the proposal and ii) once again after the MH acceptance step to return $\bfs{p}$ to its original direction. In this vein, we view the parameters $\bfs{\topleftmatrix}$ and $\bfs{\bottomrightmatrix}$ as auxiliary variables in the state space, and simultaneously flip $\bfs{p}$, $\topleftmatrix$, and $\bottomrightmatrix$ after having simulated the dynamics, rendering the proposal time-reversible according to Lemma \ref{lem:nchd-2} --
	see Section 2 in the Appendix for full details of this construction.
	This ensures that detailed balance is satisfied for this entire proposal. To avoid ``random walk'' behaviour in the resulting Markov chain,  we can apply a sign flip to $\topleftmatrix$ and $\bottomrightmatrix$, in addition to $\bfs{p}$, to return them to their original directions after the MH acceptance step.

	The validity of this construction relies on equipping $\topleftmatrix$ and $\bottomrightmatrix$ with symmetric auxiliary distributions. For the remainder of this paper,
	we further restrict to binary symmetric auxiliary distributions supported on a given antisymmetric matrix $\bfs{V}_0$ and its sign flip $-\bfs{V}_0$ -- see Appendix 1.1 
	for full details. This restriction is not necessary, but gives rise to a simple and interpretable class of algorithms, which is in spirit closest to using fixed parameters $\topleftmatrix$ and $\bottomrightmatrix$, whilst ensuring the proposal satisfies detailed balance. This construction is also reminiscent of lifting constructions prevalent in the discrete Markov chain literature \cite{Lifting}; heuristically, the signed variables $\topleftmatrix$ and $\bottomrightmatrix$ favour proposals in opposing directions.

	\subsection{Symplectic Numerical Integration for Non-Canonical Dynamics} \label{subsec: ncd-leapfrog}
	As with standard HMC, exactly simulating the flow $\bfs{\Phi}_{\tau, H}^{\bfs{A}}$ is rarely tractable, and a numerical integrator is required to approximate the flow. It is not absolutely necessary to use an explicit, symplectic integration scheme; indeed implicit integrators are used in Riemannian HMC to maintain symplecticity of the proposal which comes at a greater complexity and computational cost \cite{Girolami2009}. However explicit, symplectic integrators are simple, have good energy-conservation properties, and are volume-preserving/time-reversible \cite{Hairer2006}, so for the present discussion we restrict our attention to investigating leapfrog-like schemes.

	We begin, as in Section \ref{subsec:hmc-leapfrog}, by considering the symmetric splitting \eqref{eq:split}, yielding the sub-Hamiltonians $H_1(\theta) = U(\theta)/2$, $H_2(\bfs{p}) = \bfs{p}^\top \bfs{p}/2$. The corresponding non-canonical dynamics for the sub-Hamiltonians $H_1(\theta)$ and $H_2(\bfs{p})$ are:
	\vspace{-.25cm}
	\begin{align}
	& \rms{\frac{d}{dt}} \begin{bmatrix} \bfs{\theta} \\  \bfs{p} \end{bmatrix}  = \underbrace{\begin{bmatrix} \topleftmatrix & \offdiagmatrix \\
		-\offdiagmatrix^\top & \bottomrightmatrix \end{bmatrix}}_{\bfs{A}} \begin{bmatrix} \nabla_{\bfs{\theta}}U(\bfs{\theta})/2 \\  \bfs{0} \end{bmatrix} = \begin{bmatrix} \topleftmatrix \nabla_{\bfs{\theta}}U(\bfs{\theta})/2  \\  -\offdiagmatrix^\top \nabla_{\bfs{\theta}}U(\bfs{\theta})/2 \end{bmatrix} \nonumber
	\end{align}
	and:
	\begin{align}
	& \rms{\frac{d}{dt}} \begin{bmatrix} \bfs{\theta} \\  \bfs{p} \end{bmatrix}  = \underbrace{\begin{bmatrix} \topleftmatrix & \offdiagmatrix \\
		-\offdiagmatrix^\top & \bottomrightmatrix \end{bmatrix}}_{\bfs{A}} \begin{bmatrix} \bfs{0} \\  \bfs{p} \end{bmatrix} = \begin{bmatrix} \offdiagmatrix \bfs{p}  \\  \bottomrightmatrix \bfs{p} \end{bmatrix}. \nonumber
	\end{align}
	We denote the corresponding flows by $\bfs{\Phi}_{\epsilon, H_1(\theta)}^\bfs{A}$ and $\bfs{\Phi}_{\epsilon, H_2(\bfs{p})}^\bfs{A}$ respectively.
	The flow $\bfs{\Phi}_{\epsilon, H_1(\theta)}^\bfs{A}$ is generally not explicitly tractable unless we take $\topleftmatrix= \bfs{0}$ -- in which case it is solved by an Euler translation as before. Crucially, the flow in $\bfs{\Phi}_{\epsilon, H_2(\bfs{p})}^\bfs{A}$ is a \textit{linear} differential equation and hence analytically integrable.
	If $\bottomrightmatrix$ is invertible (and $\offdiagmatrix = \bfs{I}$)  then:
	\begin{align}
	\bfs{\Phi}_{\epsilon, H_2(\bfs{p})}  \begin{bmatrix} \bfs{\theta} \\ \bfs{p} \end{bmatrix} = \begin{bmatrix} \bfs{\theta} + \bottomrightmatrix^{-1}(\exp(\bottomrightmatrix\epsilon) - \bfs{I})\bfs{p} \\ \exp(\bottomrightmatrix\epsilon)\bfs{p} \end{bmatrix}.
	\label{eq:matrixexp}
	\end{align}
	See the Appendix for a detailed derivation which also handles the general case where $\bottomrightmatrix$ is not invertible. Thus when $\topleftmatrix=\bfs{0}$, the flows $\bfs{\Phi}^\bfs{A}_{\epsilon, H_1(\theta)}$ and $\bfs{\Phi}^\bfs{A}_{\epsilon, H_2(\bfs{p})}$ are analytically tractable and will inherit the generalized symplecticity and (pseudo) time-reversibility of the exact dynamics in \eqref{eq:ncd}. Therefore if we use the symmetric splitting \eqref{eq:split} to construct a leapfrog-like step:
	\begin{align}\label{eq:nchmc_leapfrog1}
	\bfs{\Phi}_{\epsilon, H(\theta, \bfs{p})}^\bfs{\mathrm{frog}, A} = \bfs{\Phi}^\bfs{A}_{\epsilon, H_1(\theta)} \circ \bfs{\Phi}^\bfs{A}_{\epsilon, H_2(\bfs{p})} \circ \bfs{\Phi}^\bfs{A}_{\epsilon, H_1(\theta)}
	\end{align}
	we can construct a total proposal that consists of several leapfrog steps, followed by a flip of the momentum and $\bottomrightmatrix$, $\bfs{\Phi}_{\bfs{p}} \circ \bfs{\Phi}_{\bfs{\bottomrightmatrix}}$, which will be a volume-preserving, self-inverse map:
	\scriptsize
	\begin{align}\label{eq:nchmc_leapfrog2}
	\bfs{\widetilde{\Phi}}_{\epsilon, H(\theta, \bfs{p})}^\bfs{\mathrm{frog}, A} = \bfs{\Phi}_{\bfs{p}} \circ \bfs{\Phi}_{\bottomrightmatrix} \circ (\bfs{\Phi}^\bfs{A}_{\epsilon, H_1(\theta)} \circ \bfs{\Phi}^\bfs{A}_{\epsilon, H_2(\bfs{p})} \circ \bfs{\Phi}^\bfs{A}_{\epsilon, H_1(\theta)})^{L}.
	\end{align}
	\normalsize
	Henceforth we will always take $\bfs{E}=\bfs{0}$ when we use $\bfs{\Phi}_{\epsilon, H(\theta, \bfs{p})}^\bfs{\mathrm{frog}, A}$ to generate leapfrog proposals, which interestingly corresponds to a magnetic dynamics as discussed in Lemma \ref{lem:magnetic}. A full description of the magnetic HMC algorithm using this numerical integrator is described in Section \ref{sec: algo}.
	\section{Special Cases}
	Here, we describe several tractable subcases of the general formulation of non-canonical Hamiltonian dynamics since these they have interesting physical interpretations.
	\subsection{Mass Preconditioned Dynamics} \label{subsec: mass-hmc}
	One simple variant of HMC is preconditioned HMC where $\bfs{p} \sim \mathcal{N}(0, \bfs{M})$ \cite{Neal2011}, and can be implemented nearly identically to ordinary HMC.
	We note that preconditioning can be recovered within our framework using a simple form for the non-canonical $\bfs{A}$ matrix:
	\begin{lemma}\label{lem:preconditioned}
		i) Preconditioned HMC with momentum variable $\bfs{p} \sim \mathcal{N}(0, \bfs{M})$ in the $(\theta, \bfs{p})$ coordinates is exactly equivalent to simulating non-canonical HMC with $\bfs{p}' = \bfs{M}^{-1/2} \bfs{p} \sim \mathcal{N}(\bfs{0}, \bfs{I})$ and the non-canonical matrix $
		\bfs{A} = \begin{bmatrix} \bfs{0} & \bfs{M}^{1/2}  \\ -(\bfs{M}^{1/2})^\top & \bfs{0} \end{bmatrix}
		$
		, and then transforming back to $(\theta, \bfs{p})$ coordinates using $\bfs{p} = \bfs{M}^{1/2} \bfs{p}'$. Here $\bfs{M}^{1/2}$ is a Cholesky factor for $\bfs{M}$.

		ii) On the other hand if we apply a change of basis (via an invertible matrix $\bfs{F}$) to our coordinates $\theta' = \bfs{F}^{-1} \theta$, simulate HMC in the $(\theta', \bfs{p})$ coordinates, and transform back to the original basis using $\bfs{F}$, this is exactly equivalent to non-canonical HMC with
		$
		\bfs{A} = \begin{bmatrix} \bfs{0} & \bfs{F} \\ -\bfs{F}^\top & \bfs{0}  \end{bmatrix}.
		$
	\end{lemma}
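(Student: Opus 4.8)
The plan is to derive both parts from a single elementary fact about how the non-canonical flow \eqref{eq:ncd} behaves under a global, constant, invertible linear change of coordinates, and then to specialise that change of coordinates in each case. Write $\bfs{J} = \left[\begin{smallmatrix}\bfs{0} & \bfs{I}\\ -\bfs{I} & \bfs{0}\end{smallmatrix}\right]$, so that ordinary and preconditioned HMC are the $\bfs{A}=\bfs{J}$ instances of \eqref{eq:ncd}. First I would record the conjugation identity: if $\bfs{T}$ is a constant invertible matrix and $\bfs{x}(t)$ solves $\tfrac{d}{dt}\bfs{x} = \bfs{A}\,\nabla_{\bfs{x}}H(\bfs{x})$, then $\bfs{z}(t) := \bfs{T}^{-1}\bfs{x}(t)$ solves $\tfrac{d}{dt}\bfs{z} = (\bfs{T}^{-1}\bfs{A}\bfs{T}^{-\top})\,\nabla_{\bfs{z}}H'(\bfs{z})$ with $H'(\bfs{z}) = H(\bfs{T}\bfs{z})$ --- this is just the chain rule together with $\nabla_{\bfs{z}}H'(\bfs{z}) = \bfs{T}^{\top}(\nabla_{\bfs{x}}H)(\bfs{T}\bfs{z})$. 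Note $\bfs{T}^{-1}\bfs{A}\bfs{T}^{-\top}$ is again invertible and antisymmetric, so Lemma~\ref{lem:nchd-1} still applies to the transformed flow. The rest is bookkeeping: pick $\bfs{T}$ so that one side is a standard HMC update and the other has the claimed matrix, then check that the stochastic ingredients of the algorithm (momentum refresh, accept/reject, and the self-inverse momentum flip) transform consistently.

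For (ii), take $\bfs{T} = \left[\begin{smallmatrix}\bfs{F} & \bfs{0}\\ \bfs{0} & \bfs{I}\end{smallmatrix}\right]$, so that $\bfs{z} = \bfs{T}^{-1}\bfs{x}$ is precisely the reparametrisation $\bfs{\theta}' = \bfs{F}^{-1}\bfs{\theta}$ with the momentum untouched. The target $e^{-U(\bfs{\theta})}$ pushes forward (up to the constant $|\det\bfs{F}|$) to $e^{-U(\bfs{F}\bfs{\theta}')}$, so canonical HMC run in the $(\bfs{\theta}',\bfs{p})$ system is the $\bfs{A}=\bfs{J}$ flow of $H'(\bfs{\theta}',\bfs{p}) = U(\bfs{F}\bfs{\theta}') + \bfs{p}^{\top}\bfs{p}/2$; conjugating back by $\bfs{T}$, the identity above gives the flow in the original $(\bfs{\theta},\bfs{p})$ coordinates of $H(\bfs{\theta},\bfs{p}) = U(\bfs{\theta}) + \bfs{p}^{\top}\bfs{p}/2$ with matrix $\bfs{T}\bfs{J}\bfs{T}^{\top} = \left[\begin{smallmatrix}\bfs{0} & \bfs{F}\\ -\bfs{F}^{\top} & \bfs{0}\end{smallmatrix}\right]$, i.e.\ the claimed $\bfs{A}$. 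To upgrade from deterministic flows to full MCMC kernels, I would then observe: (a) the refresh $\bfs{p}\sim\mathcal{N}(\bfs{0},\bfs{I})$ is unchanged by $\bfs{T}$, which acts only on $\bfs{\theta}$; (b) in the acceptance ratio \eqref{eq:accratio} the Jacobian factor is $1$ by Lemma~\ref{lem:nchd-1}, and the energy factor is preserved because $H' = H\circ\bfs{T}^{-1}$ agrees with $H$ along corresponding orbits; (c) since this $\bfs{A}$ has $\topleftmatrix = \bottomrightmatrix = \bfs{0}$, Lemma~\ref{lem:nchd-2} gives $\bfs{\widetilde{A}} = \bfs{A}$, so the flip rendering the proposal self-inverse is the ordinary momentum flip and no auxiliary flips of $\topleftmatrix,\bottomrightmatrix$ are needed. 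A one-line comparison then shows the leapfrog-like step \eqref{eq:nchmc_leapfrog1} with $\topleftmatrix = \bottomrightmatrix = \bfs{0}$, $\offdiagmatrix = \bfs{F}$ (half-kick $\bfs{p}\mapsto\bfs{p} - \tfrac{\epsilon}{2}\bfs{F}^{\top}\nabla U(\bfs{\theta})$, drift $\bfs{\theta}\mapsto\bfs{\theta} + \epsilon\bfs{F}\bfs{p}$, half-kick) is the $\bfs{T}$-conjugate of the standard leapfrog integrator for $U(\bfs{F}\,\cdot)$, so the equivalence also holds for the implemented, discretised chain.

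For (i), I would run the same argument with the roles of position and momentum exchanged, taking $\bfs{T} = \left[\begin{smallmatrix}\bfs{I} & \bfs{0}\\ \bfs{0} & \bfs{M}^{1/2}\end{smallmatrix}\right]$, so that $\bfs{p} = \bfs{M}^{1/2}\bfs{p}'$ is the ``transform back'' map and $\bfs{p}' = \bfs{M}^{-1/2}\bfs{p}$ the whitening. Using $\bfs{M} = \bfs{M}^{1/2}(\bfs{M}^{1/2})^{\top}$, one checks that $\bfs{p}\sim\mathcal{N}(\bfs{0},\bfs{M})$ corresponds to $\bfs{p}'\sim\mathcal{N}(\bfs{0},\bfs{I})$ and that the preconditioned Hamiltonian $U(\bfs{\theta}) + \bfs{p}^{\top}\bfs{M}^{-1}\bfs{p}/2$ corresponds to $U(\bfs{\theta}) + (\bfs{p}')^{\top}\bfs{p}'/2$; imposing that conjugating the non-canonical flow back by $\bfs{T}$ reproduce the preconditioned (canonical) flow forces $\bfs{T}\bfs{A}\bfs{T}^{\top} = \bfs{J}$, i.e.\ $\bfs{A} = \bfs{T}^{-1}\bfs{J}\bfs{T}^{-\top}$, which a careful matrix computation identifies with the block-antidiagonal $\bfs{A}$ in the statement (with the Cholesky factor of $\bfs{M}$ in the off-diagonal blocks), again with $\topleftmatrix = \bottomrightmatrix = \bfs{0}$. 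The stochastic ingredients and the leapfrog matching go through exactly as in part (ii), now comparing the preconditioned-HMC leapfrog map (momentum half-kick, drift $\bfs{\theta}\mapsto\bfs{\theta} + \epsilon\bfs{M}^{-1}\bfs{p}$, momentum half-kick) with \eqref{eq:nchmc_leapfrog1} for $\topleftmatrix = \bottomrightmatrix = \bfs{0}$, cf.\ \eqref{eq:matrixexp} specialised to $\bottomrightmatrix = \bfs{0}$, which gives the drift $\bfs{\theta}\mapsto\bfs{\theta} + \epsilon\offdiagmatrix\bfs{p}$. I expect the only genuinely delicate step to be keeping the transposes and inverses of the (generally non-symmetric) Cholesky factor straight so that $\bfs{T}^{-1}\bfs{J}\bfs{T}^{-\top}$ lands in exactly the stated form; everything else is routine once the conjugation identity is in place.
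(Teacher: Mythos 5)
Your argument is correct and is essentially the paper's own: the paper proves both parts by exactly the chain-rule computation that you package as the conjugation identity $\bfs{z}=\bfs{T}^{-1}\bfs{x} \Rightarrow \dot{\bfs{z}} = (\bfs{T}^{-1}\bfs{A}\bfs{T}^{-\top})\nabla_{\bfs{z}}(H\circ\bfs{T})(\bfs{z})$, specialised to $\bfs{T}=\mathrm{diag}(\bfs{F},\bfs{I})$ for (ii) and to a momentum rescaling for (i). Your extra remarks on the momentum refresh, the acceptance ratio and the leapfrog matching go slightly beyond the paper's proof, which only establishes equivalence of the continuous flows, but the route is the same.

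One caveat on the ``careful matrix computation'' you defer in part (i): with $\bfs{T}=\mathrm{diag}(\bfs{I},\bfs{M}^{1/2})$ one gets $\bfs{T}^{-1}\bfs{J}\bfs{T}^{-\top} = \left[\begin{smallmatrix}\bfs{0} & (\bfs{M}^{1/2})^{-\top}\\ -(\bfs{M}^{1/2})^{-1} & \bfs{0}\end{smallmatrix}\right]$, i.e.\ the off-diagonal blocks are built from a Cholesky-type factor of $\bfs{M}^{-1}$, not of $\bfs{M}$. This agrees with what the paper's own proof actually derives (there $\bfs{M}^{-1/2}$ denotes the upper-triangular Cholesky factor of $\bfs{M}^{-1}$ and the result is $\left[\begin{smallmatrix}\bfs{0} & (\bfs{M}^{-1/2})^{\top}\\ -\bfs{M}^{-1/2} & \bfs{0}\end{smallmatrix}\right]$), but it does not literally coincide with the $\bfs{A}$ displayed in the lemma statement; a sanity check confirms the inverse-factor form is the right one, since after transforming back one needs $\dot{\bfs{\theta}}=\bfs{M}^{-1}\bfs{p}$ rather than $\dot{\bfs{\theta}}=\bfs{p}$. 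So your method lands on the correct matrix --- just do not expect the final bookkeeping to reproduce the statement's $\bfs{M}^{1/2}$ verbatim.
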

	Lemma \ref{lem:preconditioned} illustrates a fact alluded to in \cite{Neal2011}; using a mass preconditioning matrix $\bfs{M}$ and a change of basis given by matrix $-(\bfs{M}^{-1/2})^\top$ acting on $\bfs{\theta}$ leaves the HMC algorithm invariant.
	\subsection{Magnetic Dynamics} \label{subsec: magnetic-hmc}
	The primary focus of this paper is to investigate the subcase of the dynamics where:
	\begin{align}
	\bfs{A} = \begin{pmatrix} \bfs{0} & \bfs{I} \\ -\bfs{I} & \bottomrightmatrix \end{pmatrix} \label{eq:magneticmatrix}
	\end{align}
	for two important reasons\footnote{Note that the effect of a non-identity $\offdiagmatrix$ matrix can be achieved by simply composing these magnetic dynamics with a coordinate-transformation as suggested in Lemma \ref{lem:preconditioned}.}. Firstly for this choice of $\bfs{A}$ we can construct an explicit, symplectic, leapfrog-like integration scheme which is important for developing an efficient HMC sampler as discussed in Section \ref{subsec: ncd-leapfrog}. Secondly, the dynamics have an interesting physical interpretation quite distinct from mass preconditioning and other HMC variants like Riemannian HMC \cite{Girolami2009}:
	\begin{lemma}\label{lem:magnetic}
		In 3 dimensions the non-canonical Hamiltonian dynamics corresponding to the Hamiltonian $H(\bfs{\theta}, \bfs{p}) =  U(\bfs{\theta}) + \frac{1}{2} \bfs{p}^\top \bfs{p}$ and $\bfs{A}$ matrix as in \eqref{eq:magneticmatrix} are
		%
		equivalent to the Newtonian mechanics of a charged particle (with unit mass and charge) coupled to a magnetic field $\vec{\bfs{B}}$ (given by a particular function of $\bottomrightmatrix$ - see Appendix): $
		\frac{\rms{d}^2 \theta}{\rms{dt^2}} = -\nabla_{\bfs{\theta}}U(\bfs{\theta})+ \frac{\rms{d} \theta}{\rms{dt}} \times \vec{\bfs{B}}.
		$
	\end{lemma}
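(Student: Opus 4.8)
The plan is to write the non-canonical equations of motion explicitly, eliminate the momentum to get a single second-order ODE in $\theta$, and then identify the extra force term using the standard isomorphism between antisymmetric $3\times 3$ matrices and vectors in $\IR^3$.

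First I would substitute $H(\theta,\bfs p) = U(\theta) + \tfrac12 \bfs p^\top\bfs p$ and the matrix of \eqref{eq:magneticmatrix} into \eqref{eq:ncd}, obtaining
\[
\rms{\frac{d}{dt}}\begin{bmatrix}\theta\\ \bfs p\end{bmatrix}
= \begin{pmatrix}\bfs 0 & \bfs I\\ -\bfs I & \bottomrightmatrix\end{pmatrix}\begin{bmatrix}\nabla_\theta U(\theta)\\ \bfs p\end{bmatrix}
= \begin{bmatrix}\bfs p\\ -\nabla_\theta U(\theta) + \bottomrightmatrix\bfs p\end{bmatrix},
\]
so that $\dot\theta = \bfs p$ (the Hamiltonian momentum coincides with the kinematic velocity, since $\offdiagmatrix=\bfs I$ and the particle has unit mass) and $\dot{\bfs p} = -\nabla_\theta U(\theta) + \bottomrightmatrix\bfs p$. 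Differentiating the first relation and substituting the second yields the single second-order equation $\ddot\theta = -\nabla_\theta U(\theta) + \bottomrightmatrix\dot\theta$.

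Second, I would invoke the fact that every antisymmetric $\bottomrightmatrix \in \mathcal M_{3\times3}$ has the form
\[
\bottomrightmatrix = \begin{pmatrix} 0 & -b_3 & b_2\\ b_3 & 0 & -b_1\\ -b_2 & b_1 & 0\end{pmatrix}
\]
for a unique $\vec{\bfs b} = (b_1,b_2,b_3) \in \IR^3$, and that under this correspondence $\bottomrightmatrix\bfs v = \vec{\bfs b}\times\bfs v$ for every $\bfs v$ (a one-line componentwise check). Setting $\vec{\bfs B} := -\vec{\bfs b}$ and using antisymmetry of the cross product gives $\bottomrightmatrix\dot\theta = \vec{\bfs b}\times\dot\theta = \dot\theta\times\vec{\bfs B}$, so the equation of motion becomes
\[
\rms{\frac{d^2\theta}{dt^2}} = -\nabla_\theta U(\theta) + \rms{\frac{d\theta}{dt}}\times\vec{\bfs B},
\]
which is precisely Newton's second law for a unit-mass, unit-charge particle in the potential $U$ subject to the Lorentz force of a magnetic field $\vec{\bfs B}$; the ``particular function of $\bottomrightmatrix$'' is $\vec{\bfs B} = -\vec{\bfs b}$, the negative image of $\bottomrightmatrix$ under the inverse hat-map.

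There is essentially no deep obstacle here: the only points needing care are (i) the bookkeeping that $\offdiagmatrix = \bfs I$ makes $\dot\theta = \bfs p$, so that $\tfrac12\bfs p^\top\bfs p$ is genuinely the kinetic energy of the particle; and (ii) the sign/orientation convention relating $\bottomrightmatrix$ and $\vec{\bfs B}$, which is why the statement only asserts that $\vec{\bfs B}$ is ``a particular function of $\bottomrightmatrix$'' rather than fixing a sign in the lemma. One could additionally remark, as a consistency check with Lemma \ref{lem:nchd-1}, that the Lorentz term $\dot\theta\times\vec{\bfs B}$ is orthogonal to $\dot\theta$ and hence does no work, which is exactly the physical content of energy conservation $\partial_\tau H = 0$ for these dynamics.
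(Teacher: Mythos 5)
Your proof is correct and follows essentially the same route as the paper's: write out the non-canonical ODEs, use $\dot{\theta}=\bfs{p}$ to eliminate the momentum, and identify the antisymmetric $\bottomrightmatrix$ with a cross product via the hat-map; the paper merely runs the computation in the reverse direction, starting from Newton's law and arriving at the Hamiltonian form. Your attention to the sign convention (taking $\vec{\bfs{B}}=-\vec{\bfs{b}}$) is if anything slightly more careful than the appendix, which writes $\bottomrightmatrix$ as the hat-map of $+\vec{\bfs{B}}$ and so silently absorbs a sign into the orientation convention --- harmless here, since the lemma only asserts that $\vec{\bfs{B}}$ is some function of $\bottomrightmatrix$.
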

	This interpretation is perhaps surprising since Hamiltonian formulations of classical magnetism are uncommon, although the quantum mechanical treatment naturally incorporates a Hamiltonian framework. However, in light of Lemma \ref{lem:preconditioned} we might wonder if by a clever rewriting of the Hamiltonian we can reproduce this system of ODEs using the canonical $\bfs{A}$ matrix (i.e. $\bfs{E} = \bfs{G} = \bfs{0}$, $\bfs{F} = \bfs{I}$). This is not the case:
	\begin{lemma}\label{lem:impossible}
		The non-canonical Hamiltonian dynamics with magnetic $\bfs{A}$ and Hamiltonian $H(\bfs{\theta}, \bfs{p}) =  U(\bfs{\theta}) + \frac{1}{2} \bfs{p}^\top \bfs{p}$ cannot be obtained using canonical Hamiltonian dynamics for any choice of smooth Hamiltonian. (See Appendix).
	\end{lemma}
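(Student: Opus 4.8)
The plan is to argue by contradiction, comparing the magnetic system directly with the most general canonical Hamilton equations \emph{in the same coordinates}. Written in first–order form, the magnetic dynamics with $\bfs{A}$ as in \eqref{eq:magneticmatrix} read
\begin{align}
\dot{\bfs{\theta}} = \bfs{p}, \qquad \dot{\bfs{p}} = -\nabla_{\bfs{\theta}} U(\bfs{\theta}) + \bottomrightmatrix\,\bfs{p}, \nonumber
\end{align}
whereas the canonical dynamics generated by an arbitrary smooth $K(\bfs{\theta},\bfs{p})$ are $\dot{\bfs{\theta}} = \nabla_{\bfs{p}} K$, $\dot{\bfs{p}} = -\nabla_{\bfs{\theta}} K$. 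If some $K$ reproduced the magnetic system, then $\nabla_{\bfs{p}} K = \bfs{p}$ would force $K(\bfs{\theta},\bfs{p}) = \tfrac12 \bfs{p}^\top \bfs{p} + g(\bfs{\theta})$ for a smooth $g$, and then $-\nabla_{\bfs{\theta}} K = -\nabla g(\bfs{\theta})$ would have to equal $-\nabla U(\bfs{\theta}) + \bottomrightmatrix\,\bfs{p}$. The left-hand side is independent of $\bfs{p}$ while the right-hand side depends nontrivially (and linearly) on $\bfs{p}$ as soon as $\bottomrightmatrix \ne \bfs{0}$ --- which is exactly the regime in which the magnetic dynamics differ from ordinary HMC --- a contradiction. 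The same elementary argument also rules out a time-dependent $K(\bfs{\theta},\bfs{p},t)$, since $g$ would merely acquire a $t$-dependence that is still $\bfs{p}$-independent.

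I would also record a more structural version of the obstruction that shows it is intrinsic. The flow of any smooth Hamiltonian with respect to the canonical matrix $\bfs{J} = \bigl(\begin{smallmatrix}\bfs{0} & \bfs{I}\\ -\bfs{I} & \bfs{0}\end{smallmatrix}\bigr)$ preserves the canonical symplectic form, so a necessary condition for the magnetic vector field $X(\bfs{\theta},\bfs{p}) = \bfs{A}\,\nabla_{\theta,\bfs{p}} H$ to be canonically Hamiltonian --- even locally --- is $(\nabla X)^\top \bfs{J} + \bfs{J}\,\nabla X = \bfs{0}$ pointwise. Using $\nabla X = \bfs{A}\,\nabla_{\theta,\bfs{p}}^2 H$ with $\nabla_{\theta,\bfs{p}}^2 H = \bigl(\begin{smallmatrix}\nabla^2 U(\bfs{\theta}) & \bfs{0}\\ \bfs{0} & \bfs{I}\end{smallmatrix}\bigr)$ together with the antisymmetry of $\bottomrightmatrix$, a one-line matrix computation gives $(\nabla X)^\top \bfs{J} + \bfs{J}\,\nabla X = \bigl(\begin{smallmatrix}\bfs{0} & \bottomrightmatrix\\ \bottomrightmatrix & \bfs{0}\end{smallmatrix}\bigr)$, which vanishes precisely when $\bottomrightmatrix = \bfs{0}$. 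Hence for $\bottomrightmatrix \ne \bfs{0}$ the magnetic flow is not a one-parameter family of canonical symplectomorphisms, so it cannot coincide with the flow of any smooth Hamiltonian under the canonical structure.

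I do not expect a genuine obstacle here; the only point requiring care is the precise scope of the claim. It asserts impossibility while keeping the matrix fixed at $\bfs{J}$ (equivalently, the coordinates $(\bfs{\theta},\bfs{p})$ fixed) and varying only the Hamiltonian, as made explicit in the surrounding discussion. It is \emph{not} the (false) assertion that the magnetic dynamics fail to be Hamiltonian with respect to \emph{any} symplectic structure: they are the Hamiltonian dynamics of $H$ with respect to the constant form with matrix $\bfs{A}^{-1}$ (the symplecticity already noted in Lemma \ref{lem:nchd-1}), and a global linear change of coordinates bringing that form to standard position would recast them as canonical. I would therefore state this restriction at the outset so the elementary argument above is not mistaken for an attempt at the stronger statement.
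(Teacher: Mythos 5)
Your first argument is correct and is essentially the paper's proof in integrated form: the paper assumes a smooth $H'$ with $\nabla_{\bfs{p}}H'=\bfs{p}$ and $\nabla_{\bfs{\theta}}H'=\nabla_{\bfs{\theta}}U-\bottomrightmatrix\bfs{p}$ and derives the contradiction from equality of the continuous mixed partials ($\nabla_{\bfs{\theta}}\nabla_{\bfs{p}}H'=\bfs{0}$ versus $\nabla_{\bfs{p}}\nabla_{\bfs{\theta}}H'=-\bottomrightmatrix$), whereas you integrate $\nabla_{\bfs{p}}K=\bfs{p}$ explicitly to $K=\tfrac12\bfs{p}^\top\bfs{p}+g(\bfs{\theta})$ and observe that $-\nabla g(\bfs{\theta})$ cannot match a $\bfs{p}$-dependent right-hand side; the obstruction is identical. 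Your second, structural argument is a genuinely different route and I checked the computation: with $\nabla X=\bigl(\begin{smallmatrix}\bfs{0}&\bfs{I}\\-\nabla^2U&\bottomrightmatrix\end{smallmatrix}\bigr)$ one indeed gets $(\nabla X)^\top\bfs{J}+\bfs{J}\,\nabla X=\bigl(\begin{smallmatrix}\bfs{0}&\bottomrightmatrix\\\bottomrightmatrix&\bfs{0}\end{smallmatrix}\bigr)$, so the vector field fails to be infinitesimally canonical-symplectic whenever $\bottomrightmatrix\neq\bfs{0}$. What this buys over the paper's argument is that it rules out the magnetic flow being generated by \emph{any} canonical Hamiltonian without having to reverse-engineer a candidate $K$ from the equations, and it makes transparent why the obstruction is coordinate-bound rather than intrinsic --- your closing remark that the dynamics \emph{are} Hamiltonian for the symplectic structure $\bfs{A}^{-1}$ (Lemma \ref{lem:nchd-1}) and that the lemma only asserts impossibility with the canonical $\bfs{J}$ held fixed is exactly the right scoping of the claim.
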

	\section{The Magnetic HMC (MHMC) Algorithm} \label{sec: algo}
	Using the results discussed in Section \ref{subsec: ncd-properties} and Section \ref{subsec: ncd-leapfrog} we can now propose Magnetic HMC -- see Algorithm \ref{algo:leapfrogNCHMC}.
	\begin{algorithm}
		\begin{algorithmic}[ht!]
			\REQUIRE{$H$, $\bottomrightmatrix$, $L$, $\epsilon$}
			\STATE{Initialize $(\theta_0, \bfs{p}_0)$, and set $\bottomrightmatrix_0 \gets \bottomrightmatrix$}
			\FOR{$n=1,\ldots,N$}
			\STATE{Resample $\bfs{p}_{n-1} \sim N(\bfs{0}, \bfs{I})$}
			\STATE{Set $(\widetilde{\theta}_n, \widetilde{\bfs{p}}_n) \gets \textsc{LF}(H, L, \epsilon, (\theta_{n-1}, \bfs{p}_{n-1}, \bottomrightmatrix_{n-1}))$ with $\bfs{\Phi}^\bfs{A}_{\epsilon, H_{2}(\bfs{p})}$ as in \eqref{eq:matrixexp}}
			\STATE{Flip momentum $(\widetilde{\theta}_n, \widetilde{\bfs{p}}_n) \gets (\widetilde{\theta}_n, -\widetilde{\bfs{p}}_n) $ and set $\widetilde{\bottomrightmatrix}_n \gets -\bottomrightmatrix_{n-1}$}
			\IF{$\mathrm{Unif}(\lbrack 0,1 \rbrack) < \min(1, \exp(H(\theta_{n-1}, \bfs{p}_{n-1})-H(\widetilde{\theta}_{n}, \bfs{\widetilde{p}}_{n})))$}
			\STATE{
				Set $(\theta_n, \bfs{p}_n, \bottomrightmatrix_n) \gets (\widetilde{\theta}_n, \bfs{\widetilde{p}}_n, \widetilde{\bottomrightmatrix}_{n})$
			}
			\ELSE
			\STATE{
				Set $(\theta_n, \bfs{p}_n, \bottomrightmatrix_n) \gets (\theta_{n-1}, \bfs{p}_{n-1}, \bottomrightmatrix_{n-1})$
			}
			\ENDIF
			\STATE{Flip momentum $\bfs{p}_n \gets -\bfs{p}_n$ and flip $\bottomrightmatrix_n \gets -\bottomrightmatrix_n$}
			\ENDFOR
			\ENSURE $(\theta_n)_{n=0}^N$
			\caption{Magnetic HMC (MHMC)}
			\label{algo:leapfrogNCHMC}
		\end{algorithmic}
	\end{algorithm}

	One further remark is that by construction the integrator for magnetic HMC is expected to have similarly good energy conservation properties to the integrator of standard HMC:
	\begin{lemma}\label{lem:error}
		The symplectic leapfrog-like integrator for magnetic HMC will have the same local ($\sim \mathcal{O}(\epsilon^3)$) and global ($\sim \mathcal{O}(\epsilon^2)$) error scaling (over $\tau \sim \frac{L}{\epsilon}$ steps), as the canonical leapfrog integrator of standard HMC if the Hamiltonian is separable. (See Appendix).
	\end{lemma}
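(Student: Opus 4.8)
The plan is to recognize the leapfrog-like map \eqref{eq:nchmc_leapfrog1} as a \emph{symmetric splitting method} assembled from exactly integrated sub-flows, and then to run the standard backward-error / BCH analysis for such methods, checking only that nothing in that argument is sensitive to the non-canonical block $\bottomrightmatrix$. First I would record that, with $\topleftmatrix=\bfs{0}$, $\offdiagmatrix=\bfs{I}$ and $H$ separable, each of the three factors in \eqref{eq:nchmc_leapfrog1} is the \emph{exact} flow of an autonomous vector field: $\bfs{\Phi}^{\bfs{A}}_{\epsilon,H_1(\theta)}$ is the Euler momentum translation $\bfs{p}\mapsto \bfs{p}-\tfrac{\epsilon}{2}\nabla_\theta U(\theta)$ and $\bfs{\Phi}^{\bfs{A}}_{\epsilon,H_2(\bfs{p})}$ is the closed-form map \eqref{eq:matrixexp}. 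Writing $X_1=\bfs{A}\nabla H_1$ and $X_2=\bfs{A}\nabla H_2$ (so that $2X_1+X_2=\bfs{A}\nabla H$ is the exact magnetic field), the one-step map is $\Psi_\epsilon=\varphi^{X_1}_{\epsilon/2}\circ\varphi^{X_2}_{\epsilon}\circ\varphi^{X_1}_{\epsilon/2}$, a palindromic (Strang) composition.

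Next I would establish symmetry and order. Since each $\varphi^{X_i}_t$ is a flow of an autonomous ODE, $\varphi^{X_i}_{-t}=(\varphi^{X_i}_t)^{-1}$, and the palindromic structure gives $\Psi_{-\epsilon}=\Psi_\epsilon^{-1}$, i.e.\ the integrator is symmetric. A one-term Taylor expansion yields $\Psi_\epsilon=\mathrm{id}+\epsilon\,\bfs{A}\nabla H+\mathcal{O}(\epsilon^2)$, so the method is consistent. By the classical fact that a symmetric, consistent one-step method has even order, hence order at least two (see \cite{Hairer2006}), the local truncation error is $\mathcal{O}(\epsilon^3)$ per step — exactly as for canonical leapfrog. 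The global estimate is then the standard accumulation-of-local-errors argument (``Lady Windermere's fan''): restricting to a compact region that the trajectories do not leave for $\epsilon$ small, and propagating the $L=\tau/\epsilon$ local errors through the exact flow using a Gronwall bound (which needs only local Lipschitz continuity of $\bfs{A}\nabla H$ and of its variational flow, guaranteed by smoothness of $U$), one obtains total error $\mathcal{O}(\epsilon^2)$ over the fixed horizon $\tau$. Separability is used only (i) to make the two sub-flows exactly solvable, so that no further splitting is needed, and (ii) through smoothness of $U$, to justify the asymptotic expansions and the Gronwall step.

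The point that genuinely requires a remark — and is really the only place the non-canonical setting could interfere — is that the backward-error machinery is blind to $\bottomrightmatrix$: BCH only manipulates iterated commutators of the $X_i$ as smooth vector fields, and the constant matrix $\bfs{A}$ simply rides along. Moreover, because $[\bfs{A}\nabla f,\bfs{A}\nabla g]=\bfs{A}\nabla\{f,g\}_{\bfs{A}}$ for the constant-$\bfs{A}$ bracket $\{f,g\}_{\bfs{A}}=(\nabla f)^\top\bfs{A}\nabla g$, every term of the modified vector field is again $\bfs{A}$-Hamiltonian, so there is a modified Hamiltonian $\widetilde H_\epsilon = H+\epsilon^2 H^{(2)}+\epsilon^4 H^{(4)}+\cdots$ (only even powers, by symmetry) that $\Psi_\epsilon$ conserves up to exponentially small terms on the relevant time scales. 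This is the source of the ``similarly good energy conservation'' referenced in the statement, and it is inherited verbatim from the canonical analysis. I expect the main obstacle to be purely expository rather than technical: one must state carefully enough that Lemma~\ref{lem:nchd-1} (exactness of the sub-flows inherits generalized symplecticity and pseudo-time-reversibility) plus symmetry of the composition are \emph{all} that the Hairer--Lubich--Wanner theory consumes, so that the canonical proofs transfer with $\bfs{A}$ in place of the standard symplectic matrix without change.
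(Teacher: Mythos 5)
Your proposal is correct and follows essentially the same route as the paper: the paper's proof is exactly the symmetric-splitting/BCH argument, carried out by explicitly expanding $\exp(\tfrac{\epsilon}{2}\vec{\bfs{B}})\circ\exp(\epsilon\vec{\bfs{A}})\circ\exp(\tfrac{\epsilon}{2}\vec{\bfs{B}})$ and watching the $\epsilon^2$ commutator terms cancel, which is precisely the content of the ``symmetric consistent methods have even order'' theorem you invoke. Your handling of the global error (Lady Windermere's fan with a Gronwall bound) is in fact more careful than the paper's direct multiplication of the $L=\tau/\epsilon$ local errors, but the conclusion and the key observation---that the constant non-canonical matrix $\bfs{A}$ does not disturb the canonical analysis because each sub-flow is integrated exactly---are identical.
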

	\vspace{-.3cm}
	It is worthwhile to contrast the algorithmic differences between magnetic HMC and ordinary HMC. Intuitively, the role of the flow $\bfs{\Phi}^\bfs{A}_{\epsilon, H_{2}(\bfs{p})}$ -- which reduces to the standard Euler translation update of ordinary HMC when $\bottomrightmatrix=\bfs{0}$ -- is to introduce a \textit{rotation} into the momentum space of the flow. In particular, a non-zero element $\bfs{G}_{ij}$ will allow momentum to periodically flow between $p_{i}$ and $p_{j}$. If we regard $\bottomrightmatrix$ as an element in the Lie algebra of antisymmetric matrices, which can be thought of as infinitesimal rotations, then the exponential map $\exp(\bottomrightmatrix \epsilon)$ will project this transformation into the Lie group of real orthogonal linear maps.

	With respect to computational cost, although magnetic HMC requires matrix exponentiation/diagonalization to simulate $\bfs{\Phi}^\bfs{A}_{\epsilon, H_{2}(\bfs{p})}$, this only needs to be computed once upfront for $\pm \bottomrightmatrix$ and cached; moreover, as $\pm \bottomrightmatrix$ is diagonalizable, the exact exponential can be calculated in $\mathcal{O}(d^3)$ time. Additionally, there is an $\mathcal{O}(d^2)$ cost for the matrix-vector products needed to implement the flow $\bfs{\Phi}^\bfs{A}_{\epsilon, H_{2}(\bfs{p})}$ as with preconditioning. However, it is possible to design sparsified matrix representations of $\bfs{A}$ which will translate into sparsified rotations if we only wish to "curl" in a specific subspace of dimension $d_0$ -- which will translate into a computational cost of $\mathcal{O}(d_0^3)$ and $\mathcal{O}(d_0^2)$ respectively.

	An important problem to address is the selection of the $\bottomrightmatrix$ matrix, which affords a great deal of flexibility to MHMC relative to HMC; this point is further discussed in the Experiments section, where we argue that in certain cases intuitive heuristics can be used to select the $\bottomrightmatrix$ matrix.
	%
	\section{Experiments}\label{sec:experiments}
	Here we investigate the performance of magnetic HMC against standard HMC in several examples; in each case commenting on our choice of the magnetic field term $\bottomrightmatrix$. Step sizes ($\epsilon$) and number of leapfrog steps ($L$) were tuned to achieve an acceptance rate between $.7-.8$, after which the norm of the non-zero elements in $\bfs{G}$ was set to $\sim.1-.2$ which was found to work well.

	In the Appendix we also display illustrations of different MHMC proposals across several targets in order to provide more intuition for MHMC's dynamics.
	Further experimental details and an additional experiment on a Gaussian funnel target are also provided in the Appendix.
	\subsection{Multiscale Gaussians}
	We consider two highly ill-conditioned Gaussians similar to as in \cite{Sohl-Dickstein2014} to illustrate a heuristic for $\bottomrightmatrix$ matrix selection and demonstrate properties of the magnetic dynamics. In particular we consider a centered, uncorrelated 2D Gaussian with covariance eigenvalues of $10^6$ and $1$ as well as a centered, uncorrelated 10D Gaussian with two large covariance eigenvalues of $10^6$ and remaining eigenvalues of $1$.
	We denote their coordinates as $\bfs{x} = (x_1, x_2) \in \mathbb{R}^2$ and $\bfs{x} = (x_1, \hdots, x_{10}) \in \mathbb{R}^{10}$ respectively.
	\begin{figure}[!htb]
		\centering
		\begin{minipage}[t]{.25\textwidth}
			\centering
			\includegraphics[keepaspectratio, width=1\textwidth]{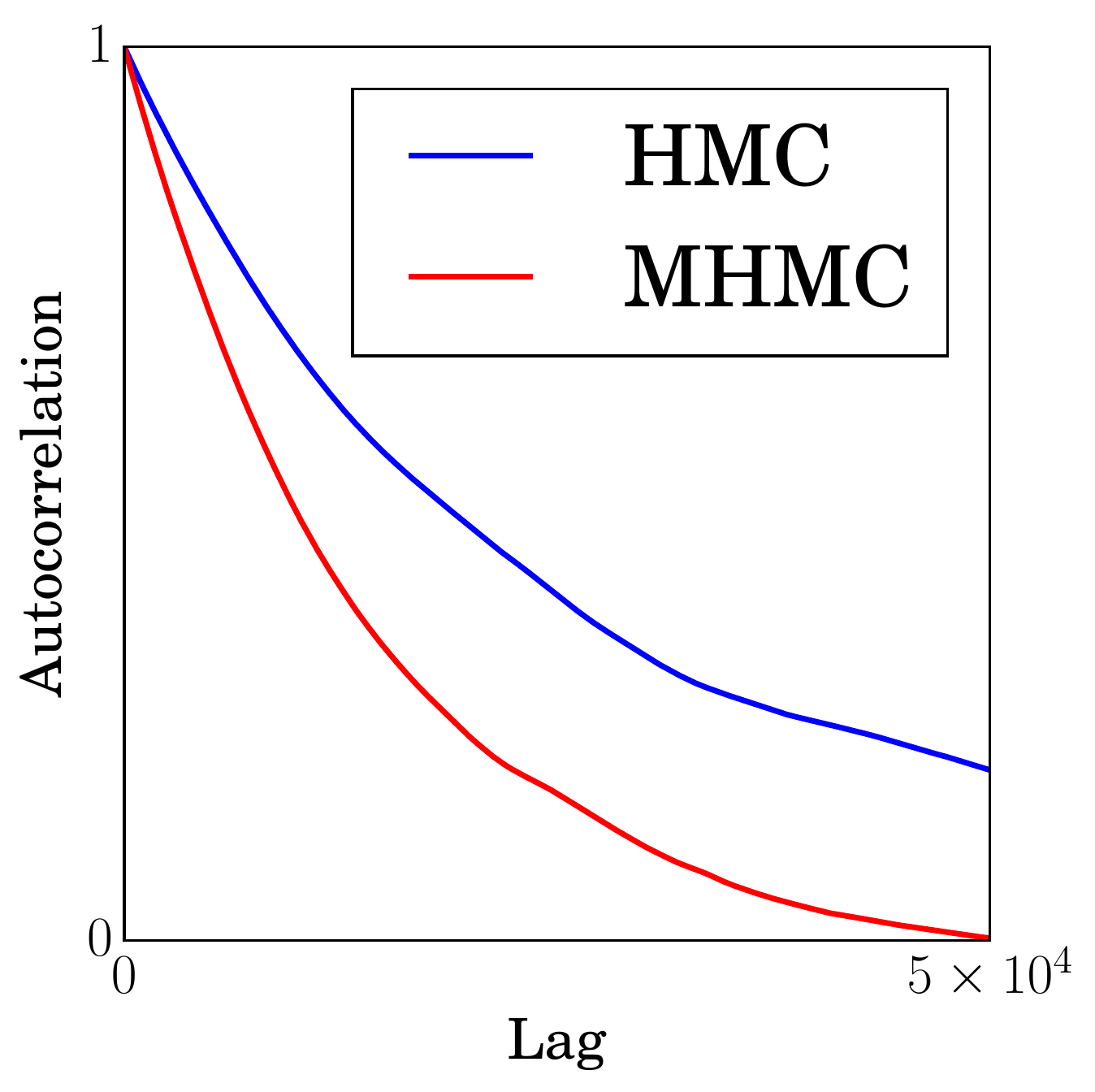}
			\label{samples}
		\end{minipage}%
		~
		\begin{minipage}[t]{.25\textwidth}
			\centering
			\includegraphics[keepaspectratio, width=1\textwidth]{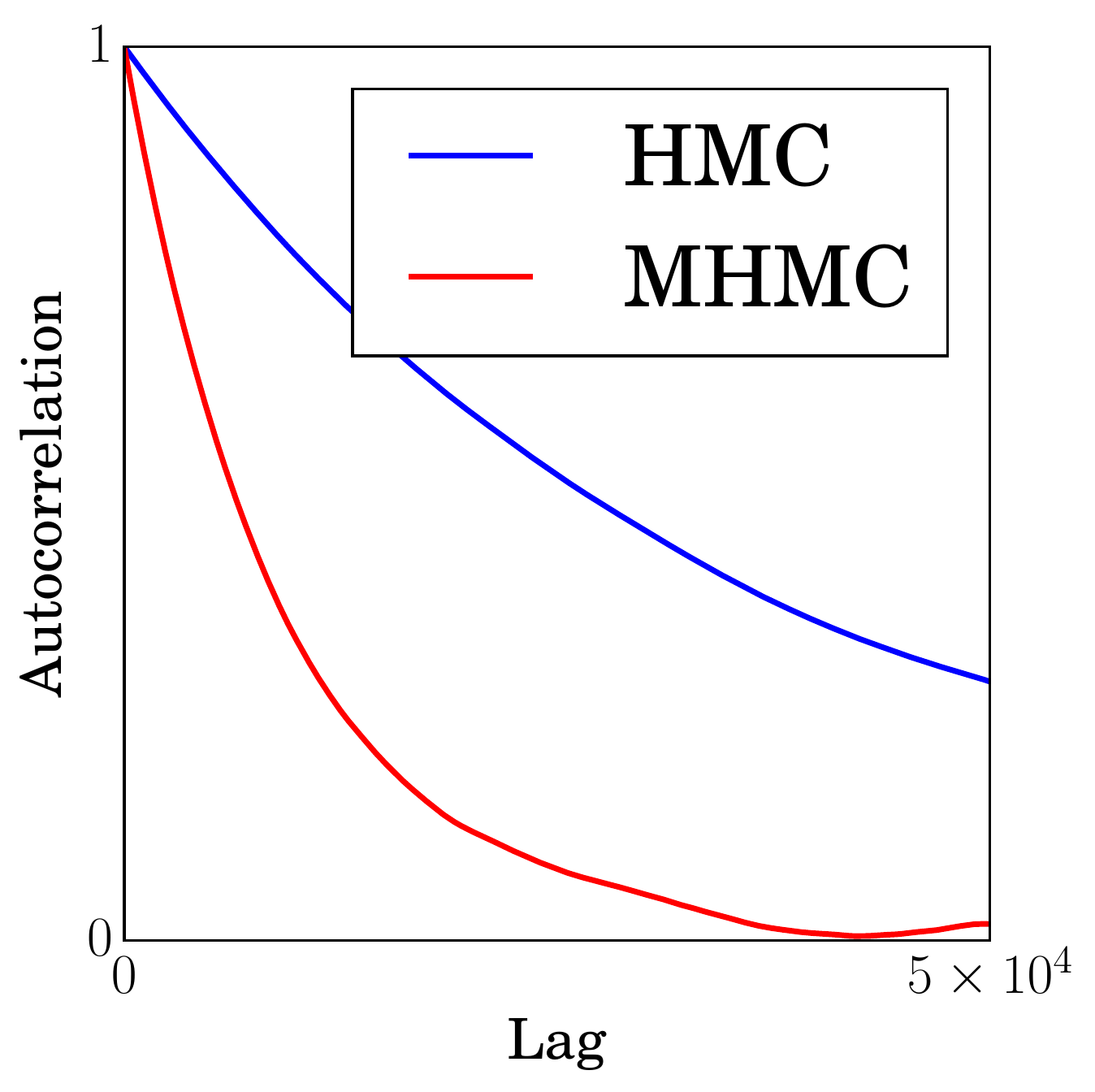}
			\label{samples}
		\end{minipage}%
		\caption{Averaged Autocorrelation of HMC vs MHMC on a 10D ill-conditioned Gaussian (left) and Averaged Autocorrelation of HMC vs MHMC on a 2D ill-conditioned Gaussian.}
		\label{fig:multiscaleGaussians}
	\end{figure}
	%
	HMC will have difficulty exploring the directions of large marginal variance since its exploration will often be limited by the smaller variance directions. Accordingly, in order to induce a periodic momentum flow between the directions of small and large variance, we introduce nonzero components $\bfs{G}_{ij}$ into the subspaces spanned directly between the large and small eigenvalues. Indeed, we find that magnetic $\bfs{G}$ term is encouraging more efficient exploration as we can see from the averaged autocorrelation of samples generated from the HMC/MHMC chains -- see Figure \ref{fig:multiscaleGaussians}. Further, by running the 50 parallel chains for $10^7$ timesteps, we computed both the bias and Monte Carlo standard errors (MCSE) of the estimators of the target moments as shown in Table \ref{fig:1Gaussian2d} and Table \ref{fig:1Gaussian10d}.
	\begin{table}[!ht]
		\vspace{-.25cm}
		\centering
		\caption{Absolute Bias $\pm$ MCSE for 2D ill-conditioned Gaussian moments for HMC vs. MHMC. Note that $\mathbb{E}[x_1^2]=10^6$ and $\mathbb{E}[x_2^2]=1$.}
		\begin{tabular}{cccc} \\\toprule
			algorithm  & $x_{1}^2$ (Bias $\pm$ MCSE) & $x_{2}^2$ (Bias $\pm$ MCSE)  \\ \midrule
			HMC & .947 $\pm$ 41.7 & .00108 $\pm$ .0247 \\  \midrule
			MHMC & .657 $\pm$ 22.5 & .000280 $\pm$ .00438 \\ \midrule
		\end{tabular}
		\label{fig:1Gaussian2d}
		\vspace{-.25cm}
	\end{table}
	\begin{table}[!ht]
		\vspace{-.25cm}
		\centering
		\caption{Absolute Bias $\pm$ MCSE for 10D ill-conditioned Gaussian moments for HMC vs. MHMC. Note that $\mathbb{E}[x_1^2]=10^6$ and $\mathbb{E}[x_{10}^2]=1$.}
		\begin{tabular}{cccc} \\\toprule
			algorithm  & $x_{1}^2$ (Bias $\pm$ MCSE) & $x_{10}^2$ (Bias $\pm$ MCSE)  \\ \midrule
			HMC & 24.7 $\pm$ 46.6 & 0.00376 $\pm$ 0.0249 \\  \midrule
			MHMC & 9.68 $\pm$ 32.7 & 0.00127 $\pm$ 0.0132 \\ \midrule
		\end{tabular}
		\label{fig:1Gaussian10d}
		\vspace{-.25cm}
	\end{table}

	\subsection{Mixture of Gaussians}
	We compare MHMC vs. HMC on a simple, but interesting, 2D density over $\bfs{x} = (x, y) \in \mathbb{R}^2$ comprised of an evenly weighted mixture of isotropic Gaussians: $p(\bfs{x}) = \frac{1}{2}\mathcal{N}(\bfs{x};\boldsymbol{\mu}, \Sigma) + \frac{1}{2}\mathcal{N}(\bfs{x};-\boldsymbol{\mu}, \Sigma)$ for  $\sigma_{x}^2=\sigma_{y}^2=1$, $\rho_{x y}=0$ and $\boldsymbol{\mu}= (2.5, -2.5)$. This problem is challenging for HMC because the gradients in canonical Hamiltonian dynamics force it to one of the two modes.
	\begin{figure*}[!ht]
		\centering
		\begin{minipage}[t]{.32\textwidth}
			\centering
			\includegraphics[width=1\linewidth]{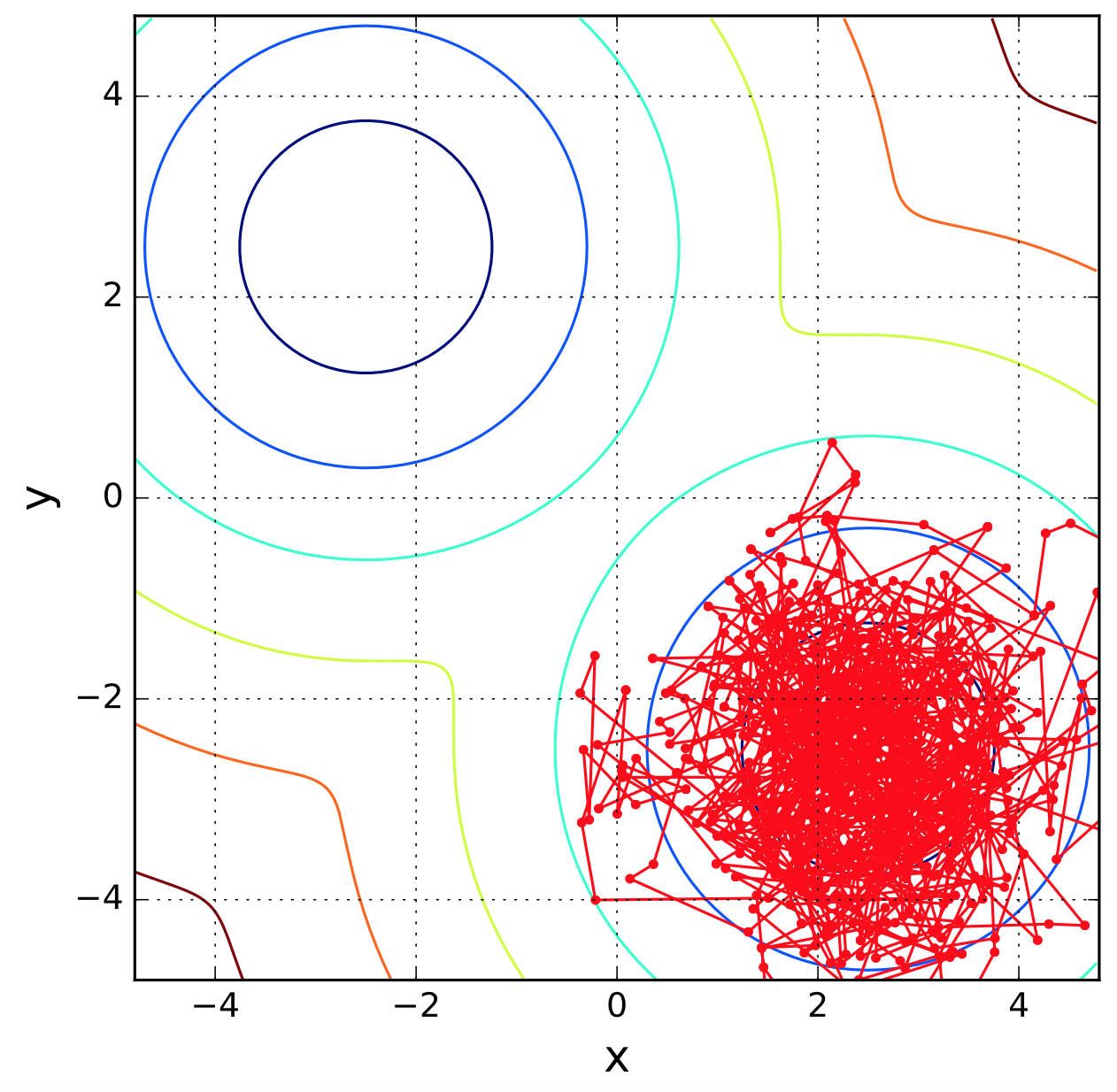}
			\label{samples}
		\end{minipage}%
		~
		\begin{minipage}[t]{.32\textwidth}
			\centering
			\includegraphics[width=1\linewidth]{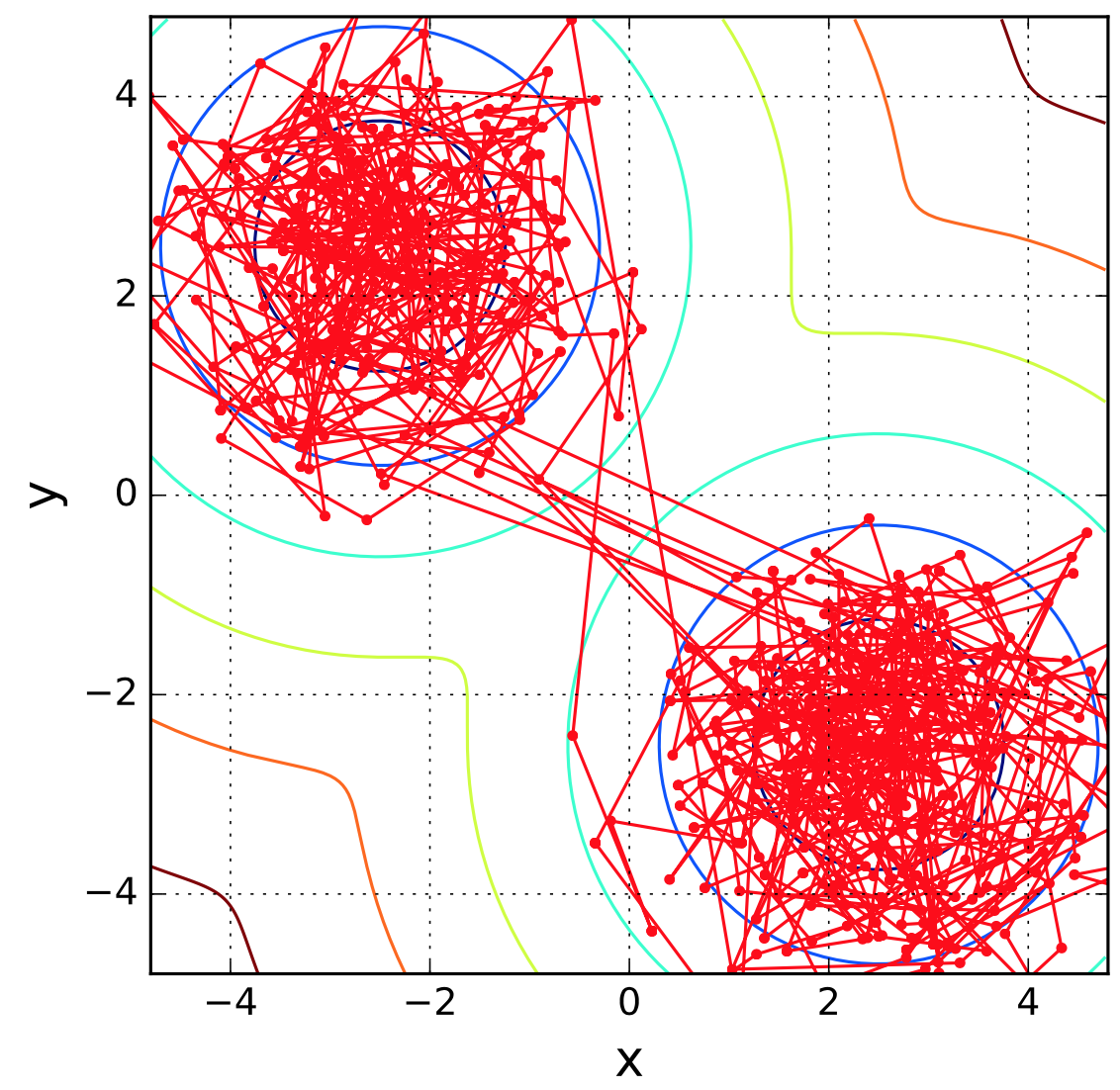}
			\label{samples}
		\end{minipage}%
		~
		\begin{minipage}[t]{0.32\textwidth}
			\centering
			\includegraphics[width=1\linewidth, height=1\linewidth]{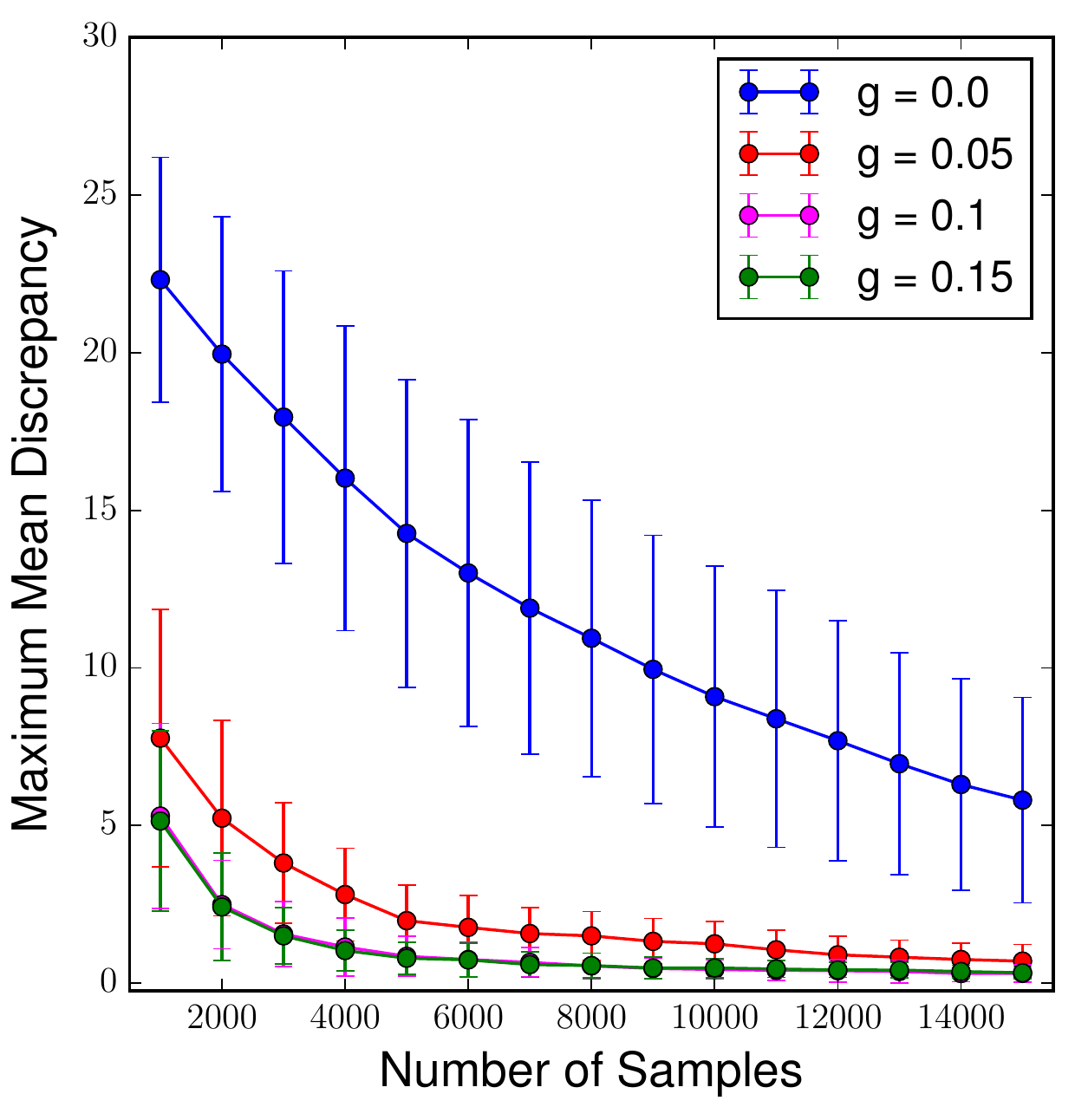}
			\label{mmd}
		\end{minipage}
		\caption{Left: 500 samples from HMC; Middle: 500 samples from MHMC; Right: MMD between HMC/MHMC samples for various magnitudes of the non-zero component of the magnetic field -- denoted g. Note $g=0$ corresponds to standard HMC.}
		\vspace{-.1cm}
		\label{fig:Mof2G}
	\end{figure*}
	We tuned HMC to achieve an acceptance rate of $\sim .75$ and used the same $\epsilon, L$ for MHMC, generating 15000 samples from both HMC and MHMC with these settings. The addition of the magnetic field term $\bottomrightmatrix$ -- which has one degree of freedom in this case -- introduces an asymmetric ``curl'' into the dynamics that pushes the sampler across the saddlepoint to the other mode allowing it to efficiently mix around both modes and between them -- see Figure \ref{fig:Mof2G}. The maximum mean discrepancy between exact samples generated from the target density and samples generated from both HMC and MHMC chains was also estimated for various magnitudes of $\bottomrightmatrix$, using a quadratic kernel $k(\bfs{x},\bfs{x}^\prime) = (1+\langle \bfs{x}, \bfs{x}^\prime \rangle)^2$ and averaged over 100 runs of the Markov chains \cite{Borgwardt2006}. Here, we clearly see that for various values of the nonzero component of $\bottomrightmatrix$, denoted $g$, the samples generated by MHMC more faithfully reflect the structure of the posterior. As before, we ran 50 parallel chains for $10^7$ timesteps to compute both the bias and Monte Carlo standard errors (MCSE) of the estimators of the target moments as shown in Table \ref{fig:2Gaussian2d}.
	\begin{table}[!ht]
		\vspace{-.25cm}
		\centering
		\caption{Bias $\pm$ MCSE for 2D Mixture of Gaussians for HMC vs. MHMC with $g=0.1$. Note that $\mathbb{E}[x]=0$ and $\mathbb{E}[x^2]=7.25$.}
		\begin{tabular}{cccc} \\\toprule
			algorithm  & $x$ (Bias $\pm$ MCSE) & $x^2$ (Bias $\pm$ MCSE)  \\ \midrule
			HMC & .0132 $\pm$ .0644 & 0.00264 $\pm$ 0.0114 \\  \midrule
			MHMC & .00239 $\pm$ .012 & 0.000596 $\pm$ 0.00365 \\ \midrule
		\end{tabular}
		\vspace{-.25cm}
		\label{fig:2Gaussian2d}
	\end{table}
	Additional experiments over a range of $\epsilon$, $L$ (and corresponding acceptance rates) and details are included in the Appendix for this example, demonstrating similar behavior.

	\subsection{FitzHugh-Nagumo model}
	Finally, we consider the problem of Bayesian inference over the parameters of the FitzHugh-Nagumo model (a set of nonlinear ordinary differential equations, originally developed to model the behavior of axial spiking potentials in neurons) as in \cite{Ramsay07, Girolami2011}. The FitzHugh-Nagumo model is a dynamical system $(V(t), R(t))$ defined by the following coupled differential equations:
	\begin{align}
	\dot{V}(t) = c(V(t)-V(t)^3/3+R(t)) \nonumber \\
	\dot{R}(t) = -(V(t)-a+bR(t))/c
	\label{FHN}
	\end{align}

	We consider the problem where the initial conditions $(V(0), R(0))$ of the system \eqref{FHN} are known, and a set of noise-corrupted observations $(\widetilde{V}(t_n), \widetilde{R}(t_n))_{n=0}^N = (V_{a,b,c}(t_n) + \varepsilon^V_{n}, R_{a, b, c}(t_n) + \varepsilon_n^R)_{n=0}^N$ at discrete time points $0=t_0 < t_1<\cdots < t_N$, are available - note that we illustrate dependence of the trajectories on the model parameters explicitly via subscripts. It is not possible to recover the true parameter values of the model from these observations, but we can  obtain a posterior distribution over them by specifying a model for the observation noise and a prior distribution over the model parameters.

	Similar to \cite{Ramsay07, Girolami2011}, we assume that the observation noise variables $(\varepsilon^V_n)_{n=0}^N$ and $(\varepsilon^V_n)_{n=0}^N$ are iid $\mathcal{N}(0, 0.1^2)$, and take an independent $\mathcal{N}(0,1)$ prior over each parameter $a$, $b$, and $c$. This yields a posterior distribution of the form
	\begin{align}
	& p(a,b,c) \propto \mathcal{N}(a; 0, 1)\mathcal{N}(b; 0, 1)\mathcal{N}(c; 0, 1) \ \times \nonumber \\
	& \prod_{n=0}^N \mathcal{N}(\widetilde{V}(t_n); V_{a,b,c}(t_n), 0.1^2)
	\label{eq:FHN-posterior}
	\end{align}
	Importantly, the highly non-linear dependence of the trajectory on the parameters $a$, $b$ and $c$ yields a complex posterior distribution - see Figure \ref{fig:FHN-plot}. Full details of the model set-up can be found in \cite{Ramsay07, Girolami2011}.

	For our experiments, we used fixed parameter settings of $a=0.2, b=0.2, c=3.0$ to generate 200 evenly-spaced noise-corrupted observations over the time interval $t = [0, 20]$ (as in \cite{Ramsay07, Girolami2011}). We performed inference over the posterior distribution of parameters $(a, b, c)$ with this set of observations using both the HMC and MHMC algorithms, which was perturbed with a magnetic field in each of the 3 axial planes of parameters -- along the $ab$, $ac$, and $bc$ axes with magnitude $g=0.1$. The chains were run to generate 1000 samples over 100 repetitions with settings of $\epsilon = 0.015, L=10$, which resulted in an average acceptance rate of $\sim .8$. The effective sample size of each of the chains normalized per unit time was then computed for each chain.
	\begin{wrapfigure}{r}{0.25\textwidth}
		\includegraphics[keepaspectratio, width=0.25\textwidth]{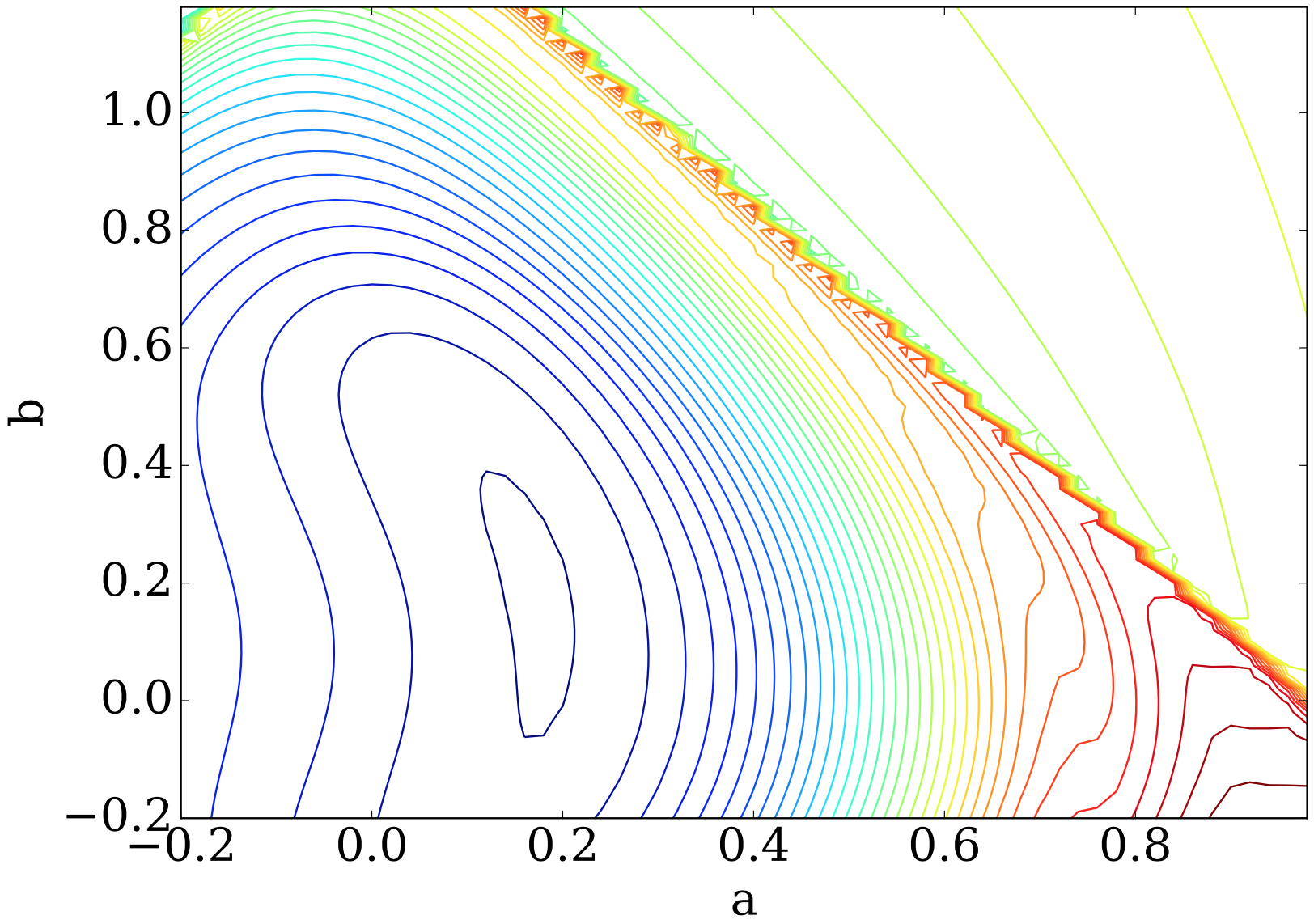}
		\caption{Marginal posterior density contour plot over $a$, $b$, with $c=3$.}
		\label{fig:FHN-plot}
		\vspace{-.5cm}
	\end{wrapfigure}

	Since each query to the posterior log-likelihood or posterior gradient log-likelihood requires solving an augmented set of differential equations as in \eqref{FHN}, the computation time ($\sim 238 \text{s}$) of all the methods was nearly identical.
	\begin{table}[!ht]
		\centering
		\caption{HMC vs. MHMC performance targeting the Fitzhugh-Nagumo posterior parameters}
		\begin{tabular}{cccc} \\\toprule
			algorithm  & ESS &ESS($a$, $b$, $c$)/time (s)  \\ \midrule
			HMC & 1000 318 606 & 4.20,  1.33,   2.55 \\  \midrule
			MHMC ab & 1000 349 658 &  4.20, 1.47, 2.77 \\  \midrule
			MHMC ac & 1000 336 628  &  4.20, 1.42, 2.64 \\  \midrule
			MHMC bc & 1000 326 649  &  4.20, 1.37, 2.73 \\  \midrule
		\end{tabular}
		\label{fig:Fitzhugh}
	\end{table}
	Moreover, note that all methods achieved nearly perfect mixing over the first coordinate so their effective sample size were truncated at 1000 for the $a$ coordinate. In this example, we can see that all magnetic perturbations slightly increase the mixing rate of the sampler over each of the $(b, c)$ coordinates with the $ab$ perturbation performing best.

	\section{Discussion and Conclusion}
	We have investigated a framework for MCMC algorithms based on non-canonical Hamiltonian dynamics and have given a construction for an explicit, symplectic integrator that is used to implement a generalization of HMC we refer to as magnetic HMC. We have also shown several examples where the non-canonical dynamics of MHMC can improve upon the sampling performance of standard HMC. Important directions for further research include finding more automated, adaptive mechanisms to set the matrix $\bfs{G}$, as well as investigating positionally-dependent magnetic field components, similar to how Riemannian HMC corresponds to local preconditioning. We believe that exploiting more general deterministic flows (such as also maintaining a non-zero $\bfs{E}$ in the top left-block of a general $\bfs{A}$ matrix) could form a fruitful area for further research on MCMC methods.

	\section*{Acknowledgements}
	The authors thank John Aston, Adrian Weller, Maria Lomeli, Yarin Gal and the anonymous reviewers for helpful comments. MR acknowledges support by the UK Engineering and Physical Sciences
	Research Council (EPSRC) grant EP/L016516/1 for the University of Cambridge
	Centre for Doctoral Training, the Cambridge Centre for Analysis. RET thanks EPSRC grants EP/M0269571 and EP/L000776/1 as well as Google for funding.

	\bibliography{mhmc}

\begin{thebibliography}{22}
\providecommand{\natexlab}[1]{#1}
\providecommand{\url}[1]{\texttt{#1}}
\expandafter\ifx\csname urlstyle\endcsname\relax
  \providecommand{\doi}[1]{doi: #1}\else
  \providecommand{\doi}{doi: \begingroup \urlstyle{rm}\Url}\fi

\bibitem[Arnold(1989)]{Arnold1989}
Arnold, Vladimir.
\newblock \emph{{Mathematical Methods of Classical Mechanics}}.
\newblock 1989.
\newblock ISBN 0387968903.

\bibitem[Betancourt(2017)]{conceptintrohmc}
Betancourt, Michael.
\newblock A {C}onceptual {I}ntroduction to {H}amiltonian {M}onte {C}arlo.
\newblock \emph{arXiv:1701.02434}, 2017.

\bibitem[Betancourt \& Girolami(2015)Betancourt and
  Girolami]{betancourt2015hamiltonian}
Betancourt, Michael and Girolami, Mark.
\newblock \emph{Hamiltonian Monte Carlo for Hierarchical Models}.
\newblock CRC Press, Boca Raton, FL, 2015.

\bibitem[Borgwardt et~al.(2006)Borgwardt, Gretton, Rasch, Kriegel, Scholkopf,
  and Smola]{Borgwardt2006}
Borgwardt, Karsten~M., Gretton, Arthur, Rasch, Malte~J., Kriegel, Hans~Peter,
  Scholkopf, Bernhard, and Smola, Alex~J.
\newblock {Integrating structured biological data by Kernel Maximum Mean
  Discrepancy}.
\newblock In \emph{Bioinformatics}, volume~22, 2006.

\bibitem[Carpenter et~al.(2016)Carpenter, Gelman, Hoffman, Lee, Goodrich,
  Betancourt, Brubaker, Li, and Riddell]{Carpenter2016}
Carpenter, Bob, Gelman, Andrew, Hoffman, Matt, Lee, Daniel, Goodrich, Ben,
  Betancourt, Michael, Brubaker, Marcus~A, Li, Peter, and Riddell, Allen.
\newblock {Journal of Statistical Software Stan : A Probabilistic Programming
  Language}.
\newblock \emph{Journal of {S}tatistical {S}oftware}, VV\penalty0 (Ii), 2016.

\bibitem[Chen et~al.(1999)Chen, Lov\'{a}sz, and Pak]{Lifting}
Chen, Fang, Lov\'{a}sz, L\'{a}szl\'{o}, and Pak, Igor.
\newblock Lifting markov chains to speed up mixing.
\newblock In \emph{Proceedings of the Thirty-first Annual ACM Symposium on
  Theory of Computing}, STOC '99, pp.\  275--281, New York, NY, USA, 1999. ACM.
\newblock ISBN 1-58113-067-8.
\newblock \doi{10.1145/301250.301315}.

\bibitem[Duane et~al.(1987)Duane, Kennedy, Pendleton, and
  Roweth]{DuaneEtAl1987}
Duane, Simon, Kennedy, A.~D., Pendleton, Brian~J., and Roweth, Duncan.
\newblock Hybrid {M}onte {C}arlo.
\newblock \emph{Physics Letters B}, 195\penalty0 (2):\penalty0 216 -- 222,
  1987.

\bibitem[Girolami \& Calderhead(2011)Girolami and Calderhead]{Girolami2011}
Girolami, Mark and Calderhead, Ben.
\newblock {Riemann manifold Langevin and Hamiltonian Monte Carlo methods}.
\newblock \emph{Journal of the Royal Statistical Society: Series B (Statistical
  Methodology)}, 73\penalty0 (2):\penalty0 123--214, March 2011.

\bibitem[Girolami et~al.(2009)Girolami, Calderhead, and Chin]{Girolami2009}
Girolami, Mark, Calderhead, Ben, and Chin, Siu~A.
\newblock {Riemannian Manifold Hamiltonian Monte Carlo}.
\newblock \emph{Physics}, 73\penalty0 (2):\penalty0 35, 2009.

\bibitem[Green(1995)]{Green1995}
Green, Peter~J.
\newblock {Reversible {J}ump {M}arkov {C}hain {M}onte {C}arlo {C}omputation and
  {B}ayesian {M}odel {D}etermination}.
\newblock \emph{Biometrika}, 82\penalty0 (4):\penalty0 711--732, 1995.

\bibitem[Haario et~al.(1999)Haario, Saksman, and Tamminen]{Haario1999}
Haario, Heikki, Saksman, Eero, and Tamminen, Johanna.
\newblock Adaptive proposal distribution for random walk metropolis algorithm.
\newblock \emph{Computational Statistics}, 14\penalty0 (3), 1999.

\bibitem[Hairer et~al.(2006)Hairer, Hochbruck, Iserles, and Lubich]{Hairer2006}
Hairer, Ernst, Hochbruck, Marlis, Iserles, Arieh, and Lubich, Christian.
\newblock {Geometric Numerical Integration}.
\newblock \emph{Oberwolfach Reports}, pp.\  805--882, 2006.

\bibitem[Hoffman \& Gelman(2014)Hoffman and Gelman]{Hoffman2014}
Hoffman, Matt and Gelman, Andrew.
\newblock {The {N}o-{U}-{T}urn sampler: Adaptively setting path lengths in
  Hamiltonian Monte Carlo}.
\newblock \emph{Journal of Machine Learning Research}, 15\penalty0
  (2008):\penalty0 1--31, 2014.

\bibitem[Ma et~al.(2015)Ma, Chen, and Fox]{Ma2015}
Ma, Yi-an, Chen, Tianqi, and Fox, Emily~B.
\newblock {A Complete Recipe for Stochastic Gradient MCMC}.
\newblock \emph{NIPS}, pp.\  1--19, 2015.

\bibitem[Neal(1996)]{Neal1996}
Neal, Radford.
\newblock \emph{{Bayesian Learning for Neural Networks}}, volume 118.
\newblock 1996.

\bibitem[Neal(2003)]{Neal2003}
Neal, Radford~M.
\newblock {Slice {S}ampling: Rejoinder}, 2003.

\bibitem[Neal(2011)]{Neal2011}
Neal, Radford~M.
\newblock {MCMC using Hamiltonian {D}ynamics}.
\newblock \emph{Handbook of Markov Chain Monte Carlo}, pp.\  113--162, 2011.

\bibitem[Ramsay et~al.(2007)Ramsay, Hooker, Campbell, and Cao]{Ramsay07}
Ramsay, J.~O., Hooker, G., Campbell, D., and Cao, J.
\newblock Parameter estimation for differential equations: A generalized
  smoothing approach.
\newblock \emph{JOURNAL OF THE ROYAL STATISTICAL SOCIETY, SERIES B}, 2007.

\bibitem[Rey-Bellet \& Spiliopoulos(2015)Rey-Bellet and
  Spiliopoulos]{Rey-Bellet2015}
Rey-Bellet, Luc and Spiliopoulos, Konstantinos.
\newblock {Irreversible Langevin {S}amplers and {V}ariance {R}eduction: a
  {L}arge {D}eviations {A}pproach}.
\newblock \emph{Nonlinearity}, 28\penalty0 (7):\penalty0 2081, 2015.

\bibitem[Robert \& Casella(2004)Robert and Casella]{Robert2004}
Robert, Christian~P and Casella, George.
\newblock \emph{{Monte Carlo Statistical Methods}}, volume~95.
\newblock 2004.

\bibitem[Sohl-Dickstein et~al.(2014)Sohl-Dickstein, Mudigonda, and
  DeWeese]{Sohl-Dickstein2014}
Sohl-Dickstein, J, Mudigonda, Mayur, and DeWeese, M.
\newblock {Hamiltonian Monte Carlo Without Detailed Balance}.
\newblock \emph{Proceedings of The 31st International Conference on Machine
  Learning}, 32:\penalty0 9, 2014.

\bibitem[Zhang \& Sutton(2014)Zhang and Sutton]{Zhang2014}
Zhang, Yichuan and Sutton, Charles.
\newblock {Semi-separable Hamiltonian Monte Carlo for {I}nference in Bayesian
  {H}ierarchical {M}odels}.
\newblock In \emph{{Advances in Neural Information Processing Systems}}, pp.\
  10--18, 2014.

\end{thebibliography}
	\bibliographystyle{icml2017}

	\appendix

	\date{}

	%


	\onecolumn
	\appendix

	\setcounter{section}{0}

	\section{Section 3 and 4 Proofs}
	\setcounter{lemma}{0}
	\renewcommand{\thelemma}{\arabic{lemma}}
	Here we provide proofs for results discussed in Section 3 of the main text regarding non-canonical dynamics.
	\begin{lemma}
	\label{lem:nchd-1}
		The map $\bfs{\Phi}_{\tau, H}^{\bfs{A}}(\theta, \bfs{p})$ defined by integrating the non-canonical Hamiltonian system
		\begin{align}
		\rms{\frac{d}{dt}} \begin{bmatrix} \bfs{\theta}(t) \\  \bfs{p}(t) \end{bmatrix}  = \bfs{A} \nabla_{\theta, \bfs{p}} H(\theta(t), \bfs{p}(t)) \label{eq:ncd}
		\end{align}
		with initial conditions $(\theta, \bfs{p})$ for time $\tau$, where $\bfs{A} \in \mathcal{M}_{2n\times2n}$ is \textit{any} invertible, antisymmetric matrix induces a flow on the coordinates $(\bfs{\theta}, \bfs{p})$ that is still \textit{energy-conserving} ($\partial_{\tau} H(\bfs{\Phi}_{\tau, H}^{\bfs{A}}(\theta, \bfs{p})) = 0$) and \textit{symplectic} with respect to $\bfs{A}$ ($[\nabla_{\theta, \bfs{p}} \bfs{\Phi}_{\tau, H}(\theta, \bfs{p})]^\top \bfs{A}^{-1} [\nabla_{\theta, \bfs{p}} \bfs{\Phi}_{\tau, H}(\theta, \bfs{p})] = \bfs{A}^{-1}$) which also implies volume-preservation of the flow.
	\end{lemma}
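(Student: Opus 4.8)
The plan is to verify the three asserted properties in turn, each reducing to a short computation that exploits the antisymmetry of $\bfs{A}$ (and, for the symplecticity claim, the symmetry of the Hessian $\nabla^{2}H$). \textbf{Energy conservation.} Writing $\bfs{z}(t) = \bfs{\Phi}_{t,H}^{\bfs{A}}(\theta,\bfs{p})$ for the trajectory, I would differentiate $H$ along it:
\[
\frac{d}{dt}H(\bfs{z}(t)) \;=\; \nabla H(\bfs{z}(t))^{\top}\dot{\bfs{z}}(t) \;=\; \nabla H(\bfs{z}(t))^{\top}\bfs{A}\,\nabla H(\bfs{z}(t)).
\]
Since $\bfs{A}^{\top}=-\bfs{A}$, the scalar $\bfs{v}^{\top}\bfs{A}\bfs{v}$ equals its own negative for every $\bfs{v}$ and hence vanishes; therefore $H(\bfs{z}(t))$ is constant in $t$, which is exactly $\partial_{\tau}H(\bfs{\Phi}_{\tau,H}^{\bfs{A}}(\theta,\bfs{p}))=0$.

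\textbf{$\bfs{A}$-symplecticity.} Let $\bfs{J}(t) = \nabla_{\theta,\bfs{p}}\bfs{\Phi}_{t,H}^{\bfs{A}}(\theta,\bfs{p})$ be the Jacobian of the flow map with respect to the initial data; by smooth dependence of ODE solutions on initial conditions it satisfies the first-variation equation $\dot{\bfs{J}}(t) = \bfs{A}\,\bfs{S}(t)\,\bfs{J}(t)$ with $\bfs{J}(0)=\bfs{I}$, where $\bfs{S}(t) := \nabla^{2}H(\bfs{z}(t))$ is symmetric. Differentiating the matrix $\bfs{J}(t)^{\top}\bfs{A}^{-1}\bfs{J}(t)$ then gives
\[
\frac{d}{dt}\!\left(\bfs{J}^{\top}\bfs{A}^{-1}\bfs{J}\right) = \bfs{J}^{\top}\bfs{S}\,\bfs{A}^{\top}\bfs{A}^{-1}\bfs{J} + \bfs{J}^{\top}\bfs{A}^{-1}\bfs{A}\,\bfs{S}\,\bfs{J} = -\bfs{J}^{\top}\bfs{S}\bfs{J} + \bfs{J}^{\top}\bfs{S}\bfs{J} = \bfs{0},
\]
using $\bfs{A}^{\top}\bfs{A}^{-1}=-\bfs{I}$ and $\bfs{S}^{\top}=\bfs{S}$. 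Since the derivative vanishes and the value at $t=0$ is $\bfs{I}^{\top}\bfs{A}^{-1}\bfs{I}=\bfs{A}^{-1}$, we conclude $\bfs{J}(\tau)^{\top}\bfs{A}^{-1}\bfs{J}(\tau)=\bfs{A}^{-1}$ for all $\tau$, which is the claimed generalized symplecticity.

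\textbf{Volume preservation.} Taking determinants of the symplecticity identity yields $(\det\bfs{J})^{2}\det(\bfs{A}^{-1}) = \det(\bfs{A}^{-1})$, and since $\bfs{A}$ is invertible we may cancel $\det(\bfs{A}^{-1})\neq 0$ to obtain $(\det\bfs{J})^{2}=1$, i.e.\ $|\det\bfs{J}|=1$. The computations are routine; the only step needing care is the standard ODE-theoretic groundwork — existence, uniqueness, and $C^{1}$ dependence of the flow on initial conditions, so that $\bfs{J}(t)$ is well-defined and solves the variational equation — which I would invoke from classical theory (optionally remarking that when $H$ has compact sublevel sets, energy conservation confines each trajectory to a compact set and hence guarantees global existence of the flow). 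As the surrounding text notes, this argument is simply the coordinate-level incarnation of the standard fact (cf.\ \cite{Arnold1989}) that a Hamiltonian flow preserves the symplectic form that generates it and admits its Hamiltonian as a first integral.
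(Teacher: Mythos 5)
Your proof is correct and follows essentially the same route as the paper's: energy conservation via the vanishing of the quadratic form $\nabla H^{\top}\bfs{A}\nabla H$ for antisymmetric $\bfs{A}$, and symplecticity by showing $\bfs{J}^{\top}\bfs{A}^{-1}\bfs{J}$ is constant along the variational equation. Your explicit determinant argument for volume preservation and the remark on well-posedness of the flow are minor additions the paper leaves implicit, but they do not change the approach.
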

	\begin{proof} The proofs of both results simply uses the antisymmetry of $\bfs{A}$.

		\textbf{Energy-Conservation} -- Simply, we have that:
		\begin{align}
		& \partial_{\tau} H(\bfs{\Phi}_{\tau, H}(\theta, \bfs{p})) = \nabla_{\theta, \bfs{p}} H(\bfs{\Phi}_{\tau, H}(\theta, \bfs{p})) \partial_{\tau} \bfs{\Phi}_{\tau, H}(\theta, \bfs{p}) = \\
		& \nabla_{\theta, \bfs{p}} H(\bfs{\Phi}_{\tau, H}(\theta, \bfs{p}))^\top \bfs{A} \nabla_{\theta, \bfs{p}} H(\bfs{\Phi}_{\tau, H}(\theta, \bfs{p})) = 0
		\end{align}
		using the antisymmetry of $\bfs{A}$ and symmetry of $\nabla_{\theta, \bfs{p}} H(\bfs{\Phi}_{\tau, H}(\theta, \bfs{p})) \nabla_{\theta, \bfs{p}} H(\bfs{\Phi}_{\tau, H}(\theta, \bfs{p}))^\top$.

		\textbf{Symplecticness} -- We must show that the Jacobian of the flow generated by the dynamics preserves the non-canonical structure matrix $\bfs{A}$, which amounts to showing:
		\begin{align}
		\underbrace{[\nabla_{\theta, \bfs{p}} \bfs{\Phi}_{\tau, H}(\theta, \bfs{p})]^\top}_{F(\tau)^\top} \bfs{A}^{-1} \underbrace{[\nabla_{\theta, \bfs{p}} \bfs{\Phi}_{\tau, H}(\theta, \bfs{p})]}_{F(\tau)} = \bfs{A}^{-1}
		\end{align}
		where we define $F(\tau) = \nabla_{\theta, \bfs{p}} \bfs{\Phi}_{\tau, H}(\theta, \bfs{p})$ as the time-evolving Jacobian of the flow. First, note that $F(\tau)$ can be equivalently described as the solution to the differential equation:
		\begin{align}
		\frac{\rms{d}}{\rms{d\tau}} F(\tau) = \bfs{A} \nabla_{\theta, \bfs{p}}  \nabla_{\theta, \bfs{p}} H(\bfs{\Phi}_{\tau, H}(\theta, \bfs{p})) F(\tau)
		\end{align}
		with the initial condition $F(0) = \bfs{I}_{2d}$ (the Jacobian for the identity map at $t=0$). Trivially, we have:
		\begin{align}
		F(0) \bfs{A}^{-1} F(0) = \bfs{A}^{-1}
		\end{align}
		Then note that:
		\begin{align}
		& \frac{\rms{d}}{\rms{d\tau}}( F(\tau)^\top \bfs{A}^{-1} F(\tau) ) = \nonumber \\
		& F(\tau)^\top  \bfs{A}^{-1} \bfs{A} \nabla_{\theta, \bfs{p}}  \nabla_{\theta, \bfs{p}} H(\bfs{\Phi}_{\tau, H}(\theta, \bfs{p})) F(\tau) + F(\tau)^\top \nabla_{\theta, \bfs{p}}  \nabla_{\theta, \bfs{p}} H(\bfs{\Phi}_{\tau, H}(\theta, \bfs{p})) \bfs{A}^\top \bfs{A}^{-1} F(\tau) \nonumber \\
		& = F(\tau)^\top \nabla_{\theta, \bfs{p}}  \nabla_{\theta, \bfs{p}} H(\bfs{\Phi}_{\tau, H}(\theta, \bfs{p})) F(\tau) - F(\tau)^\top \nabla_{\theta, \bfs{p}}  \nabla_{\theta, \bfs{p}} H(\bfs{\Phi}_{\tau, H}(\theta, \bfs{p})) F(\tau) = \bfs{0} \nonumber
		\end{align}
		as desired by simply using $\bfs{A}^\top = - \bfs{A}$.
	\end{proof}

	\textbf{Time-Reversibility} --
	However, crucially it is \textit{not} the case that the Hamilton equations are time-reversible in the traditional sense of canonical Hamiltonian dynamics.

	\begin{lemma}\label{lem:nchd-2}
		If  $(\theta(t), \bfs{p}(t))$ is a solution to the non-canonical dynamics:
		\begin{align}
		\rms{\frac{d}{dt}} \begin{bmatrix} \bfs{\theta}(t) \\  \bfs{p}(t) \end{bmatrix}  = \underbrace{\begin{bmatrix} \topleftmatrix & \offdiagmatrix \\
			-\offdiagmatrix^\top & \bottomrightmatrix \end{bmatrix}}_{\bfs{A}} \begin{bmatrix} \nabla_{\bfs{\theta}} H(\theta(t), \bfs{p}(t)) \\  \nabla_{\bfs{p}} H(\theta(t), \bfs{p}(t))  \end{bmatrix} \label{eq:timereverse}
		\end{align}
		then $(\widetilde{\theta}(t), \widetilde{\bfs{p}}(t)) = (\theta(-t), -\bfs{p}(-t))$ is a solution to the modified non-canonical dynamics:
		\begin{align}
		\rms{\frac{d}{dt}} \begin{bmatrix} \widetilde{\bfs{\theta}}(t) \\  \widetilde{\bfs{p}}(t) \end{bmatrix}  = \underbrace{\begin{bmatrix} -\topleftmatrix & \offdiagmatrix \\
			-\offdiagmatrix^\top & -\bottomrightmatrix \end{bmatrix}}_{\bfs{\widetilde{A}}} \begin{bmatrix} \nabla_{\widetilde{\bfs{\theta}}} H(\widetilde{\theta}(t), \bfs{p}(t)) \\  \nabla_{\widetilde{\bfs{p}}} H(\widetilde{\theta}(t), \widetilde{\bfs{p}}(t)) \end{bmatrix} \label{timereverse2}
		\end{align}
		if $H(\bfs{\theta}, \bfs{p}) = H(\bfs{\theta}, -\bfs{p})$. In particular if $\topleftmatrix=\bottomrightmatrix=0$ then $\bfs{A}=\bfs{\widetilde{A}}$, which reduces to the traditional time-reversal symmetry of canonical Hamiltonian dynamics.
	\end{lemma}

	\begin{proof} A direct calculation yields
		\begin{align}
		\rms{\frac{d}{dt}} \begin{bmatrix} \tilde{\theta}(t) \\  \tilde{\bfs{p}}(t) \end{bmatrix} = \begin{bmatrix}  -\rms{\frac{d}{dt}} \bfs{\theta}(-t) \\   \rms{\frac{d}{dt}} \bfs{p}(-t) \end{bmatrix} = \begin{bmatrix}  -\bfs{E} \nabla_{\bfs{\theta}} H(\theta(-t)) - \bfs{F} \nabla_{\bfs{p}} H(\theta(-t)) \\ -\bfs{F}^\top \nabla_{\bfs{\theta}} H(\theta(-t)) + \bfs{G} \nabla_{\bfs{p}} H(\theta(-t)) \end{bmatrix} \nonumber \\ =
		\begin{bmatrix}  -\bfs{E} \nabla_{\tilde{\theta}} H(\tilde{\theta}(t))+ \bfs{F} \nabla_{\tilde{\bfs{p}}} H(\tilde{\theta}(t)) \\ -\bfs{F}^\top \nabla_{\tilde{\theta}} H(\tilde{\theta}(t))- \bfs{G} \nabla_{\tilde{\bfs{p}}} H(\tilde{\theta}(t)) \end{bmatrix} =  \underbrace{\begin{bmatrix} -\bfs{E} & \bfs{F} \\
			-\bfs{F}^\top & -\bfs{G} \end{bmatrix}}_{\bfs{\tilde{A}}}  \begin{bmatrix} \nabla_{\tilde{\theta}} H(\tilde{\theta}(t)) \\  \nabla_{\tilde{\bfs{p}}} H(\tilde{\theta}(t)) \end{bmatrix} \nonumber
		\end{align}
		\qedhere
	\end{proof}

	\subsection{Non-Canonical Dynamics Variable Augmentation} \label{sec:ncd-varaug}

	As remarked in the main text it is necessary to flip the $\bfs{E}$ and $\bfs{G}$ matrices at the end of a deterministic simulation of the Hamiltonian dynamics in order to render the proposal time-reversible which is in turn necessary to satisfy detailed balance. This is crucial for the correctness of the algorithm especially when an approximate simulation of the dynamics is used (as is always often the case).

	In particular, say that we wish to use $\bfs{\Phi}_{\tau, H}^{\bfs{A}}(\theta, \bfs{p})$ as a transition kernel with fixed, non-zero values of $\bfs{E} = \bfs{E}_0$ and $\bfs{G} = \bfs{G}_0$. We first augment the state-space by placing a symmetric, binary distribution independently over $\bfs{E}$ and $\bfs{G}$ such that $p(\bfs{E} = \bfs{E}_0) = p(\bfs{E} = -\bfs{E}_0)=1/2$ and $p(\bfs{G} = \bfs{G}_0) = p(\bfs{G} = -\bfs{G}_0)=1/2$, independently of $\theta, \bfs{p}$:
	\begin{align}
	\rho(\bfs{\theta}, \bfs{p}, \bfs{E}, \bfs{G}) \propto e^{-H(\theta, \bfs{p})} p(\bfs{E}) p(\bfs{G}). \label{eq:superjoint}
	\end{align}
	Importantly, this augmentation leaves the distribution over $\theta, \bfs{p}$ intact when $\bfs{E}$ and $\bfs{G}$ are marginalized out. Just as applying the momentum flip operator, $\bfs{\Phi}_{\bfs{p}}: (\theta, \bfs{p}, \bfs{E}, \bfs{G}) \to (\theta, -\bfs{p}, \bfs{E}, \bfs{G}) $, is a deterministic, energy-preserving, volume-preserving transformation, the $\bfs{E}$, $\bfs{G}$ flip operators, $\bfs{\Phi}_{\bfs{E}}: (\theta, \bfs{p}, \bfs{E}, \bfs{G}) \to (\theta, \bfs{p}, -\bfs{E}, \bfs{G}) $ and $\bfs{\Phi}_{\bfs{G}} :
	(\theta, \bfs{p}, \bfs{E}, \bfs{G}) \to (\theta, \bfs{p}, \bfs{E}, -\bfs{G})$, are also deterministic, energy-preserving, volume-preserving transformations that leave \eqref{eq:superjoint} invariant for this particular augmentation with $p(\bfs{E})$ and $p(\bfs{G})$. We can now build a self-inverse operator $\bfs{\widetilde{\Phi}}_{\tau, H}^{\bfs{A}}(\theta, \bfs{p})$, composed of simulating the Hamiltonian flow as $\bfs{\Phi}_{\tau, H}^{\bfs{A}}(\theta, \bfs{p})$ plus $ \bfs{\Phi}_{\bfs{E}} \circ \bfs{\Phi}_{\bfs{G}} \circ \bfs{\Phi}_{\bfs{p}}$, a flip of $\bfs{p}$, $\bfs{E}$, $\bfs{G}$, as:
	\begin{align}
	\bfs{\widetilde{\Phi}}_{\tau, H}^{\bfs{A}}(\theta, \bfs{p}) = \bfs{\Phi}_{\bfs{E}} \circ \bfs{\Phi}_{\bfs{G}} \circ \bfs{\Phi}_{\bfs{p}} \circ \bfs{\Phi}_{\tau, H}^{\bfs{A}}(\theta, \bfs{p})
	\end{align}
	Now we have constructed a deterministic, self-inverse map $\bfs{\widetilde{\Phi}}_{\tau, H}^{\bfs{A}}(\theta, \bfs{p})$. $\bfs{\widetilde{\Phi}}_{\tau, H}^{\bfs{A}}(\theta, \bfs{p})$ can now be used as the proposal for a reversible MCMC algorithm.

	An important point to note is that our choice of variable augmentation strategy, namely augmenting with binary distribution, is certainly not unique. However, it is perhaps the most natural and simplest choice which avoids the repetitive computation of matrix exponentials/diagonalizations since the approximate flow detailed in Section \ref{sec:sympintmag} will only need to compute matrix exponentials once upfront for $\pm \bfs{G}$.

	\subsection{Mass Preconditioning Proofs}  \label{sec:masspre}
	A common variation on standard HMC dynamics is to set the kinetic energy term in the Hamiltonian $H(\theta, \bfs{p})$ to $\frac{1}{2}\bfs{p}^\top \bfs{M}^{-1} \bfs{p}$ for some symmetric positive-definite matrix $\bfs{M}$, and sample the initial momentum variable $\bfs{p}$ from the corresponding distribution $\mathcal{N}(0,\bfs{M})$. However, we can contextualize preconditioning using a non-canonical $\bfs{A}$ matrix in the following manner:
	\begin{lemma}\label{lem:preconditioned}
		i) Preconditioned HMC with momentum variable $\bfs{p} \sim \mathcal{N}(0, \bfs{M})$ in the $(\theta, \bfs{p})$ coordinates, is exactly equivalent to simulating non-canonical HMC with $\bfs{p}' = \bfs{M}^{-1/2} \bfs{p} \sim \mathcal{N}(0, \bfs{I})$ and the non-canonical matrix:
		\[
		\bfs{A} = \begin{bmatrix} \bfs{0} & \bfs{M}^{1/2}  \\ -(\bfs{M}^{1/2})^\top & \bfs{0} \end{bmatrix}
		\]
		and then transforming back to $(\theta, \bfs{p})$ coordinates using $\bfs{p} = \bfs{M}^{1/2} \bfs{p}'$. Here $\bfs{M}^{1/2}$ is a Cholesky factor for $\bfs{M}$.

		ii) On the other hand if we apply a change of basis (via an invertible matrix $\bfs{F}$) to our coordinates $\theta' = \bfs{F}^{-1} \theta$, simulate HMC in the $(\theta', \bfs{p})$ coordinates, and transform back to the original basis using $\bfs{F}$, this is exactly equivalent to non-canonical HMC with
		\[
		\bfs{A} = \begin{bmatrix} \bfs{0} & \bfs{F} \\ -\bfs{F}^\top & \bfs{0}  \end{bmatrix}
		\]
	\end{lemma}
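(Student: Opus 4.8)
The plan is to derive both parts from a single observation about linear changes of phase-space coordinates. Write $x=(\bfs{\theta},\bfs{p})$ and $\bfs{J}=\begin{bmatrix}\bfs{0}&\bfs{I}\\-\bfs{I}&\bfs{0}\end{bmatrix}$, so that ordinary HMC for a Hamiltonian $H$ is the canonical system $\dot x=\bfs{J}\,\nabla_x H(x)$. For an invertible $\bfs{T}\in\mM_{2n\times 2n}$, the chain-rule identity $\nabla_y(H\circ\bfs{T})(y)=\bfs{T}^{\top}\nabla H(\bfs{T}y)$ shows that a curve $x(\cdot)$ solves $\dot x=(\bfs{T}\bfs{J}\bfs{T}^{\top})\nabla_x H(x)$ if and only if $y(\cdot):=\bfs{T}^{-1}x(\cdot)$ solves the canonical system $\dot y=\bfs{J}\,\nabla_y(H\circ\bfs{T})(y)$. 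Since $\bfs{T}\bfs{J}\bfs{T}^{\top}$ is invertible and antisymmetric, this says (via Lemma~\ref{lem:nchd-1}): non-canonical HMC with structure matrix $\bfs{A}=\bfs{T}\bfs{J}\bfs{T}^{\top}$ and Hamiltonian $H$ is literally the same dynamical system as ordinary HMC with Hamiltonian $H\circ\bfs{T}$, viewed in the coordinates $y=\bfs{T}^{-1}x$. Consequently, ``transform by $\bfs{T}^{-1}$, run ordinary HMC for $H\circ\bfs{T}$, transform back by $\bfs{T}$'' is exactly ``run non-canonical HMC for $H$ with $\bfs{A}=\bfs{T}\bfs{J}\bfs{T}^{\top}$'' --- which is the shape of both claimed equivalences.

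Both parts are now instances. For (ii) I would take $\bfs{T}$ to rescale the position block, $\bfs{\theta}\leftrightarrow\offdiagmatrix\bfs{\theta}'$ (with $\bfs{p}$ fixed); then $\bfs{T}^{-1}x=(\offdiagmatrix^{-1}\bfs{\theta},\bfs{p})$ is the rescaled chart $(\bfs{\theta}',\bfs{p})$, $H\circ\bfs{T}$ is the Hamiltonian $U(\offdiagmatrix\bfs{\theta}')+\tfrac12\bfs{p}^{\top}\bfs{p}$ one must use to run HMC in that chart so as to still target $\rho$ (the constant Jacobian $|\det\offdiagmatrix|$ being irrelevant), and $\bfs{T}\bfs{J}\bfs{T}^{\top}=\begin{bmatrix}\bfs{0}&\offdiagmatrix\\-\offdiagmatrix^{\top}&\bfs{0}\end{bmatrix}$. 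For (i) I would take $\bfs{T}$ to rescale the momentum block, $\bfs{p}\leftrightarrow\bfs{M}^{1/2}\bfs{p}'$ (with $\bfs{\theta}$ fixed); then $\bfs{p}'=\bfs{M}^{-1/2}\bfs{p}\sim\mathcal{N}(\bfs{0},\bfs{I})$ exactly when $\bfs{p}\sim\mathcal{N}(\bfs{0},\bfs{M})$, the transformed kinetic term is $\tfrac12\bfs{p}^{\top}\bfs{M}^{-1}\bfs{p}$ (the preconditioned Hamiltonian) by the Cholesky identity $\bfs{M}=\bfs{M}^{1/2}(\bfs{M}^{1/2})^{\top}$, and a short computation of $\bfs{T}\bfs{J}\bfs{T}^{\top}$ returns the stated block matrix. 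The only spot requiring care is keeping the transposes straight, so that $\bfs{T}\bfs{J}\bfs{T}^{\top}$ matches the advertised antisymmetric $\bfs{A}$ exactly --- which of the two factors of $\offdiagmatrix$, resp.\ $\bfs{M}^{1/2}$, carries the transpose is the whole content of the computation.

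Finally I would promote the equality of deterministic flows to equality of the full MH/HMC transition kernels, which is what ``exactly equivalent'' means: conjugation by $\bfs{T}$ intertwines each ingredient of the sampler. Momentum resampling transports ($\mathcal{N}(\bfs{0},\bfs{M})$ in one chart is the push-forward of $\mathcal{N}(\bfs{0},\bfs{I})$ in the other; in (ii) the momentum is untouched); the momentum-flip operator commutes with the block-diagonal $\bfs{T}$; the Metropolis acceptance ratio is unchanged, since $H$ and $H\circ\bfs{T}$ agree at corresponding points and the constant Jacobian $|\det\bfs{T}|$ cancels between numerator and denominator; and each leapfrog sub-step, being itself an exactly-integrated sub-flow, is conjugated just like the full flow, so the discretized algorithms match too. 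Chaining these observations shows the kernels on $(\bfs{\theta},\bfs{p})$, and hence (marginalizing the momentum) on $\bfs{\theta}$, coincide. I expect this last step to be the real obstacle: not any single computation, but confirming that \emph{every} component --- resampling, leapfrog flow, flip, accept/reject --- transports consistently, so the equivalence holds for the whole sampler rather than merely for the continuous-time trajectory.
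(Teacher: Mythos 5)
Your proposal is correct and reaches the same conclusion as the paper, but it is organized differently and more generally. The paper proves each part by a bare-hands change of variables on the ODEs: for (i) it writes down the preconditioned flow, substitutes $\bfs{p}'=\bfs{M}^{-1/2}\bfs{p}$, and reads off the block matrix; for (ii) it transforms the potential to $U(\bfs{F}\theta')$, runs canonical dynamics, and maps back. You instead isolate the single conjugation identity --- $x$ solves $\dot x=(\bfs{T}\bfs{J}\bfs{T}^{\top})\nabla H(x)$ iff $\bfs{T}^{-1}x$ solves the canonical system for $H\circ\bfs{T}$ --- and obtain both parts as instances with block-diagonal $\bfs{T}$. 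This buys uniformity (the two parts stop looking like separate facts) and it cleanly explains \emph{why} the resulting structure matrix is automatically invertible and antisymmetric. Your final paragraph, promoting the equality of flows to equality of the full transition kernels (resampling, leapfrog sub-steps, flips, acceptance ratio), goes beyond what the paper actually proves --- the paper's proof stops at the continuous-time trajectories --- and is a genuine strengthening rather than a gap.

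One caveat on part (i), which is exactly the transpose/inverse bookkeeping you flagged. Because in (i) the non-canonical system lives in the \emph{new} coordinates $(\theta,\bfs{p}')$ while the canonical (preconditioned) system lives in the \emph{original} coordinates $(\theta,\bfs{p})$ --- the reverse of the situation in (ii) --- the structure matrix you need is $\bfs{T}^{-1}\bfs{J}\bfs{T}^{-\top}$ with $\bfs{T}=\operatorname{diag}(\bfs{I},\bfs{M}^{1/2})$, not $\bfs{T}\bfs{J}\bfs{T}^{\top}$. This produces off-diagonal blocks built from a Cholesky factor of $\bfs{M}^{-1}$ (as in the paper's own appendix derivation, which obtains $(\bfs{M}^{-1/2})^{\top}$ in the upper-right block), not from $\bfs{M}^{1/2}$ as the lemma statement advertises; the statement and the paper's proof are in fact not consistent with each other on this point, so your looseness here mirrors the source's. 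The substance of the equivalence is unaffected, but if you carry out ``the short computation,'' make sure you conjugate in the correct direction for each part.
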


	\begin{proof}
		We first prove the equivalence regarding the change of basis in momentum space. Under the $\bfs{M}$ mass matrix variant of HMC, $\bfs{p}$ is drawn from a $\mathcal{N}(\bfs{0}, \bfs{M})$ distribution, and the dynamics of $\theta, \bfs{p}$ are then given by
		\[
		\rms{\frac{d}{dt}} \begin{bmatrix} \bfs{\theta} \\ \bfs{p} \end{bmatrix} = \begin{bmatrix} \bfs{M}^{-1}\bfs{p}  \\  -\nabla_\theta U(\theta) \end{bmatrix}
		\]
		Denoting the upper-triangular Cholesky factor of $\bfs{M}^{-1}$ by $\bfs{M}^{-1/2}$, and introducing a new variable $\bfs{p}^\prime = \bfs{M}^{-1/2} \bfs{p}$, we obtain the following dynamics for the joint variable $(\theta, \bfs{p}^\prime)$:
		\[
		\rms{\frac{d}{dt}} \begin{bmatrix} \bfs{\theta} \\ \bfs{p}^\prime \end{bmatrix} = \rms{\frac{d}{dt}} \begin{bmatrix} \bfs{\theta} \\ \bfs{M}^{-1/2}\bfs{p} \end{bmatrix} = \begin{bmatrix} (\bfs{M}^{-1/2})^\top\bfs{p}^\prime  \\  -\bfs{M}^{-1/2} \nabla_\theta U(\theta) \end{bmatrix} = \begin{bmatrix} \bfs{0} & (\bfs{M}^{-1/2})^\top \\ -\bfs{M}^{-1/2} & \bfs{0} \end{bmatrix} \begin{bmatrix} \nabla_\theta U(\theta) \\ \bfs{p}^\prime  \end{bmatrix}
		\]
		Further, note that if the marginal distribution of $\bfs{p}$ is $\mathcal{N}(0, \bfs{M})$, then under this change of variables $\bfs{p}^\prime$ has the marginal distribution $\mathcal{N}(0, \bfs{I})$. Thus, simulating canonical HMC with a non-identity mass matrix is equivalent to making a change of basis in momentum space, simulating non-canonical HMC with a particular choice of non-canonical $\bfs{A}$ matrix, and finally reverting back to the original basis.

		We now prove the equivalence regarding the change of basis in $\theta$ space, which follows similarly. Consider non-canonical HMC on the state-momentum pair $(\theta, \bfs{p})$, with the antisymmetric matrix $\bfs{A}$ taking the particular form
		\[
		\bfs{A} = \begin{pmatrix} \bfs{0} & \bfs{F} \\ -\bfs{F}^\top & \bfs{0} \end{pmatrix}
		\]
		The states $\theta, \bfs{p}$ obtained from this algorithm are equal to those obtained by first changing basis to $\theta'= \bfs{F}^{-1} \theta$, then simulating standard HMC dynamics for the pair $(\theta', \bfs{p})$ with respect to the Hamiltonian
		\begin{align*}
		H'(\theta', \bfs{p}) & = U'(\theta') + \frac{1}{2}\bfs{p}^\top \bfs{p} \\
		& = U(\bfs{F} \theta) + \frac{1}{2}\bfs{p}^\top \bfs{p}
		\end{align*}
		and then reverting to the original basis as $\theta = \bfs{F} \theta^\prime$. To see this, first note that if we denote the distribution on the coordinates $\theta$ corresponding to the potential $U$ by $\pi(\theta) = e^{-U(\theta)}$, then the corresponding distribution on the coordinates $\theta'$ is given by $\pi'$, where
		\[
		\pi'(\theta') = \mathrm{det}(\bfs{F}) \pi(\bfs{F}\theta')
		\]
		The corresponding potential $U'$ is therefore given by
		\[
		U'(\theta') = U(\bfs{F}\theta')
		\]
		Running canonical HMC dynamics targeting the Hamiltonian $H'$ yields the dynamics:
		\[
		\rms{\frac{d}{dt}} \begin{bmatrix} \theta' \\ \bfs{p} \end{bmatrix} = \begin{bmatrix} \bfs{p}  \\  -\nabla_{\theta'} U(\theta') \end{bmatrix}
		\]
		But note then that the dynamics of the original coordinates are then given by:
		\begin{align}
		& \rms{\frac{d}{dt}} \begin{bmatrix} \theta \\ \bfs{p} \end{bmatrix} = \rms{\frac{d}{dt}} \begin{bmatrix} \bfs{F} \theta' \\ \bfs{p} \end{bmatrix} = \begin{bmatrix} \bfs{F} \bfs{p}  \\  -\nabla_{\theta'} U(\theta') \end{bmatrix} = \begin{bmatrix} \bfs{F} \bfs{p}  \\  -\nabla_{\theta'} U(\bfs{F} \theta) \end{bmatrix} = \begin{bmatrix} \bfs{F} \bfs{p}  \\  -(\bfs{F}^\top) \nabla_{\theta} U(\theta)   \end{bmatrix} \nonumber \\
		& = \begin{bmatrix} \bfs{0} & \bfs{F} \\ -\bfs{F}^\top & \bfs{0} \end{bmatrix} \begin{bmatrix} \nabla_{\theta} U(\theta) \\ \bfs{p} \end{bmatrix} \nonumber
		\end{align}
		which are exactly a special case of non-canonical HMC dynamics described above.
		\qedhere
	\end{proof}

	\section{Magnetic HMC (MHMC)}
	Here we provide proofs related to the dynamics of magnetic HMC and it's symplectic integration scheme.
	\subsection{Non-Canonical Dynamics and Magnetism}
	We first establish the connection between the particular subcase of non-canonical Hamiltonian dynamics where
	\[
	\bfs{A} =
	\begin{bmatrix} \bfs{0} & \bfs{I} \\
	-\bfs{I} & \bfs{G} \end{bmatrix}
	\]

	and Newton's law for a charged particle coupled to a magnetic field.
	\begin{lemma}\label{lem:magnetic}
		In 3-dimensions the non-canonical Hamiltonian dynamics, with Hamiltonian $H(\theta, \bfs{p}) = U(\bfs{\theta}) + \frac{1}{2} \bfs{p}^\top \bfs{p}$, correspond to simulating the differential equations:
		\begin{align}
		\rms{\frac{d}{dt}} \begin{bmatrix} \bfs{\theta} \\  \bfs{p} \end{bmatrix}  = \underbrace{\begin{bmatrix} \bfs{0} & \bfs{I} \\
			-\bfs{I} & \bfs{G} \end{bmatrix}}_{\bfs{A}} \begin{bmatrix} \nabla_{\bfs{\theta}} H \\  \nabla_{\bfs{p}} H  \end{bmatrix} \equiv  \underbrace{\begin{bmatrix} 0 & \bfs{I} \\
			-\bfs{I} & \bfs{G} \end{bmatrix}}_{\bfs{A}} \begin{bmatrix} \nabla_{\bfs{\theta}} U(\bfs{\theta}) \\  \bfs{p} \end{bmatrix}
		\label{eq:magnetic3d}
		\end{align}
		where
		\[
		\bfs{G} = \begin{bmatrix} 0 & -b_3 & b_2 \\ b_3 & 0 & -b_1 \\ -b_2 & b_1 & 0 \end{bmatrix}
		\]
		are equivalent to the Newtonian mechanics of a charged particle (with unit mass and charge) coupled to a magnetic field $\vec{\bfs{B}} = \begin{bmatrix} b_1  \\ b_2 \\ b_3  \end{bmatrix}$ which take the form:
		\begin{align}
		\frac{\rms{d}^2 \theta}{\rms{dt^2}} = -\nabla_{\bfs{\theta}}U(\bfs{\theta})+ \frac{\rms{d} \theta}{\rms{dt}} \times \vec{\bfs{B}} &
		\end{align}
		where $\theta$ is simply a 3-dimensional vector and $\times$ the cross-product.
	\end{lemma}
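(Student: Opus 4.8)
The plan is to collapse the first-order non-canonical system \eqref{eq:magnetic3d} into a single second-order equation for $\bfs{\theta}$, and then to recognise the extra force term as a Lorentz force via the standard identification of antisymmetric $3\times 3$ matrices with vectors in $\IR^3$.

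First I would expand \eqref{eq:magnetic3d} into components. Since $H(\bfs{\theta},\bfs{p}) = U(\bfs{\theta}) + \tfrac{1}{2}\bfs{p}^\top\bfs{p}$, we have $\nabla_{\bfs{\theta}}H = \nabla_{\bfs{\theta}}U(\bfs{\theta})$ and $\nabla_{\bfs{p}}H = \bfs{p}$, so the system reads $\dot{\bfs{\theta}} = \bfs{p}$ and $\dot{\bfs{p}} = -\nabla_{\bfs{\theta}}U(\bfs{\theta}) + \bfs{G}\bfs{p}$. Differentiating the first equation in $t$ and substituting the second gives
\[
\ddot{\bfs{\theta}} = \dot{\bfs{p}} = -\nabla_{\bfs{\theta}}U(\bfs{\theta}) + \bfs{G}\dot{\bfs{\theta}}.
\]
It is precisely here that separability of $H$ with a Euclidean kinetic term is used: it is what makes momentum equal velocity, so that the first-order $(\bfs{\theta},\bfs{p})$ picture and the second-order $\bfs{\theta}$ picture carry the same information.

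The only genuine content of the lemma is then the elementary linear-algebra identity that, with $\bfs{G}$ as displayed in the statement, $\bfs{G}\bfs{v} = \vec{\bfs{B}}\times\bfs{v}$ for every $\bfs{v}\in\IR^3$, where $\vec{\bfs{B}} = (b_1,b_2,b_3)^\top$; equivalently, $\bfs{G}$ is the image of $\vec{\bfs{B}}$ under the ``hat'' isomorphism between the three-dimensional space of antisymmetric $3\times 3$ matrices and $\IR^3$. I would verify this by a one-line componentwise comparison of $(\bfs{G}\bfs{v})_i$ with $(\vec{\bfs{B}}\times\bfs{v})_i$, noting that this correspondence is a linear bijection, so every magnetic field arises from some such $\bfs{G}$ and conversely. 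Substituting the identity into the displayed second-order equation yields $\ddot{\bfs{\theta}} = -\nabla_{\bfs{\theta}}U(\bfs{\theta}) + \dot{\bfs{\theta}}\times\vec{\bfs{B}}$ (with the orientation/sign of $\vec{\bfs{B}}$ relative to $\bfs{G}$ fixed once a convention for the cross product is adopted), which is Newton's second law for a particle of unit mass and unit charge in the potential $U$ and magnetic field $\vec{\bfs{B}}$. The converse direction is the same chain of equalities traversed backwards: starting from the Newtonian equation one sets $\bfs{p} := \dot{\bfs{\theta}}$, rewrites $\dot{\bfs{\theta}}\times\vec{\bfs{B}}$ as $\bfs{G}\bfs{p}$, and reassembles the terms into the block-matrix form \eqref{eq:magnetic3d}.

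I do not expect a real obstacle here: the argument is a short, direct computation. The only thing requiring care is bookkeeping — keeping the correspondence $\bfs{G}\leftrightarrow\vec{\bfs{B}}$ and its attendant sign convention consistent across both directions of the equivalence, and being explicit that the passage between the first- and second-order systems hinges on the Hamiltonian being separable with the standard quadratic kinetic energy.
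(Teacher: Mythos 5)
Your proposal is correct and is essentially the paper's own argument run in the opposite direction: the paper starts from Newton's law, sets $\bfs{p}=\dot{\bfs{\theta}}$, and reassembles the block system, while you collapse the first-order system to $\ddot{\bfs{\theta}}=-\nabla_{\bfs{\theta}}U+\bfs{G}\dot{\bfs{\theta}}$ and invoke the same hat-map identity between $\bfs{G}$ and $\vec{\bfs{B}}$; the two directions are the same chain of equalities. Your explicit caveat about the sign convention is well taken, since with the displayed $\bfs{G}$ one has $\bfs{G}\bfs{v}=\vec{\bfs{B}}\times\bfs{v}=-\bfs{v}\times\vec{\bfs{B}}$, a discrepancy the paper's proof silently absorbs into the orientation of $\vec{\bfs{B}}$.
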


	\begin{proof}
		If we let $\theta$ and $\bfs{p}$ denote our position and momentum coordinates in 3 dimensions then Newton's law for a charged particle in a magnetic field (with $m=q=1$) is:
		\begin{align}
		\frac{\rms{d}^2 \theta}{\rms{dt^2}} = -\nabla_{\theta} U(\theta) + \frac{\rms{d} \theta}{\rms{dt}} \times \vec{\bfs{B}} &
		\end{align}
		Defining momentum canonically as $\frac{\rms{d} \theta}{\rms{dt}} = \bfs{p}$ we have:
		\begin{align}
		\rms{\frac{d}{dt}} \begin{bmatrix} \theta \\  \bfs{p} \end{bmatrix}  = \begin{bmatrix} \bfs{p} \\  -\nabla_{\theta} U(\theta) + \frac{\rms{d} \theta}{\rms{dt}} \times \vec{\bfs{B}} \end{bmatrix} = \begin{bmatrix} \bfs{p} \\  -\nabla_{\theta} U(\theta) + \bfs{G} \bfs{p} \end{bmatrix}
		\equiv  \underbrace{\begin{bmatrix} \bfs{0} & \bfs{I} \\
			-\bfs{I} & \bfs{G} \end{bmatrix}}_{\bfs{A}} \begin{bmatrix} \nabla_{\theta} U(\theta) \\  \bfs{p} \end{bmatrix}
		\label{eq:magnetic3d}
		\end{align}
	\end{proof}

	We now show that the dynamics used in magnetic HMC cannot be reproduced by simply choosing a different smooth Hamiltonian, $H'(\theta, \bfs{p})$ and using the canonical $\bfs{A}$ matrix:
	\[
	\bfs{A} =
	\begin{bmatrix} \bfs{0} & \bfs{I} \\
	-\bfs{I} & \bfs{0} \end{bmatrix}
	\]
	to generate the dynamics.
	\begin{lemma}\label{lem:impossible}
		The non-canonical Hamiltonian dynamics with magnetic $\bfs{A}$ and Hamiltonian $H(\bfs{\theta}, \bfs{p}) =  U(\bfs{\theta}) + \frac{1}{2} \bfs{p}^\top \bfs{p}$ cannot be obtained using canonical Hamiltonian dynamics for any choice of smooth Hamiltonian.
	\end{lemma}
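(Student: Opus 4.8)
The plan is to argue by contradiction, working directly in the $(\bfs{\theta}, \bfs{p})$ coordinates and spelling out what ``obtainable from a canonical Hamiltonian'' would mean: there is a smooth $H'(\bfs{\theta}, \bfs{p})$ whose canonical equations of motion $\dot{\bfs{\theta}} = \nabla_{\bfs{p}} H'$, $\dot{\bfs{p}} = -\nabla_{\bfs{\theta}} H'$ reproduce the magnetic vector field of Lemma~\ref{lem:magnetic}, i.e.
\[
\nabla_{\bfs{p}} H'(\bfs{\theta}, \bfs{p}) = \bfs{p}, \qquad -\nabla_{\bfs{\theta}} H'(\bfs{\theta}, \bfs{p}) = -\nabla_{\bfs{\theta}} U(\bfs{\theta}) + \bfs{G}\bfs{p}.
\]
The first system says $\partial_{p_i} H' = p_i$ for every $i$, so integrating gives $H'(\bfs{\theta}, \bfs{p}) = \tfrac12 \bfs{p}^\top \bfs{p} + c(\bfs{\theta})$ for some smooth function $c$ of $\bfs{\theta}$ alone. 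Then $\nabla_{\bfs{\theta}} H' = \nabla c(\bfs{\theta})$ is independent of $\bfs{p}$, whereas the second system forces $\nabla_{\bfs{\theta}} H' = \nabla_{\bfs{\theta}} U(\bfs{\theta}) - \bfs{G}\bfs{p}$, which depends nontrivially on $\bfs{p}$ whenever $\bfs{G} \neq \bfs{0}$ -- the contradiction.

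Equivalently, and a bit cleaner to present, I would invoke equality of mixed partial derivatives (valid since $H'$ is $C^2$): from $\partial_{p_j} H' = p_j$ we get $\partial_{\theta_i}\partial_{p_j} H' = 0$, while rearranging the $\bfs{p}$-equation gives $\partial_{\theta_i} H' = \partial_{\theta_i} U - \sum_k G_{ik} p_k$ and hence $\partial_{p_j}\partial_{\theta_i} H' = -G_{ij}$. Clairaut's theorem then yields $G_{ij} = 0$ for all $i,j$, contradicting that $\bfs{G}$ is a nonzero antisymmetric matrix (nonzero is precisely the magnetic case of interest; the degenerate case $\bfs{G}=\bfs{0}$ is just ordinary HMC). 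This packages the whole argument into two lines.

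I would also add a short remark on why no cheaper obstruction works, to pre-empt the natural objection: one might hope to rule out $H'$ by a volume or divergence argument, but the magnetic vector field is divergence-free -- its divergence equals $\operatorname{tr}\bfs{G} = 0$, consistent with the volume-preservation in Lemma~\ref{lem:nchd-1} -- so the obstruction is genuinely that the one-form obtained by contracting the magnetic vector field with the canonical symplectic form fails to be closed, which is exactly what the $G_{ij}$ computation above detects. I do not expect a substantive obstacle here beyond bookkeeping; the only point worth stating carefully is the \emph{scope} of the claim: it asserts that the magnetic vector field cannot be matched in the given $(\bfs{\theta},\bfs{p})$ coordinates by any canonical Hamiltonian, and it does not preclude passing to some other system via a non-symplectic change of variables -- which is precisely what Lemma~\ref{lem:preconditioned} does for the mass-preconditioning and change-of-basis special cases. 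Making this distinction explicit is the main thing to get right in the write-up.
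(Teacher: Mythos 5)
Your proof is correct and uses essentially the same argument as the paper: assume a smooth canonical Hamiltonian $H'$ reproduces the magnetic vector field, deduce $\nabla_{\bfs{p}} H' = \bfs{p}$ and $\nabla_{\bfs{\theta}} H' = \nabla_{\bfs{\theta}} U - \bfs{G}\bfs{p}$, and obtain the contradiction $\bfs{G} = \bfs{0}$ from equality of mixed second partial derivatives. The additional remarks on divergence-freeness and on the scope of the claim are sensible but go beyond what the paper's proof contains.
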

	\begin{proof}
		Consider the ODEs corresponding to non-canonical dynamics with magnetic $\bfs{A}$ and $H(\theta, \bfs{p}) =  U(\bfs{\theta}) + \frac{1}{2}\bfs{p}^\top \bfs{p}$:
		\begin{align}
		\rms{\frac{d}{dt}} \begin{bmatrix} \bfs{\theta} \\  \bfs{p} \end{bmatrix}  = \begin{bmatrix} \bfs{0} & \bfs{I} \\
		\bfs{-I} & \bfs{G} \end{bmatrix} \begin{bmatrix} \nabla_{\theta} U(\theta) \\  \bfs{p} \end{bmatrix} = \begin{bmatrix} \bfs{p}  \\ -\nabla_{\theta} U(\theta) + \bfs{G} \bfs{p} \end{bmatrix}.
		\end{align}
		Assume, to obtain a contradiction, that these canonical Hamiltonian dynamics can be reproduced for some choice of smooth $H'(\theta, \bfs{p})$ and canonical $\bfs{A}$ matrix (i.e. $\bfs{E} = \bfs{G} = \bfs{0}$, $\bfs{F} = \bfs{I}$):
		\begin{align}
		\rms{\frac{d}{dt}} \begin{bmatrix} \bfs{\theta} \\  \bfs{p} \end{bmatrix}  = \begin{bmatrix} \bfs{0} & \bfs{I} \\
		\bfs{-I} & \bfs{0} \end{bmatrix} \begin{bmatrix} \nabla_{\theta} H'(\theta, \bfs{p}) \\  \nabla_{\bfs{p}} H'(\theta, \bfs{p}) \end{bmatrix} = \begin{bmatrix} \bfs{p}  \\ -\nabla_{\theta} U(\theta) + \bfs{G} \bfs{p} \end{bmatrix}.
		\end{align}
		This implies:
		\begin{align}
		\begin{bmatrix} \nabla_{\bfs{p}} H'(\theta, \bfs{p}) \\  \nabla_{\theta} H'(\theta, \bfs{p}) \end{bmatrix} = \begin{bmatrix} \bfs{p} \\  \nabla_{\theta} U(\theta) - \bfs{G} \bfs{p}
		\end{bmatrix}
		\implies
		\begin{bmatrix} \nabla_{\theta} \nabla_{\bfs{p}} H'(\theta, \bfs{p}) \\  \nabla_{\bfs{p}} \nabla_{\theta} H'(\theta, \bfs{p}) \end{bmatrix} = \begin{bmatrix} \bfs{0} \\ -\bfs{G} \end{bmatrix}.
		\end{align}
		However, as long as the 2nd-order mixed partial derivatives are continuous they must be equal; so the conclusion follows.
	\end{proof}
	\subsection{Symplectic Integrator for Magnetic Dynamics} \label{sec:sympintmag}
	We begin by considering the symmetric splitting:
	\begin{align}
	H(\theta, \bfs{p}) = \underbrace{U(\bfs{\theta})/2}_{H_1(\rms{\theta})} + \underbrace{\bfs{p}^T\bfs{p}/2}_{H_2(\bfs{p})} +  \underbrace{U(\bfs{\theta})/2}_{H_1(\rms{\theta})} \label{eq:split}
	\end{align}
	The corresponding non-canonical dynamics for the sub-Hamiltonians $H_1(\theta)$ and $H_2(\bfs{p})$ are:
	\begin{align}
	\rms{\frac{d}{dt}} \begin{bmatrix} \bfs{\theta} \\  \bfs{p} \end{bmatrix}  = \underbrace{\begin{bmatrix} \bfs{E} & \bfs{F} \\
		\bfs{-F}^\top & \bfs{G} \end{bmatrix}}_{\bfs{A}} \begin{bmatrix} \nabla_{\bfs{\theta}}U(\bfs{\theta})/2 \\  \bfs{0} \end{bmatrix} = \begin{bmatrix} \bfs{E} \nabla_{\bfs{\theta}}U(\bfs{\theta})/2  \\  \bfs{-F}^\top \nabla_{\bfs{\theta}}U(\bfs{\theta})/2 \end{bmatrix} \label{eq:nchd-flow1}
	\end{align}
	and
	\begin{align}
	\rms{\frac{d}{dt}} \begin{bmatrix} \bfs{\theta} \\  \bfs{p} \end{bmatrix}  = \underbrace{\begin{bmatrix} \bfs{E} & \bfs{F} \\
		\bfs{-F}^\top & \bfs{G} \end{bmatrix}}_{\bfs{A}} \begin{bmatrix} \bfs{0} \\  \bfs{p} \end{bmatrix} = \begin{bmatrix} \bfs{F} \bfs{p}  \\  \bfs{G} \bfs{p} \end{bmatrix}. \label{eq:nchd-flow2}
	\end{align}
	We denote the corresponding flows by $\bfs{\Phi}_{\epsilon, H_1(\theta)}^\bfs{A}$ and $\bfs{\Phi}_{\epsilon, H_2(\bfs{p})}^\bfs{A}$ respectively. The flow in \eqref{eq:nchd-flow1} is generally not explicitly tractable unless we take $\bfs{E}= 0$ -- in which case it is solved by an Euler translation as for standard Hamiltonian dynamics.

	Crucially, the flow in \eqref{eq:nchd-flow2} is a \textit{linear} differential equation and hence analytically integrable.
	Without loss of generality, we restrict ourselves to the case $\bfs{F} = \bfs{I}$ (the case for general $\bfs{F}$ follows similarly). The dynamics associated with the flow $H_2(\bfs{p})$ introduced in Lemma \ref{lem:magnetic} become
	\[
	\rms{\frac{d}{dt}} \begin{bmatrix} \bfs{\theta}(t) \\ \bfs{p}(t) \end{bmatrix} = \begin{bmatrix} \bfs{p}(t)  \\  \bfs{G} \bfs{p}(t) \end{bmatrix}
	\]
	with initial condition $(\bfs{\theta}_{0}, \bfs{p}_0)$. Using the power series representation of the matrix exponential, the second differential equation for $\bfs{p}$ may be integrated analytically to yield the following flow in $\bfs{p}$-space:
	\[
	\bfs{p}(t) = \exp(\bfs{G}t) \bfs{p}_{0}
	\]
	Substituting this result into the differential equation for $\theta$ yields
	\[
	\rms{\frac{d\theta}{dt}} = \exp(\bfs{G}t) \bfs{p}_0
	\]
	If $\bfs{G}$ is invertible then once again using the power series representation of the matrix exponential and rearranging yields the solution
	\[
	\bfs{\Phi}_{\epsilon, H_2(\bfs{p})}  \begin{bmatrix} \bfs{\theta} \\ \bfs{p} \end{bmatrix} = \begin{bmatrix} \bfs{\theta} + \bfs{G}^{-1}(\exp(\bfs{G}\epsilon) - \bfs{I})\bfs{p} \\ \exp(\bfs{G}\epsilon)\bfs{p} \end{bmatrix}
	\]
	If $\bfs{G}$ is not invertible, then slightly more care must be taken to first diagonalize $\bfs{G}$ and separate its invertible/singular components. Since $\bfs{G}$ is strictly antisymmetric it can be written as $i \bfs{H}$ where $\bfs{H}$ is a Hermitian matrix. Thus it  can be diagonalized over $\mathbb{C}$ as:

	\[
	\bfs{G} =
	\begin{bmatrix}
	U_{\bfs{\Lambda}} & U_{0}
	\end{bmatrix}
	\begin{bmatrix}
	\bfs{\Lambda} & \bfs{0} \\

	\bfs{0} & \bfs{0}
	\end{bmatrix}
	\begin{bmatrix}
	U_{\bfs{\Lambda}}^\top \\
	U_{0}^\top
	\end{bmatrix}
	\]
	where $\bfs{\Lambda}$ is a diagonal submatrix consisting of the nonzero eigenvalues of $\bfs{G}$. $
	\begin{bmatrix}
	U_{\bfs{\Lambda}} & U_{0}
	\end{bmatrix}
	$ and $
	\begin{bmatrix}
	U_{\bfs{\Lambda}}^\top \\
	U_{0}^\top
	\end{bmatrix}
	$ are unitary matrices where the columns of $U_{\Lambda}$ are the eigenvectors of $\bfs{G}$ corresponding to its nonzero eigenvalues while the columns of $U_{0}$ are the eigenvectors of $\bfs{G}$ corresponding to its zero eigenvalues. Even if $\bfs{G}$ is not invertible we still have:
	\[
	\bfs{p}(t) = \exp(\bfs{G}t) \bfs{p}_{0}
	\]
	However it is more convenient to represent the matrix exponential as:
	\[
	\exp(\bfs{G}t) =
	\begin{bmatrix}
	U_{\bfs{\Lambda}} & U_{0}
	\end{bmatrix}
	\begin{bmatrix}
	\exp(\bfs{\Lambda}t) & \bfs{0} \\
	\bfs{0} & \bfs{I}
	\end{bmatrix}
	\begin{bmatrix}
	U_{\bfs{\Lambda}}^\top \\
	U_{0}^\top
	\end{bmatrix}
	\]
	Substituting this result into the differential equation for $\theta$, this representation of $\exp(\bfs{G}t)$ implies the non-identity block can be handled as in the invertible case while the $\bfs{I}$ block follows trivially to give:
	\[
	\theta(t) = \theta_0 +
	\begin{bmatrix}
	U_{\bfs{\Lambda}} & U_{0}
	\end{bmatrix}
	\begin{bmatrix}
	\bfs{\Lambda}^{-1}(\exp(\bfs{\Lambda}t) - \bfs{I}) & \bfs{0} \\
	\bfs{0} & t\bfs{I}
	\end{bmatrix}
	\begin{bmatrix}
	U_{\bfs{\Lambda}}^\top \\
	U_{0}^\top
	\end{bmatrix}
	\bfs{p}_0
	\]
	Note that if $\bfs{G}=\bfs{0}$ then the flow map will simply reduce to an Euler translation as in ordinary HMC. We can also combine the ideas of Section \ref{sec:masspre} to obtain a preconditioned, magnetic HMC algorithm corresponding to a general $\bfs{A}$-matrix of the form
	\[
	\begin{pmatrix} \bfs{0} & \bfs{F} \\ -\bfs{F}^\top & \bfs{G} \end{pmatrix}
	\]

	Dealing with a non-zero $\bfs{E}$ becomes more subtle, since the corresponding sub-Hamiltonian is no longer exactly integrable under the splitting construction. In order to exactly integrate this sub-block a more costly implicit integrator is needed.

	\subsection{Integration Error of Magnetic HMC} \label{sec:interrormag}

	Since we are using a symmetric, leapfrog splitting scheme for magnetic HMC that \textit{exactly} integrates each sub-Hamiltonian we obtain identical error scaling to the leapfrog integrator applied to canonical HMC. Indeed, symplectic integrators are well-known to have many nice error properties in general and so perhaps this result is not so surprising \cite{Hairer2006}.
	\begin{lemma}\label{lem:error}
		The symplectic leapfrog-like integrator for magnetic HMC will have the same local ($\sim \mathcal{O}(\epsilon^3)$) and global ($\sim \mathcal{O}(\epsilon^2)$) error scaling (over $\tau \sim \frac{L}{\epsilon}$ steps), as the canonical leapfrog integrator of standard HMC if the Hamiltonian is separable.
	\end{lemma}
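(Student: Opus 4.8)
The plan is to observe that the error analysis of a symmetric (Strang) splitting integrator is completely insensitive to replacing the canonical structure matrix by an arbitrary constant invertible antisymmetric $\bfs{A}$: the only ingredients it uses are (i) each sub-flow is integrated \emph{exactly}, (ii) the underlying vector fields are smooth, and (iii) the composition is symmetric. None of these depend on the specific form of $\bfs{A}$, so the classical leapfrog error estimates of \cite{Hairer2006} transfer essentially verbatim; the role of the separability hypothesis is precisely to guarantee (i) and (ii).

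Concretely, I would first set up the splitting. Separability of $H$ means $H(\bfs{\theta},\bfs{p}) = U(\bfs{\theta}) + \tfrac12\bfs{p}^\top\bfs{p}$, so the symmetric splitting \eqref{eq:split} gives sub-Hamiltonians $H_1(\bfs{\theta}) = U(\bfs{\theta})/2$ and $H_2(\bfs{p}) = \bfs{p}^\top\bfs{p}/2$ with non-canonical vector fields $X_1 = \bfs{A}\nabla H_1$, $X_2 = \bfs{A}\nabla H_2$. By Section \ref{sec:sympintmag}, taking $\bfs{E}=\bfs{0}$, the flow of $X_1$ is an affine-in-$\epsilon$ translation of $\bfs{p}$, and the flow of $X_2$ is the explicit matrix-exponential map \eqref{eq:matrixexp}; both are exact and analytic in $\epsilon$ and in the state. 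Writing $Y_1 = 2X_1$ and $Y_2 = X_2$ so that $\bfs{A}\nabla H = Y_1 + Y_2$, one magnetic leapfrog step \eqref{eq:nchmc_leapfrog1} is precisely the Strang composition $\exp(\tfrac{\epsilon}{2}Y_1)\circ\exp(\epsilon Y_2)\circ\exp(\tfrac{\epsilon}{2}Y_1)$, where $\exp(tY_i)$ denotes the exact time-$t$ flow of $Y_i$.

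Next I would bound the local error. Since $U$ is smooth, $X_1$ is a smooth vector field, and $X_2$ is linear, hence smooth; a Baker--Campbell--Hausdorff / Taylor expansion of the composed map, exactly as for canonical leapfrog, gives
\begin{align}
\exp(\tfrac{\epsilon}{2}Y_1)\circ\exp(\epsilon Y_2)\circ\exp(\tfrac{\epsilon}{2}Y_1) = \exp\!\big(\epsilon(Y_1+Y_2) + \epsilon^3 C + \mathcal{O}(\epsilon^5)\big), \nonumber
\end{align}
with $C$ a fixed linear combination of double Lie brackets of $Y_1,Y_2$, and all even-order correction terms absent because the composition is symmetric. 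Comparing with the exact flow $\bfs{\Phi}^{\bfs{A}}_{\epsilon,H} = \exp(\epsilon(Y_1+Y_2))$ then yields a one-step error of order $\epsilon^3$, uniformly on compact sets -- identical to standard leapfrog, and with $\bfs{A}$ entering nowhere in the argument. Finally, for the global error: with one-step error $\mathcal{O}(\epsilon^3)$ and the exact and numerical maps locally Lipschitz, the standard accumulation estimate for one-step methods (``Lady Windermere's fan'', \cite{Hairer2006}) gives, over $L=\tau/\epsilon$ steps, a global error of order $L\cdot\mathcal{O}(\epsilon^3)\cdot e^{\mathcal{O}(\tau)} = \mathcal{O}(\epsilon^2)$.

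I expect the only real obstacle to be bookkeeping rather than conceptual: one must check that the magnetic sub-flows admit the clean $\epsilon$-Taylor expansions (agreeing with $\exp(tY_i)$ to all orders) needed to run the symmetric-splitting BCH argument, and that the constants in the local and global estimates are locally uniform. Both are immediate from the closed forms of the sub-flows -- an affine translation and a matrix exponential -- and it is exactly the separability hypothesis that forces $H_1$ to depend only on $\bfs{\theta}$ and $H_2$ only on $\bfs{p}$, keeping both sub-flows exactly integrable and smooth; without this the Lie-bracket expansion is unavailable and the proof breaks.
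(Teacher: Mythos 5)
Your proposal is correct and follows essentially the same route as the paper's proof: a symmetric Strang splitting of the separable Hamiltonian into exactly integrated sub-flows, a Baker--Campbell--Hausdorff expansion in which the symmetry cancels the $\mathcal{O}(\epsilon^2)$ commutator terms to give local error $\mathcal{O}(\epsilon^3)$, and accumulation over $L=\tau/\epsilon$ steps for the global $\mathcal{O}(\epsilon^2)$ bound. Your treatment of the global step (invoking the standard Lipschitz accumulation estimate) is in fact slightly more careful than the paper's direct multiplication, but the argument is the same.
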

	\begin{proof}
		Note that for the parametrization of $\bfs{A}$ corresponding to magnetic HMC the Hamiltonian vector field $\vec{\bfs{H}} = \nabla_{\bfs{p}} H \nabla_{\theta} + (-\nabla_{\theta} + \bfs{G} \nabla_{\bfs{p}} H) \nabla_{\bfs{p}} \equiv \vec{\bfs{A}} + \vec{\bfs{B}}$ will generate the \textit{exact} flow corresponding to exactly simulating the dynamics. We obtain an $\mathcal{O}(\epsilon^3)$ local error by simply exploiting the separability of the Hamiltonian. The leapfrog integration scheme splits the Hamiltonian as: $H(\theta, \bfs{p}) = H_{1}(\theta) + H_{2}(\bfs{p}) + H_{1}(\theta)$ and \textit{exactly} integrates each sub-Hamiltonian so:
		\begin{align}
		& \bfs{\Phi}^{\mathrm{frog}}_{\epsilon, H} = \bfs{\Phi}_{\epsilon, H_1(\rms{\theta})}  \circ \bfs{\Phi}_{\epsilon, H_2(\bfs{p})}  \circ \bfs{\Phi}_{\epsilon, H_1(\rms{\theta})} = \exp \left( \frac{\epsilon}{2} \vec{\bfs{B}} \right) \circ  \exp \left( \epsilon \vec{\bfs{A}} \right) \circ  \exp \left( \frac{\epsilon}{2} \vec{\bfs{B}} \right) \\
		\end{align}
		Via repeated applications of the Baker-Campbell-Hausdorff formula \cite{Hairer2006} obtain:
		\begin{align}
		& \exp \left( \frac{\epsilon}{2} \vec{\bfs{B}} \right) \circ \exp \left( \epsilon \vec{\bfs{A}} + \frac{\epsilon}{2} \vec{\bfs{B}} + \frac{\epsilon^2}{2}[\vec{\bfs{A}}, \vec{\bfs{B}}] \right) + \mathcal{O}(\epsilon^3) = \\
		& \exp \left( \frac{\epsilon}{2} \vec{\bfs{B}} + \epsilon \vec{\bfs{A}} + \frac{\epsilon}{2} \vec{\bfs{B}} + \frac{\epsilon^2}{2} [\vec{\bfs{A}}, \vec{\bfs{B}}] + \frac{1}{2} [\frac{\epsilon}{2} \vec{\bfs{B}}, \epsilon \vec{\bfs{A}} + \frac{\epsilon}{2} \vec{\bfs{B}} + \frac{\epsilon^2}{2} [\vec{\bfs{A}}, \vec{\bfs{B}}]] \right) + \mathcal{O}(\epsilon^3) = \\
		& \exp \left( \epsilon \vec{\bfs{H}} + \frac{\epsilon^2}{4}[\vec{\bfs{A}}, \vec{\bfs{B}}] + \frac{\epsilon^2}{4}[\vec{\bfs{B}}, \vec{\bfs{A}}] + \frac{\epsilon^2}{8}[\vec{\bfs{B}}, \vec{\bfs{B}}] \right) + \mathcal{O}(\epsilon^3) = \exp \left( \epsilon \vec{\bfs{H}} \right )  + \mathcal{O}(\epsilon^3)
		\end{align}
		where we have used the antisymmetry of the commutator. The global error scaling, for an integration time of $\tau = \frac{L}{\epsilon}$  follows straightforwardly:
		\begin{align}
		& \bfs{\Phi}^{\mathrm{frog}}_{\tau, H} = \left( \exp \left(\frac{\epsilon}{2} \vec{\bfs{B}} \right) \circ  \exp \left( \epsilon \vec{\bfs{A}} \right) \circ  \exp \left( \frac{\epsilon}{2} \vec{\bfs{B}} \right) \right)^{L} \\
		& = \left( \exp \left(\epsilon \vec{\bfs{H}} \right) + \mathcal{O}(\epsilon^3) \right)^L \\
		& = \exp \left(\epsilon L \vec{\bfs{H}} \right) + L\epsilon \mathcal{O}(\epsilon^2) \\
		& = \exp \left( \tau \vec{\bfs{H}} \right) + \tau \mathcal{O}(\epsilon^2) \\
		& = \exp \left(\tau \vec{\bfs{H}} \right) + \mathcal{O}(\epsilon^2)
		\end{align}
		as desired.
	\end{proof}

	\section{Section 6 Experimental Details}
	Here we provide relevant experimental details for some of the Experiments presented in the main text.

	\subsection{Gaussians}

	In both experiments the reported autocorrelation measures are averaged over all coordinates as well as over 100 independent runs of the HMC/MHMC chains.

	\subsubsection{2D Gaussian}
	For the uncorrelated, ill-conditioned 2D Gaussian experiment presented in the main text the magnetic $\bfs{G}$ component only has one non-zero parameter which was set to $g=.2$.

	\subsubsection{10D Gaussian}
	For the uncorrelated, ill-conditioned 10D Gaussian experiment presented in the main text, the $\bfs{G}$ matrix was set to encourage the flow of momentum between the directions of large marginal variance with covariance eigenvalues $10^6$ and the remaining 8 directions of directions of small marginal variance with covariance eigenvalues of $1$. We denote the directions of large marginal variance as $x_1$, $x_2$, and the other 8 directions of directions of small marginal variance as $x_i$. $\bfs{G}$ was set such that $\bfs{G}_{1i} = \bfs{G}_{2i} = g$, $\bfs{G}_{i1} = \bfs{G}_{i2} = -g$ and $\bfs{G}_{12} = \bfs{G}_{21} = 0$ for $g=.2$.

	\subsection{Mixture of Gaussians}
	The superior mixing of MHMC relative to HMC in this example holds true for a wide range of $(\epsilon, L)$ settings as we can see by looking at the maximum mean discrepancy as a function of the number of samples in both Figures \ref{fig:mofg1} and \ref{fig:mofg2}.
	\begin{figure*}[!ht]
		\centering
		\begin{minipage}[t]{.3\textwidth}
			\centering
			\includegraphics[width=1\linewidth]{img/Mof2G/MMD/{MMDforMof2GBinary_B_eps_1.5_L_33-eps-converted-to}.pdf}
		\end{minipage}%
		~
		\begin{minipage}[t]{.3\textwidth}
			\centering
			\includegraphics[width=1\linewidth]{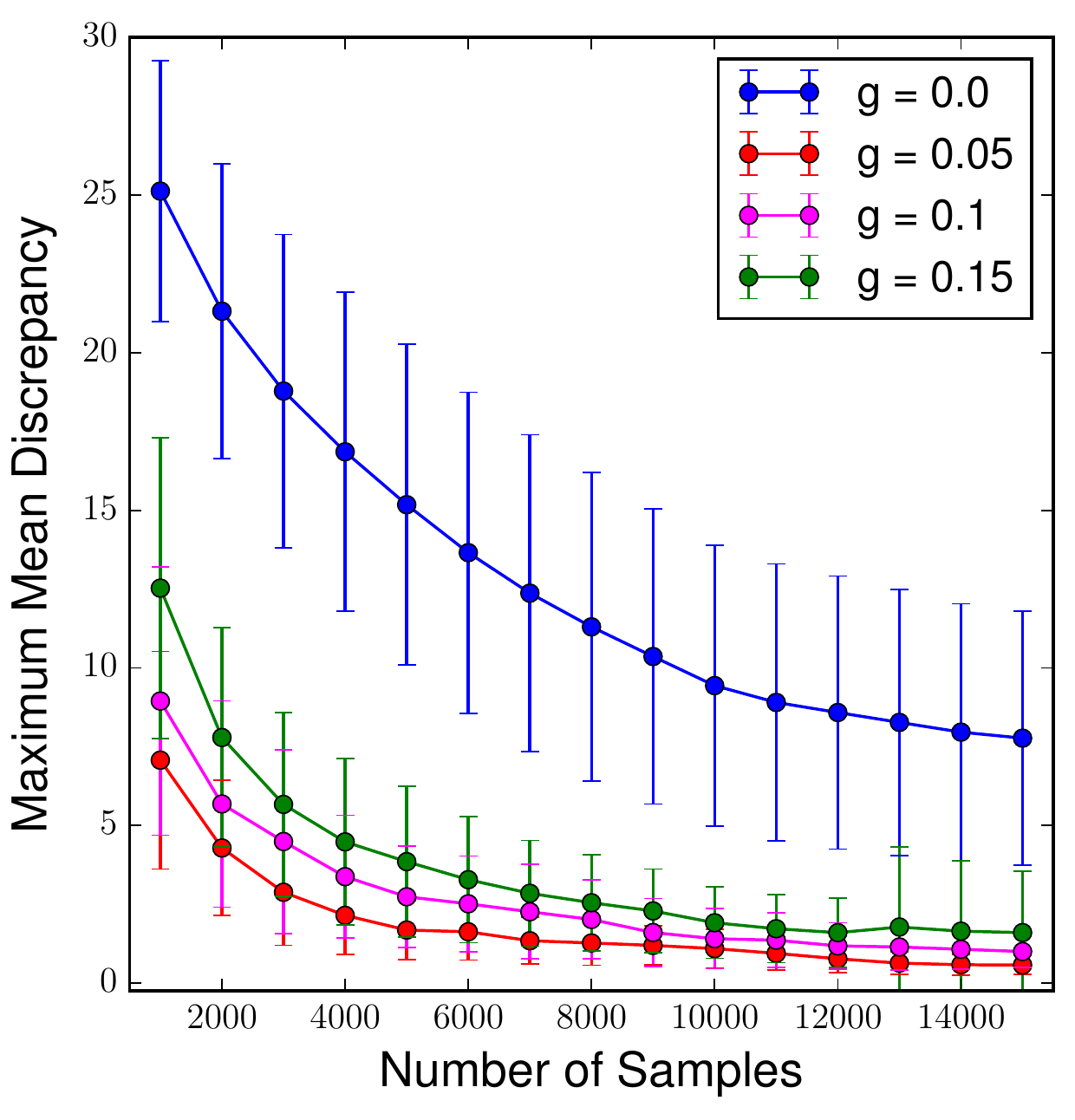}
		\end{minipage}%
		\caption{Left: MMD vs. Number of Samples for $(\epsilon =1.5, L=33)$. The acceptance rate was $\sim .74$ for all $g$. Right: MMD vs. Number of Samples for $(\epsilon =1.9, L=40)$. The acceptance rate was $\sim .43$ for all $g=0$ and $\sim .34$ for all non-zero $g$. In both diagrams $g$ denotes the non-zero component of the magnetic field.}
		\label{fig:mofg1}
	\end{figure*}
	\begin{figure*}[!ht]
		\centering
		\begin{minipage}[!ht]{.3\textwidth}
			\centering
			\includegraphics[width=1\linewidth]{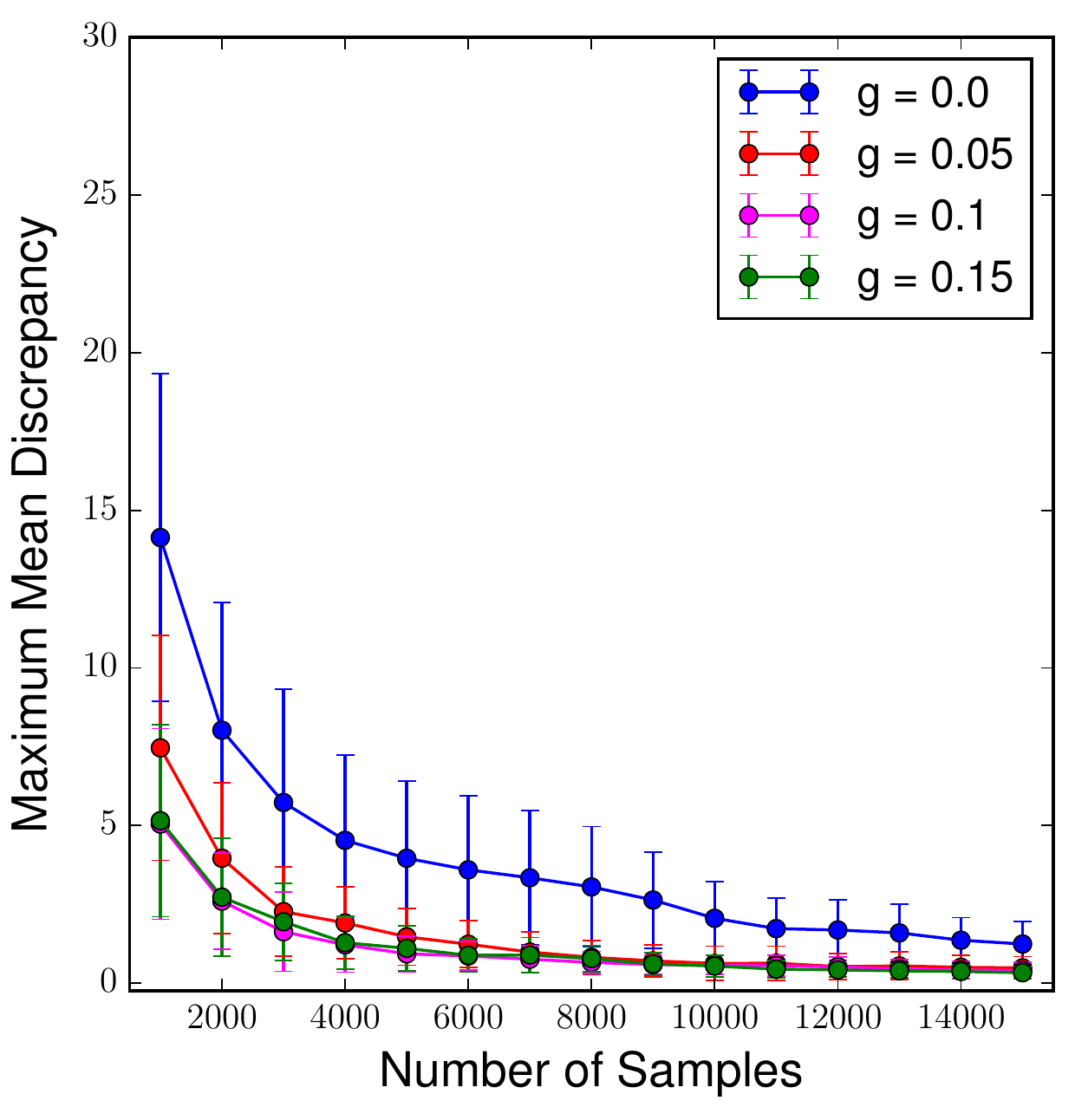}
		\end{minipage}%
		~
		\begin{minipage}[!ht]{.3\textwidth}
			\centering
			\includegraphics[width=1\linewidth]{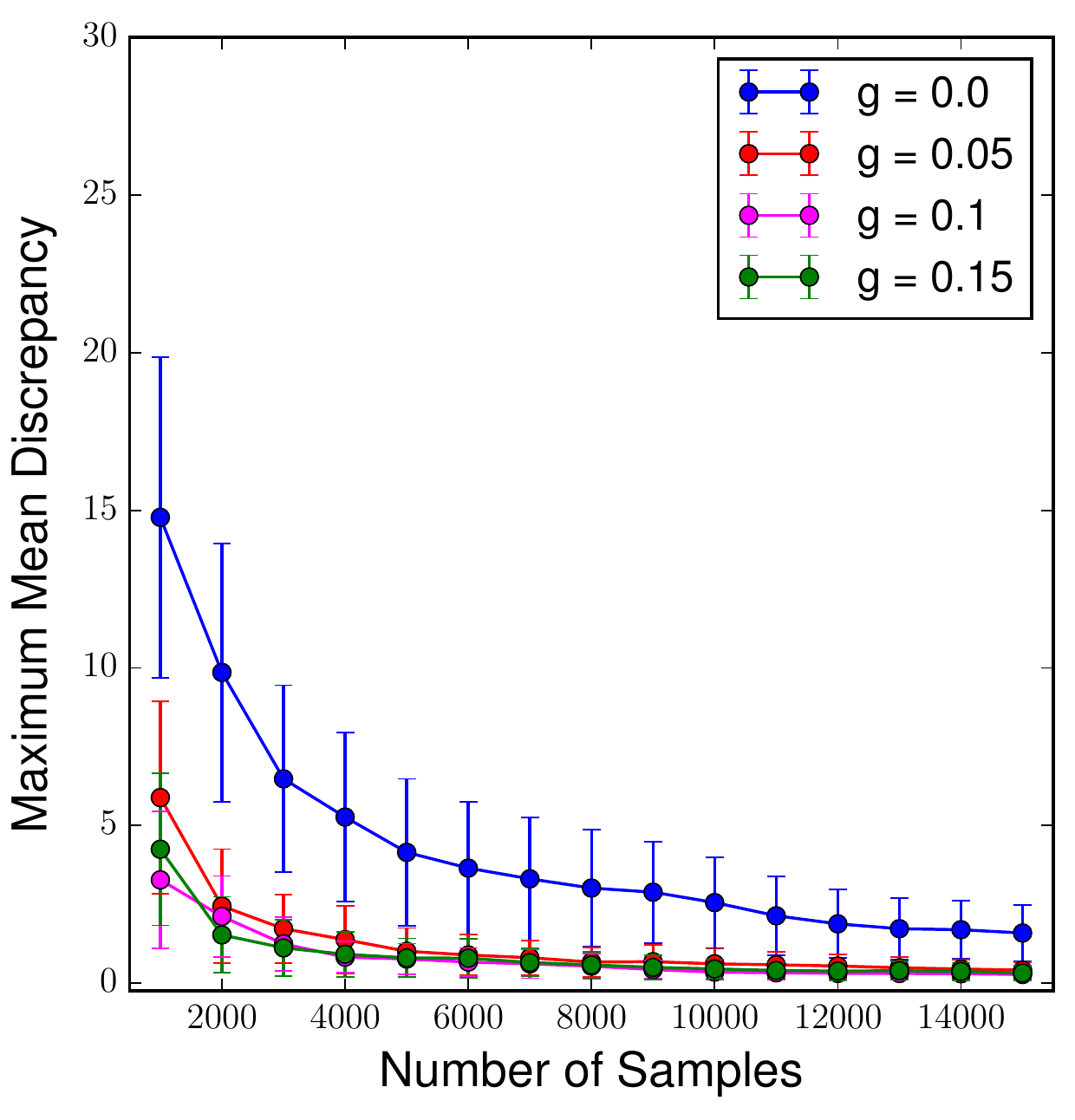}
		\end{minipage}%
		\caption{Left: MMD vs. Number of Samples for $(\epsilon=1.0, L=50)$. The acceptance rate was $\sim .87$ for all $g$. Right: MMD vs. Number of Samples for $(\epsilon =0.5, L=110)$. The acceptance rate was $\sim .95$ for all $g$. In both diagrams $g$ denotes the non-zero component of the magnetic field.}
		\label{fig:mofg2}
	\end{figure*}

	We found that tuning the parameters $(\epsilon, L)$ via Bayesian optimization often resulted in worse performance for ordinary HMC since the values found for $(\epsilon, L)$ were too conservative to encourage exploration between both modes. Moreover, more aggressive choices for $(\epsilon, L)$ for ordinary HMC led to a sharp drop in acceptance rate and significantly worse performance.

	\subsection{Gaussian Funnel}
	In this additional experiment, we consider the Gaussian funnel of \cite{Neal2003} with density
	$$p(\bfs{x}, v) = \Pi_{i=1}^{n} \mathcal{N}(x_{i} | 0, e^{-v}) \mathcal{N}(v | 0, 3^2)$$
	in 10+1 dimensions (i.e. $n=10$). This density illustrates the pathological correlation present in many hierarchical models between $\bfs{x}$, a vector of low-level parameters, and $v$, a hyperparameter controlling their variability. As noted in \cite{betancourt2015hamiltonian, Zhang2014}, Riemannian HMC methods, which incorporate local curvature information of the target, are well-suited to this problem as they help the dynamics traverse the energy surface which rapidly changes as a $v$ varies. HMC (as well as MHMC) do not exploit curvature information and will have more difficulty exploring the $v$ direction due to the rapid variation in density -- see \cite{betancourt2015hamiltonian} for a detailed discussion.
	Despite this difficulty, we might intuitively expect that introducing a ``curl'' term into the entries of $\bottomrightmatrix$ which couple each $x_i$ and $v$ could increase exploration of the dynamics since these variables are nonlinearly correlated.
	In order to encourage the periodic flow of momentum between the marginal direction $v$ and the coordinates $x_i$ the $\bfs{G}$ matrix was set such that $\bfs{G}_{vi} = g$, $\bfs{G}_{iv} = -g$, $\bfs{G}_{ij} = 0$ with $g=.2$.
	To investigate this, we generated 10000 samples from both HMC and MHMC, discarding 1000 burn-in samples and computed the minimum effective sample size across $\bfs{x}$ and $v$ and bias in the moments of the $v$ parameter similar to the set-up in \cite{Zhang2014} for various $\epsilon, L$ (see Table \ref{fig:gaussianFunnel}).
	We report results averaged over 100 different runs of the Markov chains.
	\begin{table*}[!htb]
		\centering
		\vspace{-.3cm}
		\caption{Comparison of HMC and MHMC targeting the Gaussian funnel for a variety of leapfrog steps}
		\resizebox{.8\textwidth}{!}{%
			\begin{tabular}{cccccc} \\\toprule
				algorithm & settings & time (s) & min ESS($\bfs{x}$, $v$)  & min ESS($\bfs{x}$, $v$)/s & MSE($\mathbb{E}[v]$, $\mathbb{E}[v^2]$) \\ \midrule
				HMC & $\epsilon = 0.05, L=100$ & 225  & 414, 85 & 1.84, 0.38 & .59, 1.57 \\  \midrule
				MHMC & $\epsilon = 0.05, L=100$ & 270 & 463, 97 & 1.71, 0.36 & .29, 1.17 \\  \midrule
				HMC & $\epsilon = 0.05, L=300$ & 705  & 1342, 118  & 1.90, 0.17 & .35, 1.15 \\  \midrule
				MHMC & $\epsilon = 0.05, L=300$ & 837 & 1554, 122 & 1.86,  0.15 & .15, 1.05 \\  \midrule
			\end{tabular} %
		}
		\label{fig:gaussianFunnel}
	\end{table*}

	We find that adding the magnetic field component decreases the bias in the moments and marginally increases the ESS, although both samplers struggle to explore the full target density, as the relatively low ESS figures indicate. Further details and experiments are provided in the Appendix.

	Recall the density of the Gaussian funnel $p(\bfs{x}, v) = \Pi_{i=1}^{n} \mathcal{N}(x_{i} | 0, e^{-v}) \mathcal{N}(v | 0, 3^2)$. In order to encourage the periodic flow of momentum between the marginal direction $v$ and the coordinates $x_i$ the $\bfs{G}$ matrix was set such that $\bfs{G}_{vi} = g$, $\bfs{G}_{iv} = -g$, $\bfs{G}_{ij} = 0$ with $g=.2$. Moreover the reported results were averaged over 100 different runs of the Markov chains.

	\section{MHMC Proposals and Dynamics}
	In this section, we provide illustrations of the proposal distributions of MHMC in simple low-dimensional settings, to aid intuition and demonstrate the divergence of its behaviour from standard HMC.

	\subsection{Gaussian Densities}
	We first consider the case of an isotropic Gaussian target, and illustrate the proposal distribution of standard HMC, as well as MHMC with a variety of settings for the skew-symmetric matrix $\mathbf{A} = \begin{pmatrix} \topleftmatrix & \offdiagmatrix \\ \offdiagmatrix & \bottomrightmatrix \end{pmatrix}$ - see Figure \ref{fig:gaussian-props}. As in previous sections, we denote the off-diagonal element of the $\mathbf{G}$ matrix by $g$. We also provide proposal plots for an anisotropic Gaussian target distribution - see Figure \ref{fig:aniso-gaussian-props}.
	\begin{figure}[!ht]
		\centering
		\subfigure[Standard HMC (g=0)]{
		\includegraphics[keepaspectratio, width=0.25\textwidth]{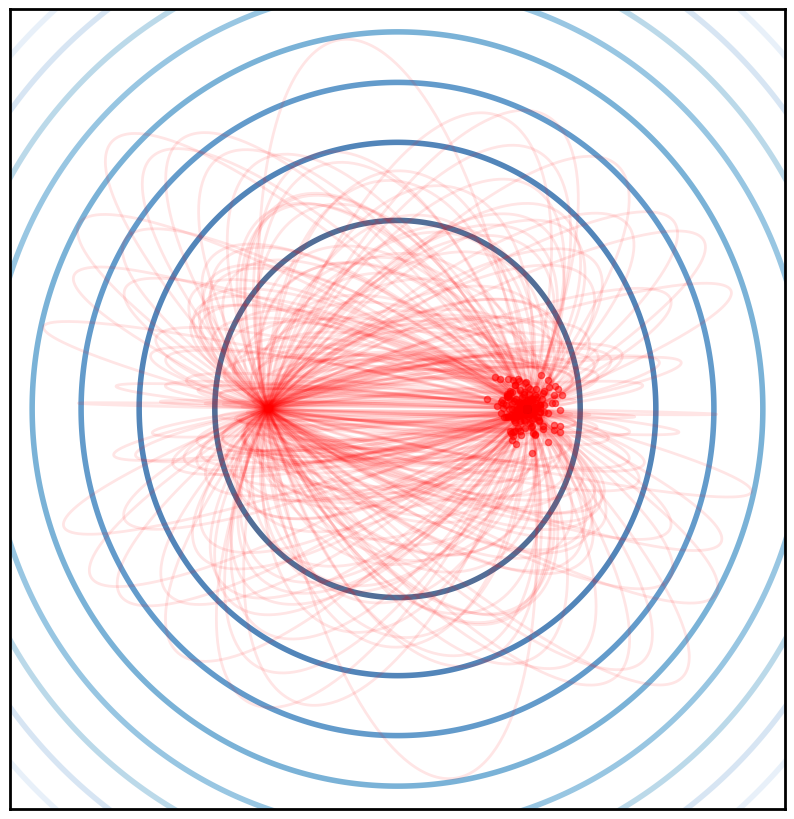}}%
		\subfigure[MHMC (g=0.5)]{
		\includegraphics[keepaspectratio, width=0.25\textwidth]{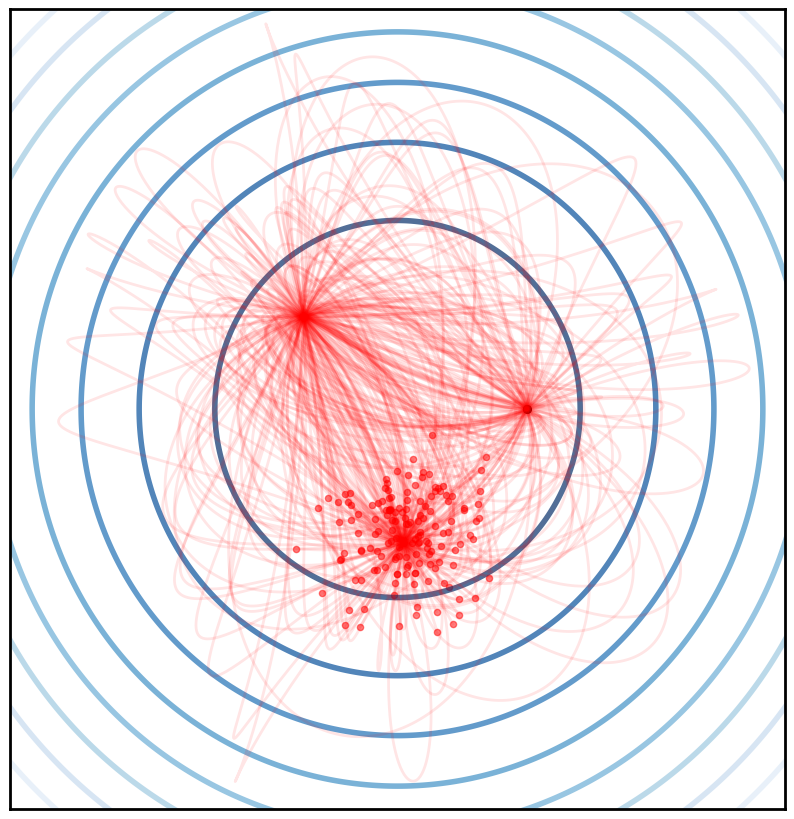}}%
		\subfigure[MHMC (g=1.0)]{
		\includegraphics[keepaspectratio, width=0.25\textwidth]{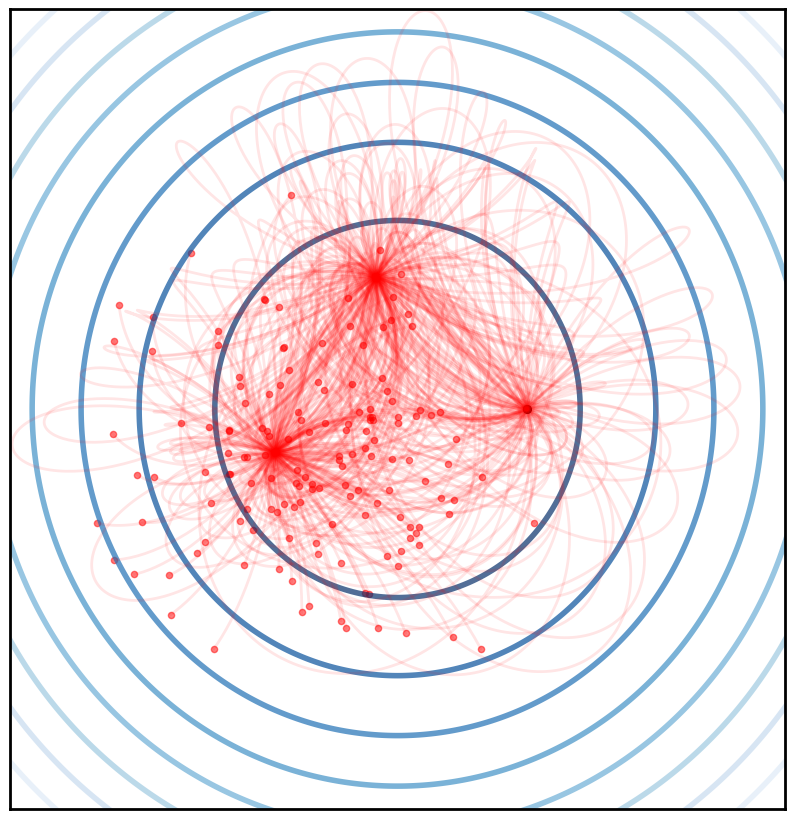}}%

		\subfigure[MHMC (g=2.0)]{
		\includegraphics[keepaspectratio, width=0.25\textwidth]{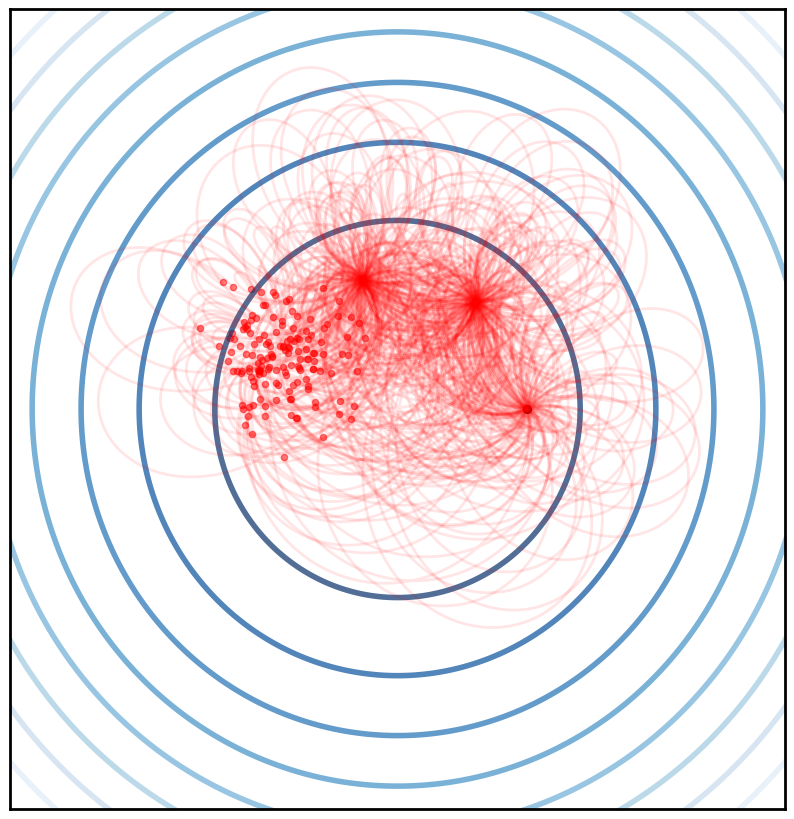}}%
		\subfigure[MHMC (g=3.0)]{
		\includegraphics[keepaspectratio, width=0.25\textwidth]{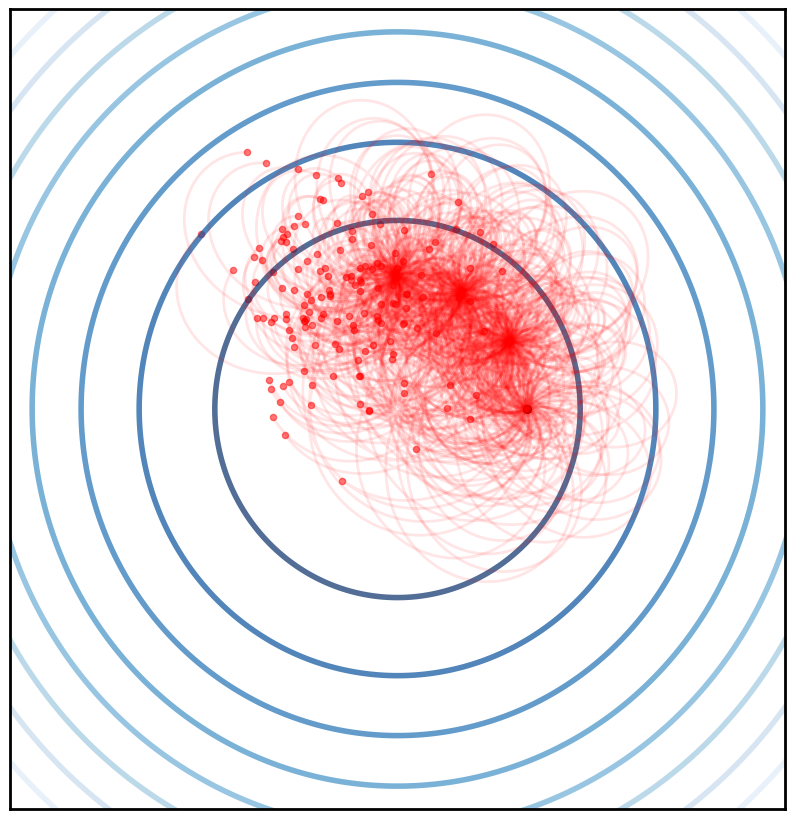}}%
		\subfigure[MHMC (g=4.0)]{
		\includegraphics[keepaspectratio, width=0.25\textwidth]{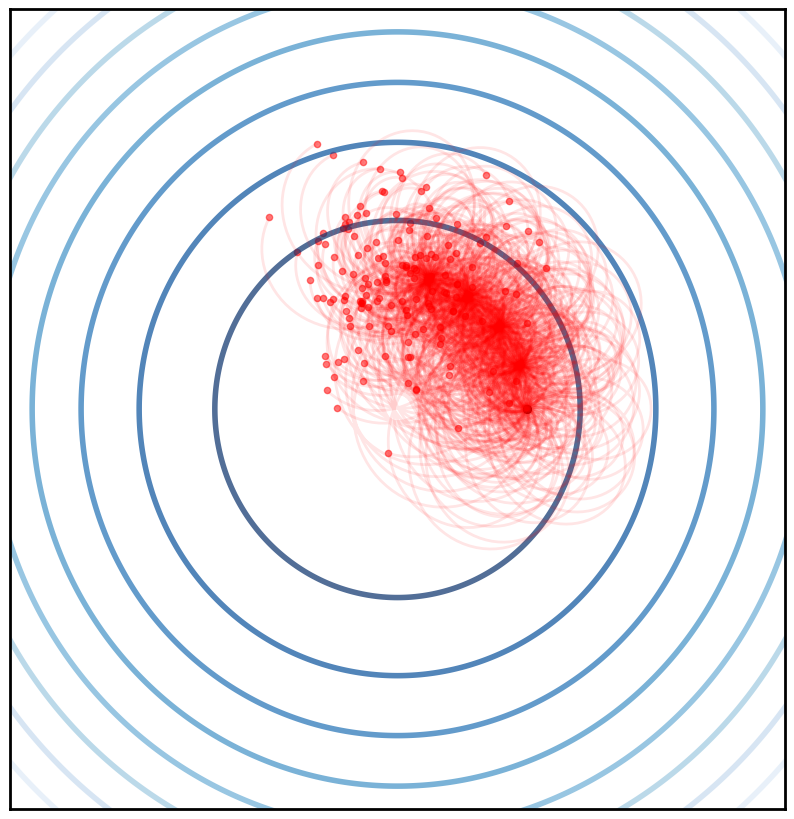}}%
		\caption{HMC and MHMC proposals for an isotropic Gaussian target.}
		\label{fig:gaussian-props}
	\end{figure}
	\begin{figure}[!ht]
		\centering
		\subfigure[Standard HMC (g=0)]{
			\includegraphics[keepaspectratio, width=0.25\textwidth]{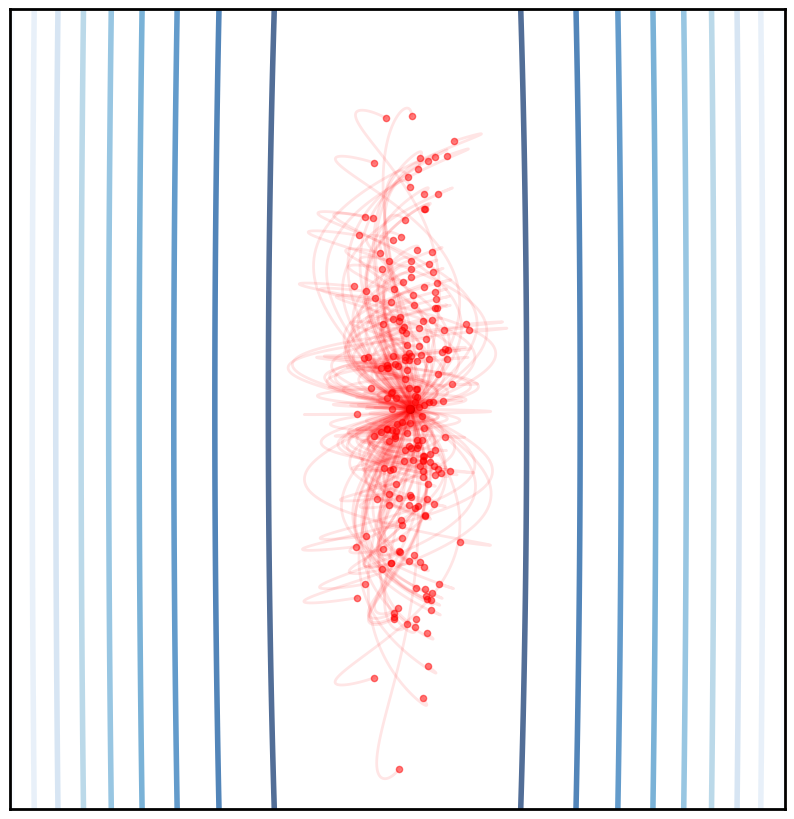}}%
		\subfigure[MHMC (g=0.5)]{
			\includegraphics[keepaspectratio, width=0.25\textwidth]{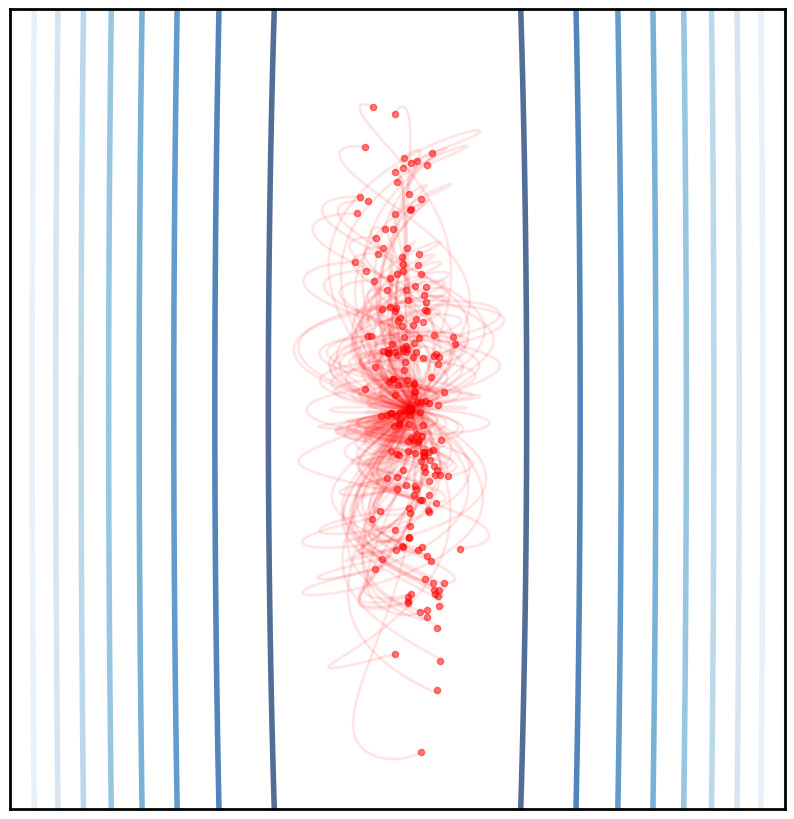}}%
		\subfigure[MHMC (g=1.0)]{
			\includegraphics[keepaspectratio, width=0.25\textwidth]{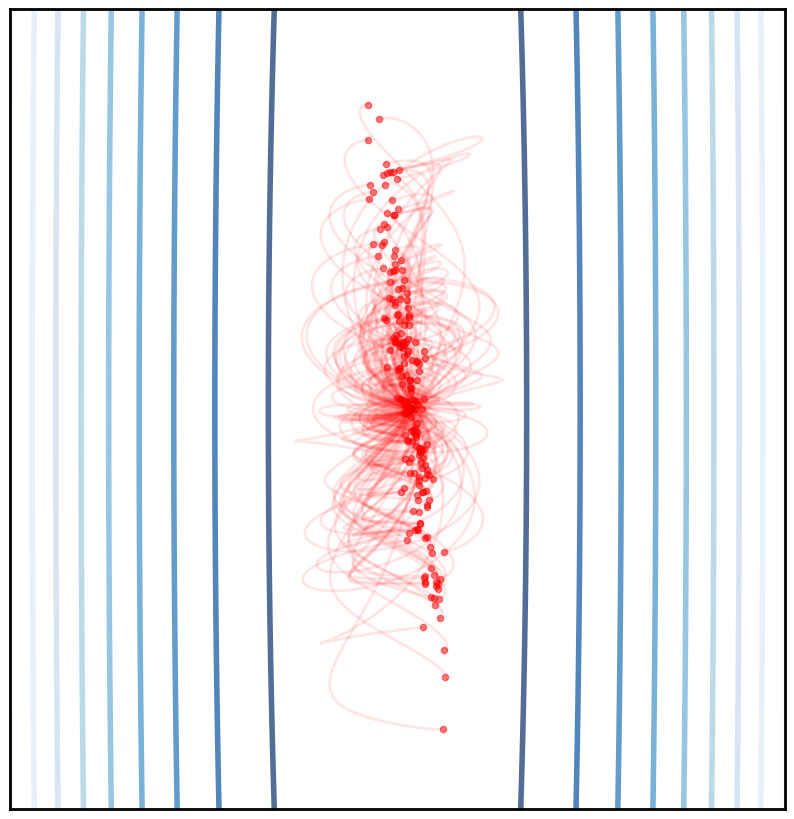}}%

		\subfigure[MHMC (g=2.0)]{
			\includegraphics[keepaspectratio, width=0.25\textwidth]{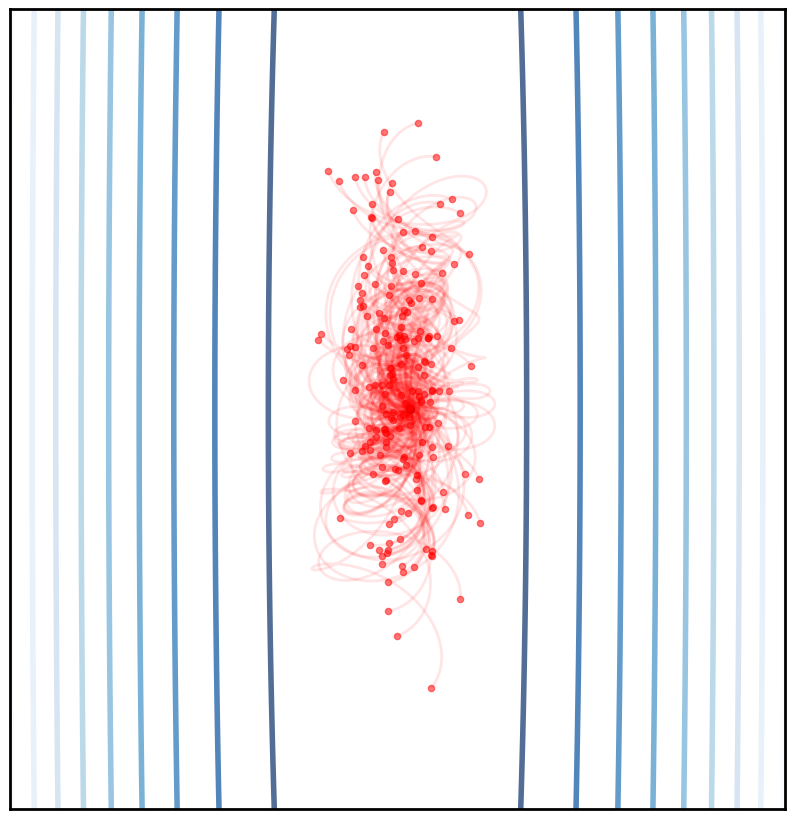}}%
		\subfigure[MHMC (g=3.0)]{
			\includegraphics[keepaspectratio, width=0.25\textwidth]{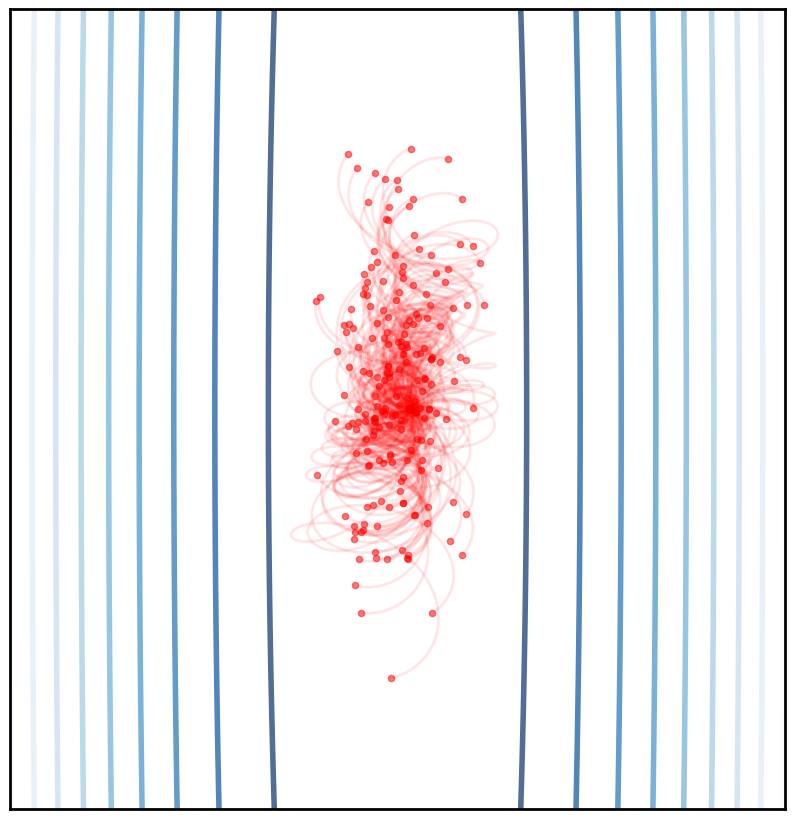}}%
		\subfigure[MHMC (g=4.0)]{
			\includegraphics[keepaspectratio, width=0.25\textwidth]{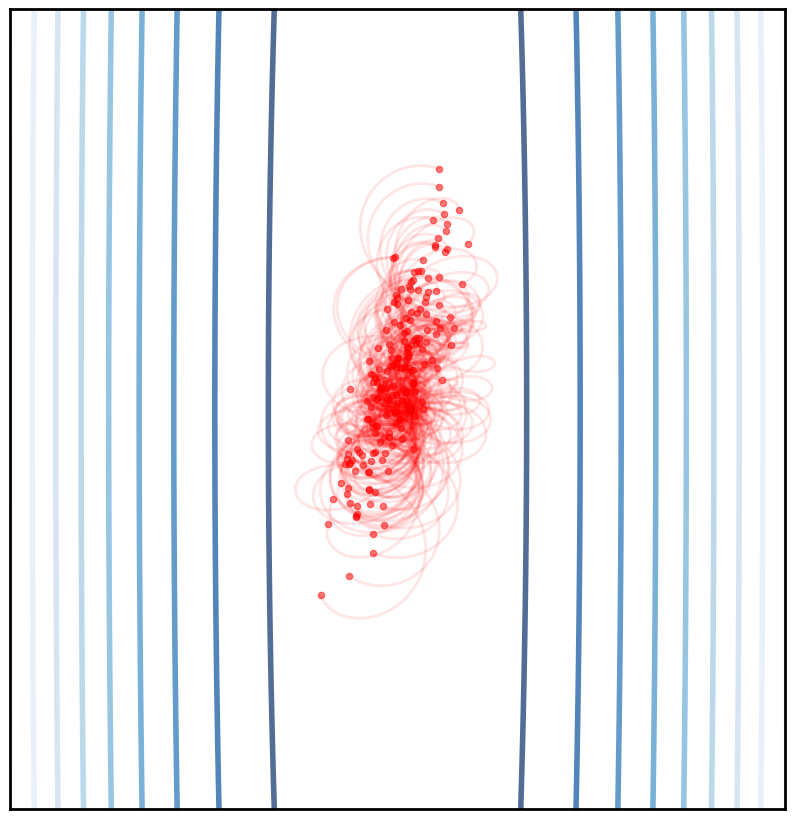}}%
		\caption{HMC and MHMC proposals for an anisotropic Gaussian target.}
		\label{fig:aniso-gaussian-props}
	\end{figure}

	\subsection{Banana Density}
	We also provide proposal illustrations for the banana density of \cite{Haario1999}, as shown in Figure \ref{fig:banana-props}.

	\begin{figure}[!ht]
		\centering
		\subfigure[Standard HMC (g=0)]{
			\includegraphics[keepaspectratio, width=0.31\textwidth]{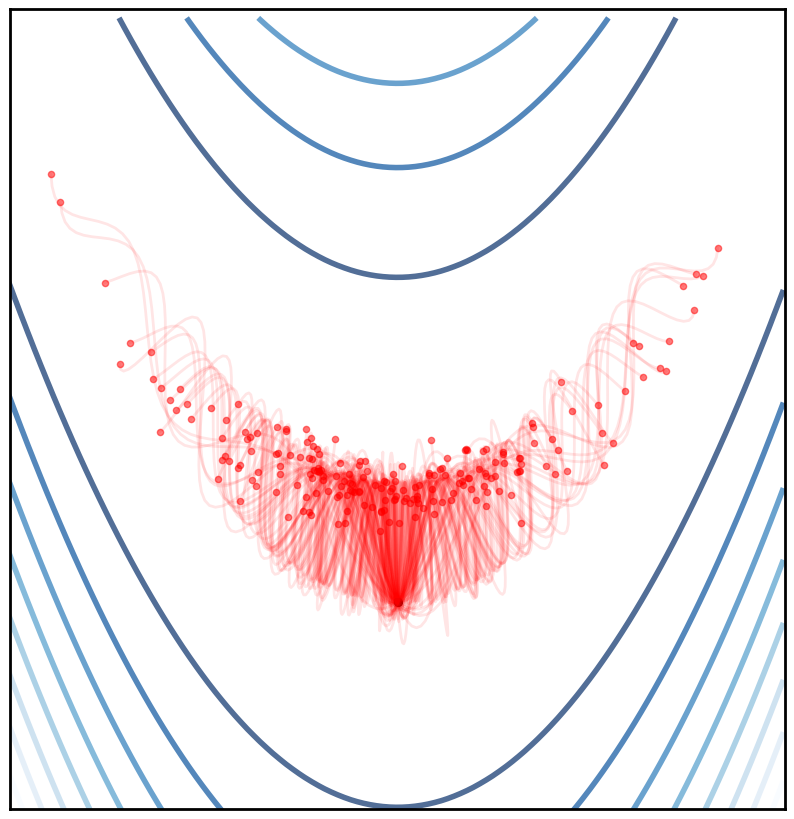}}%
		\subfigure[MHMC (g=0.1)]{
			\includegraphics[keepaspectratio, width=0.31\textwidth]{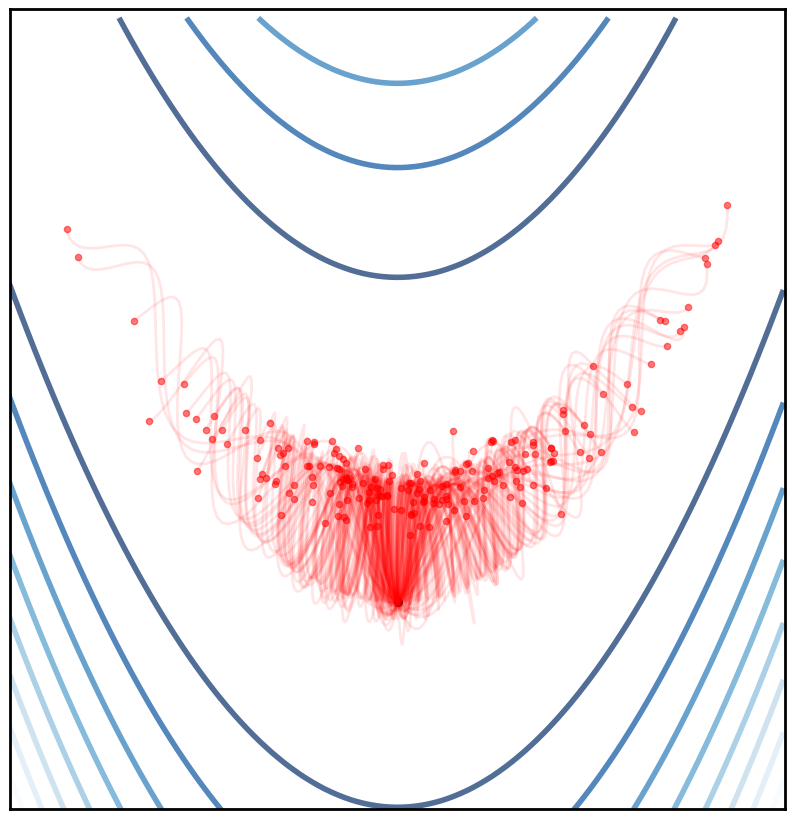}}%
		\subfigure[MHMC (g=0.2)]{
			\includegraphics[keepaspectratio, width=0.31\textwidth]{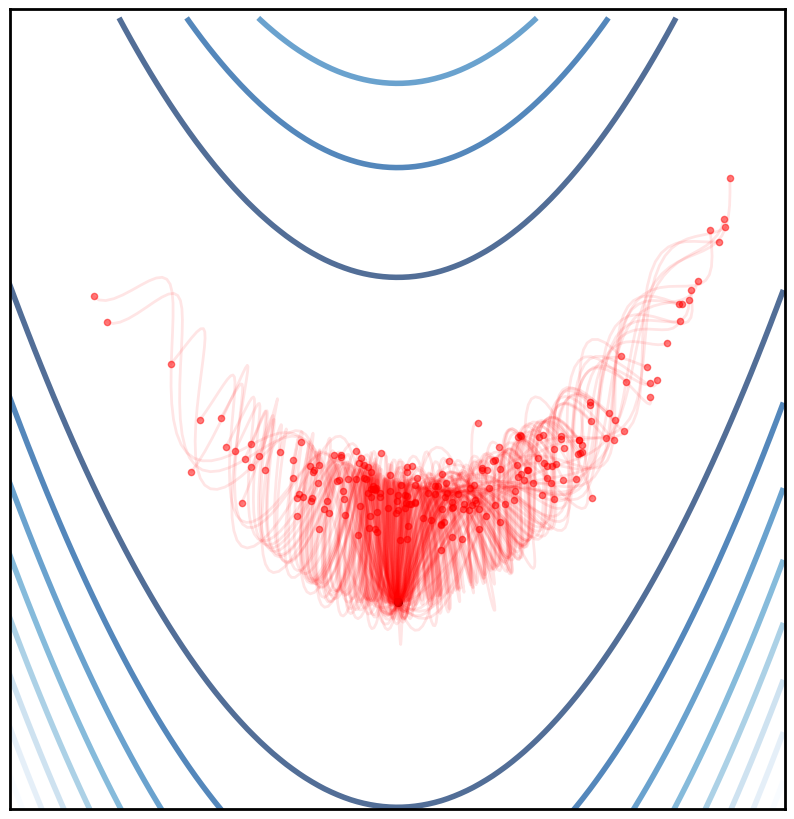}}%

		\subfigure[MHMC (g=0.3)]{
			\includegraphics[keepaspectratio, width=0.31\textwidth]{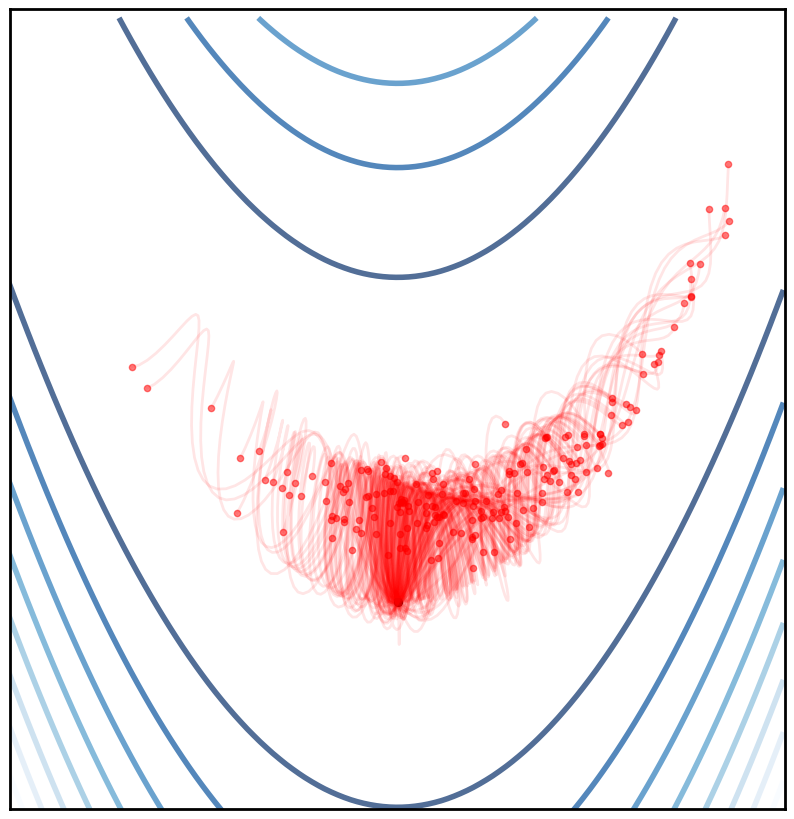}}%
		\subfigure[MHMC (g=0.4)]{
			\includegraphics[keepaspectratio, width=0.31\textwidth]{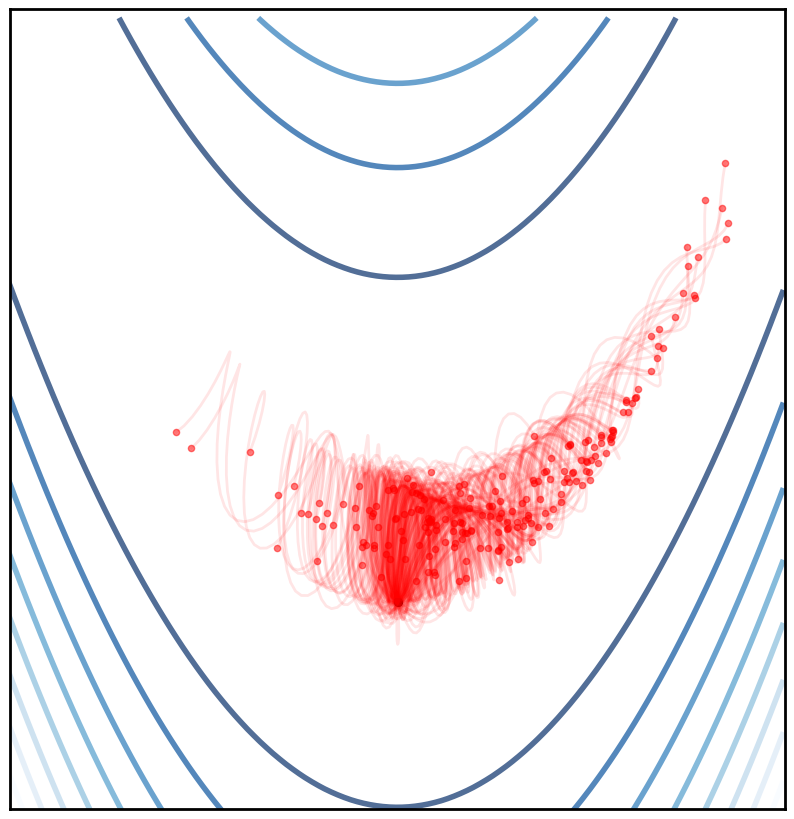}}%
		\subfigure[MHMC (g=0.5)]{
			\includegraphics[keepaspectratio, width=0.31\textwidth]{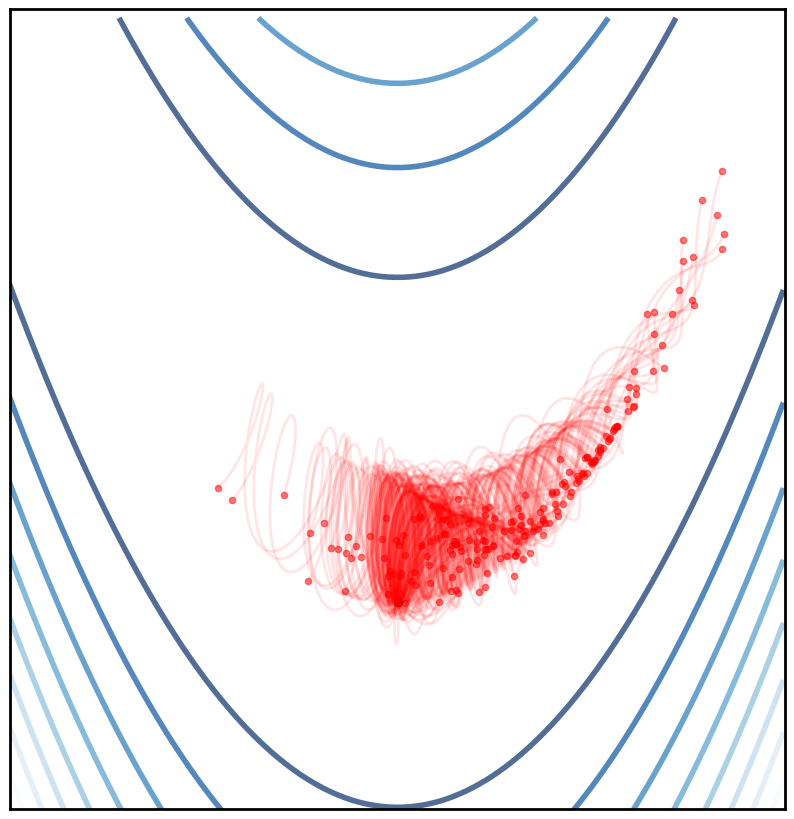}}%
		\caption{HMC and MHMC proposals for the banana density target.}
		\label{fig:banana-props}
	\end{figure}

	\clearpage
\end{document}